\documentclass[preprint,3p,times]{elsarticle}

\usepackage{hyperref,txfonts}
\usepackage{amssymb}
\usepackage{stmaryrd,enumitem,graphicx} 

\usepackage{amsmath}

\newtheorem{THEOREM}{Theorem}
\newenvironment{theorem}{\begin{THEOREM}  {\bf } }%
                        {\end{THEOREM}}

\newtheorem{LEMMA}[THEOREM]{Lemma}
\newenvironment{lemma}{\begin{LEMMA}  {\bf } }%
                      {\end{LEMMA}}
\newtheorem{COROLLARY}[THEOREM]{Corollary}
\newenvironment{corollary}{\begin{COROLLARY}  {\bf } }%
                          {\end{COROLLARY}}
\newtheorem{PROPOSITION}[THEOREM]{Proposition}
\newenvironment{proposition}{\begin{PROPOSITION}  {\bf } }%
                            {\end{PROPOSITION}}
\newtheorem{DEFINITION}[THEOREM]{Definition}
\newenvironment{definition}{\begin{DEFINITION}  {\bf } \rm}%
                            { \end{DEFINITION} \vspace{.1in}}
	
\newenvironment{ldefn}{\begin{DEFINITION}  {\bf } \rm}%
                            {\end{DEFINITION} \vspace{.1in}}

\newtheorem{CLAIM}[THEOREM]{Claim}
                            {\end{CLAIM}}
\newtheorem{EXAMPLE}[THEOREM]{Example}
                            {\end{EXAMPLE}}
\newtheorem{REMARK}[THEOREM]{Remark}
                            {\end{REMARK}}
\newtheorem{NOTATION}[THEOREM]{Notation}
							                            {\end{NOTATION}}
							
\newenvironment{proof}{\noindent {\bf Proof:} \hspace{.2em}}%
                     {}

					                     {}

					                     {}

\newcommand{\bbox}{\vrule height7pt width4pt depth1pt}
\newcommand{\is}{\exists}
\newcommand{\disp}{\displaystyle}
\newcommand{\dis}{\disp}
\renewcommand{\qed}{\bbox\vspace{0.1in}}

\newcommand{\fullintegral}{{\int_{-\infty}^{\infty}}}

\newcommand{\fullintegralx}{\int_{\vec x}}
\newcommand{ \fullx}{\fullintegralx}

\newcommand{\now}{\ti{now}}

\newcommand{\ti}{\textit}

\newcommand{\stara}{\textrm{\upshape{P3}}}%
\newcommand{\probabbr}[3]{\lan #1.#2\to #3 \ran}%
\newcommand{\dummy}[1]{\textrm{d}{#1}}

\newcommand{\xpos}{\textit{h}}%
\newcommand{\ypos}{\textit{v}}%

\newcommand{\xmove}{\ti{move}}%
\newcommand{ \move}{\xmove}

\newcommand{\ymove}{\textit{up}}

\newcommand{\lan}{\langle}
\newcommand{\ran}{\rangle}
\newcommand{\p}{p}%

\newcommand{\poss}{\textit{Poss}}
\newcommand{ \golog}{\textit{Golog}{}}
\newcommand{ \alt}{\ti{Alt}}
\newcommand{\prettysum}[1]{\sum \raisebox{-7pt}{${\!}_{{#1}}$}}

\newcommand{\sub}{_}
\def\su{^}

\newcommand{\natt}{\textit{Natural}}
\newcommand{\summ}{\textsc{SUM}}

\newcommand{\real}{{\mathbb{R}}}

\newcommand{\nat}{{\mathbb{N}}}

\newcommand{\defeq}{\doteq}%

\newcommand{\bel}{\textit{Bel}}
\newcommand{\func}{\textit{P}}
\newcommand{\ivar}{\iota}%
\newcommand{\ins}{S_{\!\textrm{0}}}

\newcommand{\init}{\textit{Init}}%

\newcommand{\xsense}{\textit{sonar}}%
\newcommand{\sonar}{\xsense}
\newcommand{\chw}{\ti{setwin}}

\newcommand{\seew}{\ti{seewin}}

\newcommand{\prox}{\ti{sensewall}}
\newcommand{\close}{\ti{close}}
\newcommand{\far}{\ti{far}}

\newcommand{\Err}{\ti{Err}}

\newcommand{\win}{\ti{w}}%

\newcommand{\lt}{<}
\newcommand{\gt}{>}
\newcommand{\all}{\forall}
\newcommand{\thereis}{\exists}
\newcommand{\infinity}{\infty}

\newcommand{\D}{{\cal D}}

\newcommand{\N}{{\cal N}}

\newcommand{\xO}{{x_1}}
\newcommand{\xT}{{x_2}}
\newcommand{ \xo}{\xO}
\newcommand{ \xt}{\xT}

\newcommand{\abs}[1]{\left| #1\right|}
\newcommand{\set}[1]{\left\{ #1 \right\}}

\newcommand{\la}{\langle}
\newcommand{\ra}{\rangle}

\newcommand{\ie}{\emph{i.e.,}~}

\newcommand{\cf}{\emph{cf.~}}

\newcommand{\know}{\textit{Knows}}%

\renewcommand{\L}{\mathcal{L}}

\usepackage{amssymb,amsmath}

\usepackage{times}

\newcommand{\true}{{\textit{true}}}%
\newcommand{\false}{\mbox{{\it false}}}

\newcommand{\commentout}[1]{}

\newcommand{\blemma}{\begin{lemma}}
\newcommand{\elemma}{\end{lemma}}

\newcommand{\bthm}{\begin{theorem}}
\newcommand{\ethm}{\end{theorem}}
\newcommand{\bprf}{\begin{proof}}
\newcommand{\eprf}{\end{proof}}
\newcommand{\bprop}{\begin{proposition}}
\newcommand{\eprop}{\end{proposition}}

\newcommand{\bi}{\begin{itemize}}
\newcommand{\ei}{\end{itemize}}
\newcommand{\be}{\begin{enumerate}}
\newcommand{\ee}{\end{enumerate}}
\newcommand{\beq}{\begin{equation}}
\newcommand{\eeq}{\end{equation}}
\newcommand{\bcase}{\begin{cases}}
\newcommand{\ecase}{\end{cases}}

\begin{document}

\begin{frontmatter}

\title{Reasoning about Discrete and Continuous \\ Noisy Sensors and Effectors in Dynamical Systems}

\cortext[cor]{Corresponding author.}
 
 \address[edinburgh]{University of Edinburgh,  
Edinburgh, United Kingdom.}
\address[turing]{The Alan Turing Institute, London, United Kingdom.}
\address[toronto]{University of Toronto, Toronto,  Canada.}

\author[edinburgh,turing]{Vaishak Belle\corref{cor}}
\ead{vaishak@ed.ac.uk}   
\author[toronto]{Hector J.~Levesque}
\ead{hector@cs.toronto.edu}

\begin{abstract}
	Among the many approaches for reasoning about degrees of belief in 
	the presence of noisy sensing and acting,  the logical account
	proposed by Bacchus, Halpern, and Levesque is perhaps the most expressive.
	While their formalism is quite general, it is restricted to fluents
	whose values are drawn from discrete finite domains, as opposed to
	the continuous domains seen in many robotic applications. In this
	work, we show how this limitation in that approach can be lifted.
	By dealing seamlessly with both discrete distributions and continuous
	densities within a rich theory of action, we provide a very general
	logical specification of how belief should change after acting and
	sensing in complex noisy domains.
\end{abstract}

\begin{keyword}
	Knowledge representation \sep 
	Reasoning about action \sep
	Reasoning about knowledge \sep 
	Reasoning about uncertainty \sep 
	Cognitive robotics 
\end{keyword}

\end{frontmatter}

\section{Introduction} %
\label{sec:introduction}

\begin{quotation} \it 
On numerous occasions it has been suggested that the formalism [the situation calculus] take uncertainty into account by attaching probabilities to its sentences. We agree that the formalism will eventually have to allow statements about the probabilities of events, but attaching probabilities to all statements has the following objections: \begin{enumerate}
	\item It is not clear how to attach probabilities to statements containing quantifiers in a way that corresponds to the amount of conviction people have.
	\item The information necessary to assign numerical probabilities is not ordinarily available. Therefore, a formalism that required numerical probabilities would be epistemologically inadequate.
\end{enumerate}
\hfill \( - \) McCarthy and  Hayes \cite{McCarthy:69}. 

\end{quotation}

Much of high-level AI research is concerned with the behaviour of some  putative agent, such as an autonomous robot, operating in an environment. Broadly speaking, an intelligent agent  interacting with a dynamic and incompletely known world grapples with two special sorts of reasoning problems. First, because the world is \textit{dynamic}, it will need to reason about change: how its actions  affect the state of the world. Pushing an object on a table, for example, may cause it to fall on the floor, where it will remain unless picked up. Second, because the world is incompletely known, the agent will need to make do with partial specifications about what is true. As a result, the agent will often need to augment what it believes about the world by performing perceptual actions, using sensors of one form or another.

For many AI applications, and robotics in particular, these reasoning problems are more involved. Here, 
it is not enough to deal
with incomplete knowledge, where some formula $\phi$ might be unknown. One
must also know which of $\phi$ or $\lnot\phi$ is the more {\it likely}, and by how
much. In addition, both the sensors and the effectors that the agent uses to modify its world are often subject to uncertainty in that they are {\it noisy.} 

To see a very simple example, imagine a robot moving towards a wall as shown in Figure \ref{fig:robot}, and a certain distance \( h \) from it. Suppose the robot can move towards and away from the wall, and it is equipped with a distance sensor aimed at the wall. 
Here, the robot may not know the true value of \( h \) but may believe that it takes values from some set, say \( \set{2, \ldots, 11} \). If the sensor is noisy, a reading of, say, 5 units, does not guarantee that the agent is actually 5 units from the wall, although it should serve to increase the agent's degree of belief in that fact. Analogously, if the robot intends to move by 1 unit and the effector is noisy, it may end up moving by 0.9 units, which the agent does not get to observe. Be that as it may, the robot's degree of belief that it is closer to the wall should increase.

While many proposals have appeared in the literature to address such concerns (\cf penultimate section), very few are embedded in a general theory of action whilst supporting features like disjunction and quantification. For example, graphical models such as Bayesian networks can represent and reason about the probabilistic dependencies between random variables, and how that might change over time. However, it lacks first-order features and a rich account of actions. Relational graphical models, including Markov logic networks \cite{richardson2006markov}, borrow  devices from first-order logic to allow the succinct modelling of relational dependencies, but ultimately they are purely syntactic extensions to graphical models, and do not attempt to address the deeper issues pertaining to the specification of probabilities in the presence of logical connectives and quantifiers. 
Building on first-order accounts of probabilistic reasoning \cite{bacchus1990representing,halpern1990analysis}, perhaps the most general formalism for dealing with {\it degrees of
  belief} in formulas, and in particular, with how degrees of belief should
evolve in the presence of noisy sensing and acting is the account proposed by
Bacchus, Halpern, and Levesque \cite{Bacchus1999171}, henceforth
BHL. Among its many properties, the BHL model shows precisely how beliefs can
be made less certain by acting with noisy effectors, but made more certain by
sensing (even when the sensors themselves are noisy).

The main advantage of a logical account like BHL is that it allows a
specification of belief that can be partial or incomplete, in keeping with
whatever information is available about the application domain.  It does not
require specifying a prior distribution over some random variables from which
posterior distributions are then calculated, as in Kalman filters, for example
\cite{dean1991planning}.  Nor does it require specifying the conditional
independences among random variables and how these dependencies change as the
result of actions, as in the temporal extensions to Bayesian networks
\cite{pearl1988probabilistic}.  In the BHL model, some logical constraints are
imposed on the initial state of belief. These constraints may be compatible
with one or very many initial distributions and sets of independence
assumptions. All the properties of belief will then follow at a corresponding
level of specificity.

\begin{figure}[tp]
\begin{center}
\includegraphics[width=5cm]{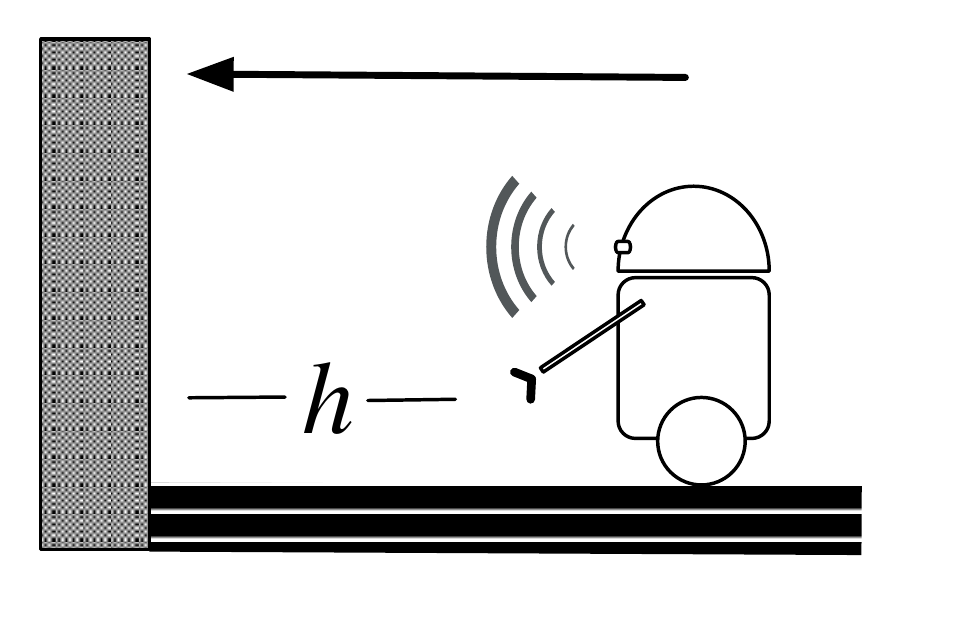}
\caption{A simple robot.}
\label{fig:robot}
\end{center}
\end{figure}

Subjective uncertainty is captured in the 
 BHL account using a possible-world model of belief
\cite{Kripke:1963uq,nla.cat-vn2250548,reasoning:about:knowledge}.  In
classical possible-world semantics, a formula $\phi$ is believed to be true
when $\phi$ holds in all possible worlds that are deemed
accessible. In BHL, the degree of belief in $\phi$ is defined as a
normalized sum over the possible worlds where $\phi$ is true of some
nonnegative {\it weights\/} associated with those worlds.  (Inaccessible
worlds are assigned a weight of zero.)  To reason about belief change, the BHL
model is then embedded in a rich theory of action and sensing provided by the
situation calculus
\cite{McCarthy:69,reiter2001knowledge,citeulike:528170}. The BHL account
provides axioms in the situation calculus regarding how the weight associated
with a possible world changes as the result of acting and sensing.  The
properties of belief and belief change  then emerge as a direct 
logical consequence of the initial constraints and these changes in weights. 

For example, suppose \( h \) is a fluent representing the
robot's horizontal distance to the wall in  Figure \ref{fig:robot}.  The fluent $\xpos$ would have
different values in different possible worlds. In a BHL specification, each of
these worlds might be given an initial weight. For example, a uniform
distribution might give an equal weight of $.1$ to ten possible worlds where
$\xpos\in\{2,3,\ldots,11\}$. The degree of belief in a formula like
$(\xpos < 9)$ is then defined as a sum of the weights, and would lead here
to a value of $.7$. The theory of action would then specify how these weights
change as the result of acting (such as moving away or towards the wall) and
sensing (such as obtaining a reading from a sonar aimed at the wall). Naturally, the logical language permits weaker specifications, involving disjunctions and quantifiers, and the appropriate behavior would still emerge.

While this model of belief is widely applicable, it does have one serious
drawback: it is ultimately based on the addition of weights and is therefore
restricted to fluents having discrete finite values. 
This is in stark contrast to robotics and machine learning applications  \cite{thrun2005probabilistic,conf/nips/ZamaniSPK12,murphy2012machine},  
where event and
observation variables are characterized by continuous
distributions, or perhaps combinations of discrete and continuous ones. 
There is no way to say in BHL that the initial value of $\xpos$ 
is any real number drawn from a uniform distribution 
on the interval \( [2,12] \).  One would
again expect the belief in $(\xpos < 9)$ to be $.7$, but instead of
being the result of {\it summing\/} weights, it must now be the result of {\it
  integrating\/} densities over a suitable space of values, something quite
beyond the  BHL approach. 

So, on the one hand, the BHL account and others like it can be seen as general formal theories that attempt to address important philosophical problems such as those raised by McCarthy and Hayes above. But on the other, a serious criticism levelled at this line of work, and indeed at much of the work in reasoning about action, is that the theory is far removed from the kind of probabilistic uncertainty  and noise seen in typical robotic applications.

The goal of this work is to show how with minimal additional assumptions this
serious limitation of BHL can be lifted.\footnote{A preliminary version of this work was discussed in \cite{bellelev13}. That work was limited to noisy sensors, but assumed deterministic (\ie noise-free) effectors.} By lifting this limitation, one obtains, for the first time, a logical  language for representing real-world robotic specifications without any modifications, but also extend beyond it by means of the logical features of the underlying framework. 
In particular, we present a \emph{formal specification}
 of the degrees of belief in formulas 
with real-valued fluents
(and other fluents too), and how belief changes as the
result of acting and sensing.
Our account will retain the advantages of 
BHL but work seamlessly with discrete probability distributions, probability
densities, and perhaps most significantly, with difficult combinations of the two. More broadly, we believe the model of belief proposed in this work provides the necessary bridge between logic-based reasoning modules, on the one hand, and probabilistic specifications as seen in real-world data-intensive applications, on the other.

The paper is organized as follows. We first review the formal
preliminaries, the BHL model in particular, and introduce  definitions for modeling continuous probability distributions. We then show how the
definition of belief in BHL can be reformulated as a different summation,
which then provides sufficient foundation for our extension to continuous
domains. We then discuss how this model can be extended for noisy acting, and combinations of discrete and continuous properties. We conclude after discussing related work.

\section{A Theory of Action} %
\label{sec:preliminaries}

Our account is formulated in the language of the situation calculus \cite{McCarthy:69}, as developed in \cite{reiter2001knowledge}. The situation calculus is a special-purpose knowledge representation formalism for reasoning about dynamical systems. 
Informally, the formalism is best understood by arranging the world in terms of three kinds of things: \emph{situations}, \emph{actions} and \emph{objects}. Situations represent ``snapshots,'' and can be viewed as possible histories. A set of initial situations correspond to the ways the world can be prior to the occurrence of actions. The result of doing an action, then, leads to a successor (non-initial) situation. Naturally, dynamic worlds  change the properties of objects, which are captured using predicates and functions whose last argument is always a situation, called \emph{fluents}. 

\subsection{The Logical Language} %
\label{sub:the_logical_language}

Formally, the language \( \L \) of the situation calculus  is a
many-sorted dialect of predicate calculus, with sorts for \emph{actions},
\emph{situations} and \emph{objects} (for everything else). (We do not review standard predicate logic here; see, for example,  \cite{enderton1972mathematical,smullyan1995first}. We further assume familiarity with the notions of \emph{models}, \emph{structures},   \emph{satisfaction} and \emph{entailment}.) 
In full length, let \( \L \) include:  
\begin{itemize} 
	\item logical connectives \( \neg, \forall, \land, = \), with other connectives such as \( \supset \) understood for their usual abbreviations; 
	\item an infinite supply of \emph{variables} of each sort; 
	\item an infinite supply of \emph{constant} symbols of the sort object;  
	\item for each \( k\geq 1, \) \emph{object} \emph{function} symbols \( g_1, g_2, \ldots \) of type \( (\mathit{action} \cup \mathit{object})^k \rightarrow \mathit{object} \); 
	\item for each \( k\geq 0,  \) \emph{action function}  symbols \( A_1, A_2, \ldots \) of type \( (\mathit{action}\cup \mathit{object})^k \rightarrow \mathit{action}; \) 
	\item a special \emph{situation} \emph{function} symbol \( \mathit{do}\): \( \mathit{action} \times \mathit{situation} \rightarrow \mathit{situation}  \); 
	\item a special predicate symbol \( \poss\): \(\mathit{action} \times \mathit{situation} \);\footnote{We will subsequently  introduce a few more distinguished predicates when modeling knowledge, sensing and nondeterminism.}
	\item for each \( k \geq 0,  \) \emph{fluent function} symbols \( f_1, f_2, \ldots \) of type \( (\mathit{action}\cup \mathit{object} )^k \times \mathit{situation} \rightarrow \mathit{object}; \) 
	\item a special constant \( \ins \) to represent the actual initial situation. 
\end{itemize}

\noindent To reiterate, apart from some  syntactic particulars, 
the logical basis for the situation calculus is the regular (many-sorted) predicate calculus.\footnote{For simplicity, only functional fluents are introduced, and their predicate counterparts are ignored. (Distinguished symbols like $\poss$ are an exception.) This is without loss of any generality since predicates can be thought of functions that take one of two values, the first denoting \( \true \) and the other denoting \( \false. \)}  So, terms and well-formed formulas are defined inductively, as usual, respecting sorts. See, for example,  \cite{reiter2001knowledge} for an exposition. 

We follow some conventions in the ways we use Latin and Greek alphabets:  $a$ for both terms and variables of the action sort (the context would make this clear);  $s$ for terms and variables of the situation sort (the context would make this clear); and finally, \( x, u, v, z, n, \) and \( y \) to range over variables of the object sort. We let $\phi$ and $\psi$ range over formulas, and $\Sigma$ over sets of formulas.  (These may be further decorated using superscripts or subscripts.) 

We sometimes suppress the situation term in a formula $\phi$, or use a distinguished 
	 variable $\mathit{now}$, to denote the current situation.  Either way, we let $\phi[t]$ denote the formula with the restored situation term $t$. 
	 
We will often use the the usual 
	``case'' notation with curly braces as a 
	convenient abbreviation for a  logical formula:
	\[
	z = \begin{cases}
	t_1 & \mbox{if\,\,} \psi\\
	t_2 & \mbox{otherwise}
	\end{cases}
	\,\defeq\,\,\, (\psi \supset z = t_1) \land (\lnot\psi \supset z = t_2).
	\] 
	 
	 Finally, for convenience, we often introduce formula and term abbreviations that are meant to expand as \( \L \)-formulas.  For example, we might introduce a new formula \( A \) by \( A \defeq \phi \), where \( \phi\in\L. \) Then any expression \( E(A) \) containing \( A \) is assumed to mean \( E(\phi). \) Analogously, if we introduce a new term \( t \) by  \( t = u \defeq \phi(u) \) then any expression \( E(t) \) is assumed to mean \( \exists u(E(u) \land \phi(u)). \)

Dynamic worlds are enabled by performing actions, and in the language, this is realized using the \emph{do} operator. That is, the result of doing an action \( a \) at situation \( s \) is the situation \( do(a,s). \)  Functional fluents, which take situations as arguments, may then have different values at different situations, thereby capturing \emph{changing properties} of the world. As noted, the constant \( \ins \) is assumed to give the actual initial state of the domain, but the agent may consider others possible that capture the beliefs and ignorance of the agent. In general, we say a situation is an \emph{initial} one when it is a situation without a \emph{predecessor}:  \[
	\init(s) \defeq \neg\thereis a, s'\!.~s=do(a,s').
\]
The picture that emerges is that situations can be structured as a set of trees, each rooted at an
initial situation and whose edges are actions. More conventions: we use $\ivar$ to range over such initial situations only, and  let $\alpha$ denote sequences of action terms or variables, and freely use this with $do$, that is, if $\alpha = [a\sub 1, \ldots, a\sub n]$ then $do(\alpha, s)$ stands for $do(a\sub n, do(\ldots, do(a\sub 1, s)\ldots ))$.

Domains are modeled in the situation calculus as \emph{axioms}. A set of \( \L \)-sentences specify the actions available, what they depend on, and the ways they affect the world.  Specifically, these axioms are given in the form of a  \emph{basic action theory} \cite{reiter2001knowledge}, reviewed below.

\subsection{Basic Action Theories} %
\label{sub:basic_action_theories}

In general, a basic action theory \( \D \) is a set of sentences consisting of (free variables understood as universally quantified from the outside): 
  
   \begin{itemize}
	\item an \emph{initial theory} \( \D_0 \) that describes what is true initially;

  \item \emph{precondition} \emph{axioms}, of the form $\poss(A(\vec x,s)) \equiv \beta(\vec x, s)$ that describe the conditions under which actions are executable;

\item \emph{successor state axioms}, of the form $f(\vec x, do(a,s)) = u \equiv \gamma\sub f (u, \vec x, s)$, that describe the changes to fluent values after doing actions; 

\item domain-agnostic \emph{foundational} \emph{axioms}, such as a second-order induction axiom for the space of situations and unique name axioms for actions, the details of which need not concern us here  \cite{reiter2001knowledge}.
  \end{itemize}

\noindent The formulation of successor state axioms, in particular, incorporates Reiter's monotonic solution to the frame problem \cite{Reiter:91}. 

An agent reasons about actions by means of entailments of a basic action theory \( \D \), for which standard first-order (Tarskian) models suffice (although see below). A fundamental task in reasoning about action is that of \emph{projection} \cite{reiter2001knowledge}, where we test which properties hold after actions. Formally, suppose $\phi$ is a situation-suppressed formula or uses the special symbol $\now.$  Given a sequence of actions $a\sub 1$ through $a\sub n$, we are often interested in asking whether $\phi$ holds after these: \[ \D \models \phi[do([a\sub 1, \ldots, a\sub n], \ins)]? \]

This concludes our review of the basic features of the language. In the subsequent sections, we will discuss how the formalism is first extended for knowledge, and then, degrees of belief against discrete probability distributions and beyond.  To prepare for that, the rest of the section will introduce three logical constructions that will be used in our work. First, we will define a class of Tarskian structures.   Second, we define a logical term standing for summation in the usual mathematical sense, that is to be understood as an abbreviation for a formula involving second-order quantification. Third, we analogously  define a logical term standing for integration in the usual mathematical  sense.

\subsection{$\real$-interpretations} 

For our purposes, the notion of entailment  will be assumed wrt a class of Tarskian structures that we call $\real$-interpretations. See \cite{enderton1972mathematical} for a review of Tarskian structures; we assume some familiarity with the underlying  notions. Below, for any $\L$-term $t$ and $\L$-interpretation $M$, we use $t^M$ to mean the domain element that $t$ references. If $t$ has a free variable $x$ and $y$ is any $\L$-term, then we write $t^x_y$ to mean the $\L$-term obtained by replacing $x$ in $t$ with $y$. 
Finally, for any variable map $\mu$ and variable $X$, we use $\mu^X_Y$ to mean a variable map that is exactly like $\mu$ except that for the  variable $X$ it takes the value $Y$.  

\begin{definition} By an \( \real \)-interpretation we mean any $\L$-structure  where  \(
=,\lt,\gt,0,1,+,\times,/,-,e,\pi \), exponentiation and logarithms have their usual interpretations.  
\end{definition}

(That is, ``$1+0 = 1$" is true in all $\real$-interpretations,  if ``$x \gt y$" is true  then ``$\neg (y \gt x)$" is true,  and so on.) So, henceforth, when we write $\Sigma \models \phi$, we mean that in all $\real$-interpretations where $\Sigma$ is true, so is $\phi$.\footnote{Alternatively, one could have  specified axioms for characterizing the field of real numbers together with $\Sigma$. Whether or not reals with exponentiation is {\it first-order\/} axiomatizable  remains a major open question \cite{Marker:1996fk}.}

Natural numbers can be defined in terms of a predicate by appealing to $\real$-interpretations.  Let 
\[ \natt(x) \doteq \forall P[ (P(0) \land \forall x (P(x) \supset P(x+1))) \supset P(x)].\]
Then: 

\begin{theorem}\label{thm:real interpretations natural domain} 
Let $M$ be any $\real$-interpretation, and $c$ a constant symbol of $\L$. Then, $M  \models\natt(c)$  iff $c\su M \in \nat$.  
\end{theorem}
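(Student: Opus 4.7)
The plan is to argue the two directions separately, exploiting the fact that in any $\real$-interpretation the object domain contains the real numbers, hence contains $\nat$ as a distinguished subset, and that the second-order quantifier $\forall P$ in $\natt$ ranges over all subsets of the domain (standard second-order semantics).

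For the backward direction, suppose $c^M = n \in \nat$. I want to show that for every interpretation of $P$ in $M$, if $P(0)$ and $\forall x(P(x) \supset P(x+1))$ both hold, then $P(c)$ holds. Fix such a $P$. Since $M$ is an $\real$-interpretation, $0^M = 0$ and $+^M$ is ordinary real addition, so from $P(0)$ together with $n$ applications of the successor implication (a meta-level induction on $n \in \nat$) I obtain $P(n)$, and hence $P(c^M)$. This is genuinely routine and should take only a couple of lines.

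For the forward direction, suppose $M \models \natt(c)$. I instantiate the universally quantified second-order variable $P$ by the subset $\nat \subseteq \real \subseteq |M|$ (where $|M|$ is the object domain of $M$). Because $M$ is an $\real$-interpretation, $0 \in \nat$ gives $P(0)$, and the fact that $k \in \nat$ implies $k+1 \in \nat$ (with $+$ interpreted as real addition) gives $\forall x(P(x) \supset P(x+1))$. The hypothesis $M \models \natt(c)$ then yields $P(c^M)$, i.e., $c^M \in \nat$, as required.

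The only mildly delicate point — and the one I would flag as the main obstacle — is justifying that $\nat$ is available as a value of the second-order variable $P$. This requires that we are using the full (standard) semantics of second-order quantification, so that $\forall P$ ranges over every subset of the object domain, and that the earlier clause fixing $\real$-interpretations (which pins down $0$, $1$, $+$, etc.) does indeed force $\nat \subseteq |M|$. Once these are made explicit, the rest is immediate, and no further calculation is needed.
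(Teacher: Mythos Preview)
Your proposal is correct and follows essentially the same approach as the paper. The paper factors the argument into two preparatory lemmas---that $\nat$ satisfies the antecedent $P(0)\land\forall x(P(x)\supset P(x+1))$, and that any set satisfying this antecedent contains $\nat$---and then combines them, whereas you inline both observations; the underlying meta-level induction and the instantiation of $P$ by $\nat$ are identical.
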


The proof relies on two lemmas. 
First, we argue that the set of natural numbers satisfies the antecedent $A \doteq P(0) \land \forall x(P(x)\supset P(x+1))$: 

\blemma\label{lem:nat ante} 
For every $\real$-interpretation $M$ and map $\mu$, $M, \mu \su P\sub \nat \models A.$ 
\elemma 

\bprf 
Since ``0" and ``+" have their usual interpretations, and $0\in \nat$, $M, \mu\su P\sub \nat \models P(0).$ Suppose \( k\in \nat, \)  and so, $M, {\mu \su {P~ x} \sub {\nat~ k}} \models P(x)$. Since $\nat$ includes the successors of all its elements, $M, {\mu \su {P~ x} \sub {\nat~ k}} \models P(x+1).$ \qed 
\eprf 

Next, we argue that every set satisfying the antecedent includes the set of natural numbers:

\blemma\label{lem:nat subset} 
For every $M$ and $\mu$ as above, and for any set $Q$, if $M, \mu\su P\sub Q \models A$ then $\nat$ is a subset of $Q$. 
\elemma  

\bprf 
Suppose $M, \mu\su P\sub Q \models A$. We prove the claim by induction on natural numbers. For the base case, we consider $0\in \nat$, and by assumption $M, \mu\su P\sub Q \models P(0)$, and so $0$ is in the set $Q$. For the hypothesis, assume for any  $k \in\nat$,  \( k \) is also in $Q$. That is, $M, \mu\su {P~ x}\sub {Q~ k} \models P(x)$. Of course, the successor of $k$ is a natural number and by assumption, $M, \mu\su {P~ x}\sub {Q~ k} \models P(x+1)$. Thus, starting from 0, every natural number and its successor is in $Q$, and so $Q$ must include $\nat$. \qed
\eprf

So, the main claim is as follows: 

\bprf 
Suppose $M \models \natt(c)$, that is, $M \models \forall P(A\supset P(c)).$ Since $\nat$ is a subset of the domain by assumption, for any map $\mu$, $M, \mu\su P\sub \nat \models A\supset P(c)$. By Lemma \ref{lem:nat ante}, $M, \mu\su P\sub \nat\models A$, and so $M, \mu\su P\sub \nat \models P(c)$. Hence $c\su M\in \nat$.

Suppose instead $c\su M\in \nat$. Suppose for any set $Q$ and any map 
$\mu $, $M, \mu\su P\sub Q \models A$. By Lemma \ref{lem:nat subset}, $\nat$ is a subset of $Q$ and so $c\su M$ is in $Q$, that is, $M, \mu\su P\sub Q \models P(c)$. Therefore, $M, \mu\su P \sub \nat \models A\supset P(c)$. Since this holds for any set $Q$, $M \models \forall P[A\supset P(c)]$, that is, $M\models \natt(c)$. \qed 
\eprf

\subsection{Summation} %
\label{sub:summation}

Here we show how finite summations can be characterized as an abbreviation for an $\L$-term using second-order quantification. 

Let $f$ be any $\L$-function from $\mathbb{N}$ to $\real$. Let $\summ(f,n)$, standing for the sum of the values of $f$ for the argument $1$ through $n$, be defined as an abbreviation: 
\[ \begin{array}{l}
	\summ(f,n) = z \doteq \is g[g(1) = f(1) ~\land \\ \hspace{3cm}~~ g(n) = z ~\land \\ \hspace{3cm}~~ \forall i(~1\leq i\lt n \supset g(i+1) = g(i) + f(i+1)~)].
\end{array}  \]
The variable $i$ is understood to be chosen here not to
conflict with any of the variables in $n$ and $z$. The function $g$ from $\nat$ to $\real$ is assumed to not conflict with $f$. (That is, the logical terms  are distinct.) 

This can then be argued to correspond to summations in the usual mathematical sense as follows:

\bthm 
Let $f$ be a function symbol of $\L$ from $\nat$ to $\real$, $c$ be a term, and $n$ be  a constant symbol of $\L$. Let $M$ be any $\real$-interpretation. Then, \[ \textrm{if } \sum\sub {i=1}\su {n\su M} f\su M (i) = c\su M  \textrm{ then } M\models \summ(f,n) = c. \]
\ethm 

\bprf 
We prove by induction on $n\su M$. For the base case, suppose $n\su M =1$. Then, the antecedent would give us $f\su M(1)$. 
As for the consequent, clearly $M \models \summ(f,n) = f(1),$ and so the base case holds. 

Assume the hypothesis holds for $n\su M$, and we prove the case for $(n\su M + 1)$. 
(Note that owing to ``1" and ``+" having their usual interpretations, $(n+1)\su M = n\su M +1$.) So suppose \[ \sum \sub {i=1} \su {{n\su M} +1} f\su M (i) = c\su M.\]
On expanding the summation expression, we obtain: \[
	\sum\sub {i=1}\su {n\su M} f\su M (i) = c\su M - f\su M (n\su M + 1)
\]
By induction hypothesis, \( M \models \summ(f,n) =  c - f(n+1) \), and so, by definition, \( M\models \summ(f,n+1) = c. \) \qed 
\eprf

Henceforth, we write: \[ \disp \sum\sub {i=1} \su n t \] 
to mean the logical formula   $\summ(t, n)$ for a  logical term $t$. Here, $i$ is assumed to not conflict with any of the variables in $n$ and $t$.\footnote{That is, we use the standard mathematical notation to denote sums (and later: limits and integrals) in two ways that context will disambiguate: first, as the usual mathematical expression, and second, as a well-formed logical formula to be understood as an abbreviation as explained above.} 

It is worth noting that this logical formula can be applied to summation expressions where the arguments to the terms are not restricted to natural numbers, but taken from any finite set. For example, suppose $H$ is any finite set of terms $\{h_ 1, \ldots, h_ n\}$. We can then use terms such as: \[ \disp\sum _{h\in H} t(h)\]
standing for an abbreviation, similar to $\summ(t,n)$: let $g$ be a function, and let $g(i) = t(h_i)$. Then clearly the above sum defines the same number as $\sum_{i=1}^n g$.  In the sequel, we sometimes sum over a finite set of situations, or a finite vector of values, which is then understood as an abbreviation in this sense.

\subsection{Integration} %
\label{sub:integration}

Finally, we characterize integrals as logical terms.\footnote{In this article, given a non-negative real-valued function, our notion of an integral of this function  is based on the Riemann integral \cite{trench2003introduction}, 
in which case the function is said to be \textit{integrable}. There are limitations to the Riemann integral; for example, the function \( f\colon [0,1] \rightarrow \real \) where \[
	f(x) = 	\begin{cases}
	1 & \textrm{if } x \textrm{ is rational} \\
	0 & \textrm{otherwise}		
	\end{cases}
\]
is not integrable in the Riemann account. In the calculus community, generalizations to the Riemann integral, such as the gauge integral \cite{swartz2001introduction}, have been studied that allow for the integration of such functions. We have chosen to remain within the framework of classical integration, but other accounts may be useful. 

The key idea for moving beyond Riemann integrals would be to amend the logical formula standing for the abbreviations we introduce below. (For example, rather than partitioning the domain of a function, partitioning the range would allow us to formalise Lebesgue integrals.) 
} We present this first for a continuous real-valued function of \emph{one} variable, and then discuss its (straightforward) extension to the many-variable case. 

\newcommand\LIM[2]{\textsc{LIM}[#1,#2]}

We begin by introducing a notation for limits to positive infinity. For
any logical term $t$ and variable $x$, we introduce a term characterized as follows:
\[
\LIM{x}{t} = z \,\,\defeq\,\,
\forall{u}(u>0 \supset \exists{m}\,\forall{n}(n>m \supset \abs{z-t^x_n}<u)).
\]
The variables $u$, $m$, and $n$ are understood to be chosen here not to
conflict with any of the variables in $x$, $t$, and $z$. The abbreviation can be argued to correspond to the limit of a function at infinity in the usual sense: 

\begin{lemma}\label{lem:limits}
Let $g$ be a function symbol of $\L$ standing for a function from $\real$ to
$\real$, and let $c$ be a constant symbol of $\L$. Let $M$ be
any $\real$-interpretation of $\L$. Then we have the following:
\[
\mbox{If}\,
\lim_{x\to\infinity} g\su M (x) \,=\, c^M
\,\,\,\,\mbox{then}\,\,\,
M \,\models\, (c\,=\, \LIM{x}{g}).
\]
\end{lemma}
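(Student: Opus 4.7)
The plan is to show that the logical abbreviation $\LIM{x}{g}$ unpacks, under an $\real$-interpretation, to exactly the standard $\varepsilon$--$N$ definition of a limit at infinity, so the claim reduces to a straightforward translation between the mathematical statement and its first-order rendering.

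First I would expand the right-hand side using the defining abbreviation: $M \models (c = \LIM{x}{g})$ is by definition
\[
M \models \forall u\,(u > 0 \supset \exists m\,\forall n\,(n > m \supset |c - g(n)| < u)).
\]
Since $M$ is an $\real$-interpretation, the symbols $>$, $-$, $|\cdot|$, and $0$ have their usual meanings; and since $g$ is a function symbol of $\L$ from $\real$ to $\real$, the domain element $g^M$ is a function $\real \to \real$. So truth of the displayed sentence in $M$ means: for every real number $u_0 > 0$ there exists $m_0 \in \real$ such that for every real $n_0 > m_0$, $|c^M - g^M(n_0)| < u_0$.

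Second, I would compare this against the hypothesis $\lim_{x\to\infty} g^M(x) = c^M$, which by the standard calculus definition states: for every $\varepsilon > 0$ there exists $N \in \real$ such that for every real $x > N$, $|g^M(x) - c^M| < \varepsilon$. Given any $u_0 > 0$, take $\varepsilon = u_0$ and let $m_0$ be the witness $N$ provided by the hypothesis; then for any real $n_0 > m_0$ we have $|c^M - g^M(n_0)| = |g^M(n_0) - c^M| < u_0$, which is exactly what is required. This establishes the displayed sentence in $M$.

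There is no real obstacle here beyond bookkeeping: the proof is a careful unpacking of the abbreviation and a line-by-line matching of quantifier alternations against the classical definition of a limit, using only that $\real$-interpretations fix the intended meanings of the arithmetic and order symbols. The one subtlety worth flagging is that the logical definition quantifies $m$ and $n$ over the whole domain of the interpretation (implicitly the reals), not just the naturals; but this is benign, since the classical definition of limit at infinity is equally well formulated with real thresholds, and any real witness suffices.
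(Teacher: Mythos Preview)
Your proof is correct. The approach is essentially the same as the paper's---both amount to matching the logical abbreviation against the standard $\varepsilon$--$N$ definition of a limit at infinity under an $\real$-interpretation---but the execution differs slightly. You argue directly: given $u_0>0$, produce the real threshold $m_0$ from the analytic hypothesis and verify the inner universal. The paper instead argues by contradiction: it assumes $M\not\models (c=\LIM{x}{g})$, extracts a witnessing $r>0$, and then invokes Theorem~\ref{thm:real interpretations natural domain} to pick \emph{natural-number} values for $m$ and $n$ to derive the contradiction. Your direct route is cleaner and avoids the somewhat unnecessary detour through $\nat$; the paper's route, on the other hand, makes explicit that the domain of an $\real$-interpretation contains the naturals (via Theorem~\ref{thm:real interpretations natural domain}), which is the only structural fact about the domain it actually uses. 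Your remark that the quantifiers range over the whole object domain rather than just $\nat$ is apt and correctly identified as benign for the direction being proved.
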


\begin{proof} Suppose the limit holds. Then, by the standard notion of limits \cite{keisler2012elementary}, for every real number \( \epsilon \gt 0, \) there is a natural number \( j \) such that for all natural numbers \( k \gt j \): \[
	| g\su M (k) - c\su M | \lt \epsilon. 
\]
Suppose \( M \not\models \LIM{x}{g} = c. \)	Then there is some \( r \gt 0 \) such that \( M, \mu \su u \sub r \not\models \exists m \forall n (n\gt m \supset | c - g(n)| \lt u). \) By Theorem \ref{thm:real interpretations natural domain}, the domain of \( M \) includes \( \nat, \) and so 
let \( m \) and \( n \) be variables that are mapped to \( j \) and \( k \) respectively. Then, given that \( M \) interprets arithmetic symbols in the usual way, the claim \( M, \mu \su {u~m~n} \sub {r~j~k} \not\models (n\gt m) \supset (| c - g(n) | \lt u) \) is a contradiction. \qed

\end{proof}

Henceforth, we write:\[
	\lim_{x\to\infinity}{t}
\]
to mean the logical formula  \( \LIM{x}{t}. \) 

\newcommand\integ[4]{\textsc{INT}[#1,#2,#3,#4]}

Next, for any variable $x$ and terms $a$, $b$, and $t$,
we introduce a term $\integ{x}{a}{b}{t}$ denoting the definite integral of
$t$ over $x$ from $a$ to $b$:
\[
\integ{x}{a}{b}{t} \,\,\defeq\,\,
\lim_{n\to\infinity}\,h\cdot \sum_{i=1}^n\, t^x_{(a+h\cdot{i})} 
\]
where
$h$ stands for
\((b-a)/n\). 
The variables are chosen not to conflict with any of the
other variables. We now show: 

\begin{lemma}\label{lem:definite integral}  Let $g$ be a function symbol of $\L$ standing for a function from $\real$ to
$\real$, and let $a, b, c$ be  constant symbols of $\L$. Let $M$ be
any $\real$-interpretation of $\L$.
Then we have the following:
\[
\mbox{If}\,
\int_{a\su M}^{b\su M}\! g^M(x)\,dx \,=\, c^M
\,\,\,\,\mbox{then}\,\,\,
M \,\models\, (c=\integ{x}{a}{b}{g}).
\]
\end{lemma}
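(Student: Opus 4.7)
The plan is to unfold the definition of $\integ{x}{a}{b}{g}$ via the $\summ$ and $\LIM$ abbreviations and reduce the claim to the classical fact that the uniform right-endpoint Riemann sums of a Riemann-integrable function converge to its integral.

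Fix an $\real$-interpretation $M$ with $\int_{a\su M}^{b\su M} g\su M(x)\,dx = c\su M$. For each positive natural $n$, set $h_n = (b\su M - a\su M)/n$ and $\phi(n) = h_n \cdot \sum_{i=1}^n g\su M(a\su M + h_n \cdot i)$, i.e., the right-endpoint Riemann sum over the uniform partition of $[a\su M, b\su M]$ into $n$ equal subintervals. Because $g\su M$ is Riemann integrable with value $c\su M$, standard analysis gives $\lim_{n \to \infinity} \phi(n) = c\su M$ along $n \in \nat$. I would then verify $M \models c = \LIM{n}{h \cdot \sum_{i=1}^n g(a + h \cdot i)}$ directly, mirroring the argument used for Lemma~\ref{lem:limits}: given any real $u > 0$, use the Riemann-sum limit to pick a natural $N$ such that $\abs{\phi(n) - c\su M} < u$ for every natural $n > N$, and set $m = N$. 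For natural $n > m$, Section~2.4's summation theorem shows that the $\summ$ abbreviation inside the inequality denotes $\phi(n)$ in $M$, so the inequality holds; unfolding the definition of $\integ{x}{a}{b}{g}$ then yields the conclusion.

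The main obstacle is the non-natural case, since the $\LIM$ abbreviation quantifies its $n$ over all reals whereas $\summ(f, n)$ is only genuinely pinned down when $n \in \nat$. Fortunately, by the abbreviation convention from Section~2.1, the inner inequality expands to an existential of the form $\exists w (\abs{c - h \cdot w} < u \wedge \summ(\ldots) = w)$; for non-natural $n$ the witnessing function in the $\summ$ definition is unconstrained at the index $n$, so $w$ can be chosen to satisfy the inequality, making the statement hold at non-natural $n$ as well. Modulo this care with the abbreviations, the proof reduces to an application of Section~2.4's summation theorem together with the classical convergence of right-endpoint Riemann sums.
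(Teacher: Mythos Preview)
Your approach is correct and follows the same underlying idea as the paper: use the classical convergence of right-endpoint Riemann sums and then transfer this into the logical $\LIM{\cdot}{\cdot}$ abbreviation. The paper's own proof is much terser: it simply records that Riemann integrability gives $\lim_{k\to\infty} h\cdot\sum_{i=1}^{k} g^M(a^M+h\cdot i)=c^M$ and then invokes Lemma~\ref{lem:limits} directly to conclude $M\models c=\integ{x}{a}{b}{g}$, without unpacking the $\LIM{\cdot}{\cdot}$ definition or discussing non-natural $n$.

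Your version is more scrupulous in two respects. First, you reprove the $\LIM{\cdot}{\cdot}$ step inline rather than citing Lemma~\ref{lem:limits}; this is arguably the right thing to do, since Lemma~\ref{lem:limits} is stated only for a function \emph{symbol} $g$, whereas here the term inside the limit is a compound expression built from the $\summ$ abbreviation. Second, you explicitly address what happens when the universally quantified $n$ in the $\LIM{\cdot}{\cdot}$ definition takes a non-natural real value, observing that the existential over the accumulator function $g$ in the $\summ$ abbreviation leaves $g(n)$ unconstrained in that case (the chain of constraints bottoms out at an unconstrained $g(1+\{n\})$ for the fractional part), so a witness for the inequality can always be chosen. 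The paper does not engage with this point at all. Both arguments are sound; yours buys rigor at the cost of length, while the paper's buys brevity by treating Lemma~\ref{lem:limits} as applicable to arbitrary terms and leaving the non-natural case implicit.
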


\begin{proof} Suppose $\int\sub {a\su M} \su {b\su M} g\su M (x) dx = c\su M$, that is, $g\su M$ is integrable. By definition, then: \[
	\lim\sub{k\to\infinity} h\cdot \sum\sub {i=1} \su {k} g\su M({a\su M} + h\cdot i)  = c\su M
\]
where $h = (b\su M - a\su M)/k$. 	
  By Lemma \ref{lem:limits}, we have \[ M \models \lim\sub {n\to \infinity} \,h\cdot \sum_{i=1}^n\, g^x_{(a+h\cdot{i})} = c \]
where \( h = (b-a)/n \), that is, \( M \models \integ{x}{a}{b}{g} = c. \) \qed 
\end{proof}

Finally, we define the definite integral of $t$ over all real values of $x$ by
the following:
\[
\int_{x}\!{t} \,\,\defeq\,\,
\lim_{u\to\infinity} \, \lim_{v\to\infinity} \, \integ{x}{-u}{v}{t}.
\]
The main result for this logical abbreviation is the following:

\begin{theorem}\label{thm:singleint}
Let $g$ be a function symbol of $\L$ standing for a function from $\real$ to
$\real$, and let $c$ be a constant symbol of $\L$. Let $M$ be
any $\real$-interpretation of $\L$.
 Then we have the following:
\[
\mbox{If}\,
\int_{-\infinity}^{\infinity}\! g^M(x)\,dx \,=\, c^M
\,\,\,\,\mbox{then}\,\,\,
M \,\models\, (c=\!\!\int_{x}\!{g(x)}).
\]

\end{theorem}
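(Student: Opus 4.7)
The plan is to mirror the proof of Lemma~\ref{lem:definite integral} by reducing the claim to two nested applications of Lemma~\ref{lem:limits}, after first invoking Lemma~\ref{lem:definite integral} itself to handle the inner definite integrals. I will unpack the hypothesis $\int_{-\infty}^{\infty}\! g^M(x)\,dx = c^M$ using the standard definition of the doubly improper Riemann integral: this existence is equivalent to the existence of both one-sided improper integrals $\int_{-\infty}^{0}\! g^M$ and $\int_{0}^{\infty}\! g^M$ as one-sided limits, with their sum equal to $c^M$. From this I deduce that the iterated limit
$$\lim_{u\to\infty}\Bigl(\lim_{v\to\infty}\int_{-u}^{v}\! g^M(x)\,dx\Bigr) \,=\, c^M$$
holds in the model: for each fixed $u > 0$ the inner limit exists and equals $H(u) := \int_{-u}^{0}\! g^M + \int_{0}^{\infty}\! g^M$, and then $H(u) \to c^M$ as $u \to \infty$.

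Next, by Lemma~\ref{lem:definite integral}, for every pair of real values assigned to the free variables $u,v$, the term $\integ{x}{-u}{v}{g}$ interprets in $M$ as exactly the Riemann integral $\int_{-u}^{v}\! g^M(x)\,dx$. Then, for each fixed $u$, viewing $\integ{x}{-u}{v}{g}$ as a function of $v$ whose real-valued limit at infinity is $H(u)$, Lemma~\ref{lem:limits} yields $M \models H(u) = \lim_{v\to\infty} \integ{x}{-u}{v}{g}$. Applying Lemma~\ref{lem:limits} once more to the outer limit of $u \mapsto H(u)$ as $u \to \infty$, I conclude $M \models c = \lim_{u\to\infty} \lim_{v\to\infty} \integ{x}{-u}{v}{g}$, which by the definition $\int_x g(x) \doteq \lim_{u\to\infty} \lim_{v\to\infty} \integ{x}{-u}{v}{g}$ is the desired statement.

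The hard part is the first reduction: the definition of $\int_x$ uses a specific nested-limit form with the two bounds growing one after the other, whereas the improper integral is ordinarily defined either as a limit over simultaneously growing bounds or as a sum of the two one-sided limits. One must check that existence in the Riemann sense entails existence of both iterated limits \emph{in the given order}, since iterated limits are in general sensitive to the order of the quantifiers; the absolute summability implicit in the existence of the two one-sided improper integrals is what makes the order immaterial here. A secondary technicality is that Lemma~\ref{lem:limits} is formulated for a function symbol of $\L$, whereas I apply it to the compound term $\integ{x}{-u}{v}{g}$ (and then to a term denoting $H(u)$); this is harmless, since the abbreviation $\LIM{x}{t}$ and the correctness proof of Lemma~\ref{lem:limits} depend only on the denotation of $t$ under variable assignments, not on whether $t$ is syntactically a bare function symbol, so the lemma extends verbatim to arbitrary $\L$-terms.
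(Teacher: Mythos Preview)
Your proposal is correct and follows essentially the same approach as the paper: reduce the hypothesis to the iterated limit $\lim_{u\to\infty}\lim_{v\to\infty}\int_{-u}^{v} g^M(x)\,dx = c^M$, then invoke Lemma~\ref{lem:definite integral} for the inner definite integral and Lemma~\ref{lem:limits} twice for the two nested limits. The paper's proof simply asserts the iterated-limit form ``by definition'' and then cites the two lemmas in one line, whereas you spell out why the iterated limit exists (via the two one-sided improper integrals) and flag the technicality that Lemma~\ref{lem:limits} is stated for a bare function symbol but is being applied to a compound term --- both points the paper glosses over.
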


\begin{proof} Suppose $\int\sub {-\infinity} \su \infinity g\su M (x) dx = c\su M$, that is, $g\su M$ is integrable. By definition, \[
	\lim\sub{u\to\infinity} \lim\sub{v\to\infinity} \int\sub{-u}\su {v} g\su M (x) = c.
\]
By Lemma \ref{lem:limits} and Lemma \ref{lem:definite integral}, we obtain:\[
	M \models \lim\sub{u\to\infinity} \lim\sub{v\to\infinity} \integ{x}{-u}{v}{g} = c.
\]
That is, \( M\models \int \sub x g(x) = c.  \) \qed  
	
\end{proof}

\newcommand{\minteg}{\mbox{MINT}}

The characterization of integrals for a many-variable function $f$, from $\real\su k$ to $\real$, is then an easy exercise. For variables $x\sub 1, \ldots, x\sub k$ and terms $a\sub 1, \ldots, a\sub k$, $b\sub 1, \ldots, b\sub k$, and $t$,
we introduce a term $\minteg[x\sub 1, \ldots, x\sub k, a\sub 1, \ldots, a\sub k, b\sub 1, \ldots, b\sub k, t]$ denoting the definite integral of
$t$ from $(a\sub 1, \ldots, a\sub k) \in \real\su k$ to $(b\sub 1, \ldots, b\sub k)\in \real\su k$:
\[
\minteg[x\sub 1, \ldots, x\sub k, a\sub 1, \ldots, a\sub k, b\sub 1, \ldots, b\sub k, t] \,\,\defeq\,\,
\lim_{n\sub 1\to\infinity}\ldots   \lim_{n\sub k\to\infinity}\,h\sub 1  \cdots  h\sub k \cdot  \sum_{i{\sub 1}=1}^{n\sub 1}\,\ldots  \sum_{{i\sub k}=1}^{n\sub k}\, t^{x\sub 1, \ldots, x\sub k}_{(a\sub 1+h\sub 1 \cdot{i\sub 1}), \ldots, (a\sub k + h\sub k\cdot {i\sub k})} 
\]
where
$h\sub j$ stands for
\((b\sub j-a\sub j)/{n\sub j}\).  The variables are chosen not to conflict with any of the
other variables. Finally, we define the definite integral of $t$ over all real values of $x\sub 1, \ldots, x\sub k$ by
the following:
\[
\int_{x\sub 1, \ldots, x\sub k}\!{t} \,\,\defeq\,\,
\lim_{u\sub 1\to\infinity}\ldots \lim_{u\sub k\to\infinity} \, \lim_{v\sub 1\to\infinity}\ldots \lim_{v\sub k\to\infinity} \, \minteg[x\sub 1, \ldots, x\sub k, -u\sub 1, \ldots, -u\sub k, v\sub 1, \ldots, v\sub k, t].
\]
From this we get, as a corollary to Theorem \ref{thm:singleint}:

\begin{corollary}
Let $g$ be a function symbol of $\L$ standing for a function from $\real\su k$ to
$\real$, and let $c$ be a constant symbol of $\L$. Let $M$ be
any $\real$-interpretation of $\L$.
Then we have the following:
\[
\mbox{If}\,
\int_{-\infinity}^{\infinity}\ldots\int_{-\infinity}^{\infinity}\! g^M(x\sub 1, \ldots, x\sub k)\,d x\sub 1\ldots d x\sub k \,=\, c^M
\,\,\,\,\mbox{then}\,\,\,
M \,\models\, (c=\!\!\int_{x\sub 1, \ldots, x\sub k}\!{g(x\sub 1, \ldots, x\sub k)}).
\]

\end{corollary}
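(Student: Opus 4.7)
The plan is to prove this by induction on $k$, leveraging Theorem \ref{thm:singleint} as both the base case ($k=1$) and the key tool for the inductive step. The overall argument mirrors the three-tier structure used in the single-variable development: first a bounded-box version analogous to Lemma \ref{lem:definite integral}, then extension to all of $\real^k$ by iterated application of Lemma \ref{lem:limits}.

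The first step is to establish a bounded-box lemma: if $\int_{a_1^M}^{b_1^M}\!\cdots\!\int_{a_k^M}^{b_k^M} g^M(x_1,\ldots,x_k)\,dx_k\cdots dx_1 = c^M$, then $M \models c = \minteg[x_1,\ldots,x_k,a_1,\ldots,a_k,b_1,\ldots,b_k,g]$. Unfolding the $\minteg$ abbreviation gives $k$ nested limits wrapping a product of $k$ Riemann-sum layers. I would peel off the outermost $\lim_{n_1 \to \infty} h_1 \sum_{i_1=1}^{n_1}$ layer and recognize the remainder as a $(k-1)$-variable bounded-box $\minteg$ expression applied fiberwise to $(x_2,\ldots,x_k) \mapsto g(a_1 + h_1 i_1,\, x_2,\ldots,x_k)$. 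The inductive hypothesis identifies that body with $\int_{a_2^M}^{b_2^M}\!\cdots\!\int_{a_k^M}^{b_k^M} g^M(a_1^M + h_1 i_1,\, x_2,\ldots,x_k)\,dx_k\cdots dx_2$, so the outermost layer becomes precisely the Riemann-sum approximation of the remaining $x_1$-integral, and the $k=1$ instance of Lemma \ref{lem:definite integral} closes the inductive step.

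With the bounded version in hand, the unbounded conclusion follows by $2k$ successive applications of Lemma \ref{lem:limits} to introduce the outer limits $u_1,\ldots,u_k,v_1,\ldots,v_k \to \infty$ one at a time, exactly paralleling the way Theorem \ref{thm:singleint} was derived from Lemma \ref{lem:definite integral} in the single-variable proof.

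The main obstacle is the iterated-versus-multi-dimensional integral issue: the corollary's hypothesis is stated as an iterated Riemann integral, whereas $\minteg$ is a uniform-partition Riemann sum of a multi-dimensional integral. I would invoke Fubini's theorem for Riemann integrals on boxes, which guarantees that whenever the iterated integral exists it coincides with the multi-dimensional integral and the uniform-partition sums converge to the same value. A secondary, purely bookkeeping, hurdle is applying the inductive hypothesis uniformly in the parameter $a_1^M + h_1 i_1$; this is handled by working semantically with the $\real$-interpretation $M$ fixed throughout, so that the hypothesis applies directly to each fiber function rather than having to be re-stated syntactically.
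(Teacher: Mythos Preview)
The paper gives no explicit proof of this corollary; it simply states the result ``as a corollary to Theorem~\ref{thm:singleint}'' and moves on. Your inductive argument on $k$, using the single-variable theorem for both the base case and the inductive step, is exactly the unpacking that the paper leaves to the reader, so your approach is aligned with the paper's intent.

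One remark on your ``main obstacle'': you characterize $\minteg$ as a uniform-partition Riemann sum for a \emph{multi-dimensional} integral and then invoke Fubini to reconcile it with the iterated integral in the hypothesis. But look again at the definition: the limits $\lim_{n_1\to\infty}\cdots\lim_{n_k\to\infty}$ are \emph{nested}, not simultaneous. Taking the innermost limit first already collapses $h_k\sum_{i_k}$ to a one-dimensional integral in $x_k$ (by Lemma~\ref{lem:definite integral}), then the next limit handles $x_{k-1}$, and so on. So the $\minteg$ abbreviation is itself an iterated-integral construction, matching the iterated integral in the hypothesis directly. Your induction still works, but the Fubini detour is unnecessary---the nested-limit structure does that bookkeeping for you. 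This also means your ``peel off the outermost layer'' step is cleaner than you suggest: once the inner $k-1$ limits have been evaluated via the inductive hypothesis, the remaining outermost limit is literally a single-variable $\integ{x_1}{a_1}{b_1}{\cdot}$ and Lemma~\ref{lem:definite integral} finishes it.
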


\section{A Theory of Knowledge} %
\label{sub:knowledge}

\subsection{Knowledge} 
An early treatment of \emph{knowledge} in the situation calculus is due to Moore \cite{moore1985}. The classical \emph{possible-world} interpretation for knowledge \cite{Kripke:1963uq,nla.cat-vn2250548} is based on the notion that there many different ways the world can be, where each world stands for a complete state of affairs. Some of these are considered possible by a putative agent, and they determine what the agent knows and does not know. Moore's observation was that situations 
 can be viewed as possible worlds. ({Different from standard modal logics \cite{reasoning:about:knowledge}, however, worlds are reified as part of the syntax, but this is a minor technicality.}) A special binary {fluent}  \( K \), taking two situation arguments, determines the accessibility relation between worlds: \( K(s',s) \) says that when the agent is at \( s \), he considers \( s' \) possible. ({Also different from standard modal logics, we note that the order of the terms in the accessibility relation is reversed.}) 
\emph{Knowledge}, then, is simply truth at accessible worlds:

 \begin{ldefn} \textbf{(Knowledge.)} Let \( \phi \) be any situation-suppressed formula. 
The agent \emph{knowing} \( \phi \) at situation \( s \), written $\textit{Knows}(\phi,s)$, is the following abbreviation: 
	\[ 	\mathit{Knows}(\phi,s) \defeq \forall s'.~ K(s',s) \supset \phi[s']. \]
\end{ldefn}

\noindent In English: if in every situation \( s' \) that is considered possible at \( s \) the formula \( \phi \) holds, then the agent {knows} \( \phi \) at \( s. \)\footnote{We do not insist that beliefs are necessarily true in the real world. Perhaps for this reason, ``{believes}'' is a more appropriate reading of \( \know \). The two terms are used  interchangeably here. Also, see Scherl and Levesque \cite{citeulike:528170} on how features such as \emph{positive} and \emph{negative} \emph{introspection} can be enabled by constraining the accessibility relation \( K \), in a manner entirely analogous to standard modal logic \cite{reasoning:about:knowledge}. } 

Moore's account has been adapted to the arrangement of dynamic laws via basic action theories by Scherl and Levesque \cite{citeulike:528170}. In this scheme, the initial theory is assumed to specify the agent's initial beliefs. For example, $\textit{Knows}(\textit{velocity}(\textit{obj5}) = 50, \ins)$ expands to $\forall s'.~K(s',\ins) \supset \textit{velocity(obj5,}s') = 50;$ that is, every accessible world initially agrees that the velocity of object 5 is 50, which means that the agent knows that the velocity of object 5 is 50. In addition, 
$\neg \textit{Knows}(\textit{velocity}(\textit{obj6}) =60, \ins)$ and $\textit{Knows}(f=0 \lor f=1, \ins)$ say that initially, the agent does not know that the velocity of object 6 is 60, and knows that the functional fluent $f$ takes a value of either 0 or 1. Equivalently, one specifies an initial constraint on $K$; for example:
\beq\label{eq:categorical knowledge}
 K(\ivar, \ins) \equiv (f = 1 \lor f =0)[\ivar] 
\eeq 
characterizes the agent's knowledge about the fluent $f$ taking a value of either 0 or 1.

\subsection{Sensing} 

To specify the behavior of \( K \) at non-initial situations, Scherl and Levesque provide a \emph{successor state axiom} for \( K \). This axiom, intuitively, tests whether situations are to remain accessible as actions occur.  Without going into full details, assume the provision of domain-specific \emph{sensing} axioms, also part of the basic action theory \( \D \). For example, \[
	\ti{SF}(\ti{sensetrue-f},s) \equiv f(s) = 1.
\] 
formalizes the sensing outcome for an action to check  whether \( f \) has value $1$  in situation \( s. \) Here \( \ti{SF} \) is a special \( \L \)-predicate, similar to \( \poss. \) Then, for \( s' \) to be accessible from \( s \), we need the following successor state axiom to be included in \( \D \):  
\[
	K(s', do(a,s)) \equiv \exists s''[K(s'', s) \land s' = do(a,s'')  \land Poss(a, s'') \land (\ti{SF}(a,s'') \equiv \ti{SF}(a,s))]. 
\]
This says that if \( s'' \) is the predecessor of \( s', \) such that \( s'' \) was considered possible at \( s \), then   \( s' \) would be considered possible from \( do(a,s) \) contingent on sensing outcomes. 

To appreciate how this axiom works in dynamical domains, assume that for actions without any sensing aspect, such as  an action that toggles the value of \( f, \) one simply lets $\ti{SF}$ be vacuously true: \[
	\ti{SF}(\ti{toggle-f},s) \equiv \true. 
\]
The idea, then, is that the accessibility between situations would not change as physical actions occur. However, as the agent operates in an  environment where it senses various properties, those situations that are \emph{incompatible} with the real world regarding the sensed value will be deemed \emph{impossible} after such sensing actions. This leads to a notion of  knowledge \emph{expansion},\footnote{Revising beliefs, where the agent believes \( \phi \) but acquires information to now believe \( \neg \phi \), is not dealt with in this work. See \cite{DBLP:journals/ai/ShapiroPLL11} for an account.} as the agent becomes more certain about the true nature of the world. See Figure \ref{fig:kssa} for an illustration of the axiom: we imagine three situations \( s, s'  \) and \( s'' \) that are epistemically related prior to any sensing (that is, \( s' \) and \( s'' \) are possible worlds when the agent is at \( s \)). These situations disagree on the value of \( f \), and consequently, the agent does \textit{not} know \( f \)'s value. After executing a sensing action for the truth of \( f \), however, \( do(\textrm{\it sensetrue-f}, s'') \) is not epistemically related to \( do(\textrm{\it sensetrue-f}, s) \). The upshot is that the agent knows the value of \( f \) at \( do(\textrm{\it sensetrue-f},s) \) and believes that this value is 1.

\begin{figure}[h]
  \centering
    \includegraphics[width=.4\textwidth]{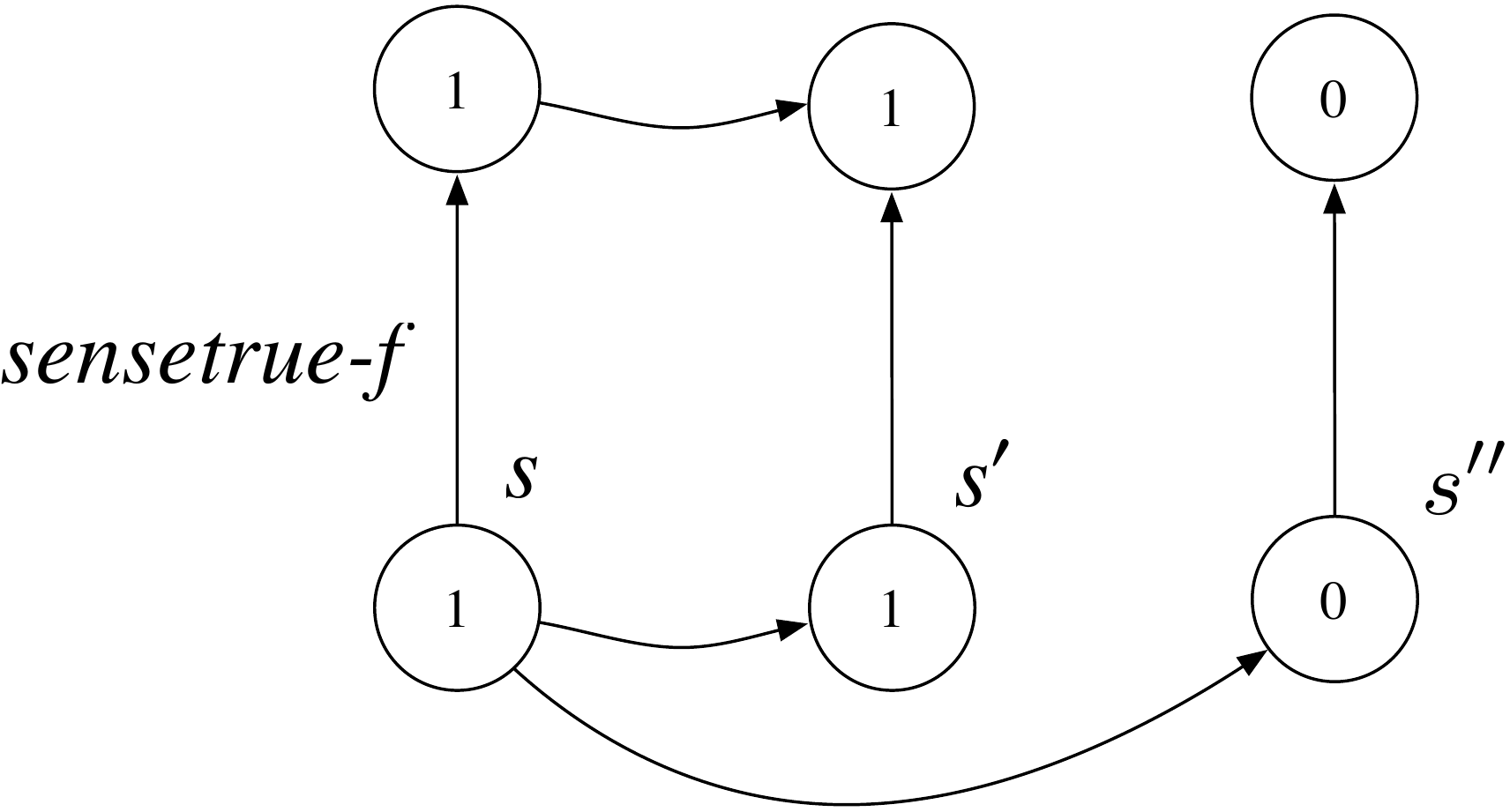}
  \caption{Situations with accessibility relations after sensing. The numbers inside the circles denote  the \( f \) values at these situations.}
  \label{fig:kssa}
\end{figure}

\subsection{Degrees of Belief and Likelihood} %
\label{sub:belief_and_likelihood}

The Scherl and Levesque scheme, however, lacks constructs to quantify the agent's uncertainty. One measure to quantify uncertainty is with \emph{degrees of belief.} What is also lacking in their scheme is the ability to formalize the \emph{probabilistic noise} in  effectors and sensors, as seen in many real-world robotics  applications \cite{thrun2005probabilistic}. These limitations, to discrete approximations, were addressed by BHL \cite{Bacchus1999171}. 

The BHL scheme builds on Scherl and Levesque's ideas, especially regarding how accessibility relations between worlds vary as a result of actions. In fact, the reader may observe many parallels between the two extensions. BHL's remarkably simple proposal consists of introducing two new distinguished fluents, \( p \)  and \( l \), in addition to \( K \). We present a simpler alternative involving only \( p  \) and 	\( l. \) 

As a simple running example, imagine a robot moving towards a wall, as shown in Figure \ref{fig:robot}. Its distance to the wall is given by a functional fluent \( h, \) and it is assumed to be equipped with a sonar sensor that measures how far the robot is from the wall. In other words, ideally, the sensor's reading would correspond to the actual value of \( h. \)

The \( p \) fluent determines a probability distribution on situations, by associating situations with \emph{weights}. More precisely, the term $p(s'\!,s)$ denotes the
relative \emph{weight} accorded to situation \( s' \) when the agent happens
to be in
situation \( s. \) Of course, \( p \) can be seen as a companion to \( K \). As one would for \( K, \)
the properties of $p$ in initial states, 
which vary from domain to domain, are specified with  axioms as part of \(
\D_0. \) For example, \beq\label{eq:uniform-1} \begin{array}{l}
	p(\ivar,\ins) =	 u \equiv  ((\xpos(\ivar) = 2 \lor \xpos(\ivar) = 3) \land u = .5) ~\lor \\ \qquad \mbox{} \qquad \mbox{} \qquad \mbox{} (\xpos(\ivar) \neq 2 \land \xpos(\ivar) \neq 3 \land u=0 ).
\end{array}
\eeq 
says that those initial situations where \( \xpos \) has the integer values 2 or 3 obtain a weight of .5. All other situations, then,  obtain 0 weight. We expect, of course, that weights are nonnegative, and that  non-initial situations are given a weight of 0 initially.
The following nonnegative constraint, also part of \( \D_0 \), ensures this:
\begin{equation}\label{eq:p initial constraint}
\all \ivar,s.\, p(s,\ivar) \geq 0  \,\land\, (p(s,\ivar) \gt 0 \supset
\init(s)) 
\tag{P1}
\end{equation}
Note that this is a stipulation about initial situations $\ivar$ only. But BHL provide
a \emph{successor state axiom} for \( p \), to be listed   shortly, that ensures that this constraint holds in all situations. 

Next, the term $l(a,s)$ is intended to denote the likelihood of action $a$ occurring in
situation $s$. Among other things, \( l \) can be used to model noisy sensors. 
 This is perhaps best demonstrated using an example. Imagine a sonar aimed at the wall, which gives a reading for the true value of \( \xpos. \) Supposing the sonar's readings are subject to additive Gaussian noise.\footnote{Note that  Gaussians  are continuous distributions involving
  $\pi,$ $e,$ exponentiation, and so on. Therefore, BHL always consider
  discrete probability distributions that \emph{approximate} the continuous
  ones.} If now a reading of \( z \) were observed on the sonar, intuitively,  those situations where \( \xpos = z \) should be considered more probable than those where  \( \xpos \neq z \).\footnote{As usual, the reading observed is not in the control of the agent. Here, we assume that the value is given to us, and in that sense, the language is geared for projection (cf. Section \ref{sub:example_first}).  For example, we might be interested in the beliefs of the agent after obtaining a specific sequence of readings on the sonar. Integrating this language with an \textit{online}  framework that obtains such readings from an external source is addressed in \cite{Belle:2015ab}.} This occurrence is captured using likelihoods in the formalism. Basically, if \( \sonar(z) \) is the sonar sensing action with \( z \) being the value read, we  specify a likelihood axiom describing its error profile as follows: 
\begin{equation}\label{eq:sonar error profile simple} 
\begin{array}{l}
l(\xsense(z),s) = u \equiv  (z\ge0 \land u=\N(z-\xpos(s);\mu,\sigma^2)) 
~\lor \\ \qquad \mbox{} \qquad \mbox{} \qquad \mbox{} \qquad \mbox{}  (z<0 \land u=0). 
\end{array}
\end{equation}
\noindent  This stipulates that the difference between a nonnegative reading
of \( z \) and the true value \( \xpos \) is normally distributed with a
variance of \( \sigma^2 \) and mean of \( \mu \).\footnote{We understand $\N(u;\mu, \sigma^2)$ as an abbreviation for the mathematical expression $e^{\frac{-(u-\mu)^2}{2\cdot \sigma^2}} /  \sqrt{2 \cdot \pi\cdot \sigma^2}$.}

 Clearly, the error profile of various hardware devices is application dependent, and it is this profile that is modeled as shown above using \( l. \) Notice, for example, when $\mu = 0$, which indicates that the sensor has no systematic bias, then \( l(\sonar(5),s) \) will be higher when \( \xpos(s) = 5 \) than when \( \xpos(s) = 25. \) Roughly, then, the idea is that after an observation, the weights on situations would get redistributed based on their compatibility with the observed value.

One may contrast such likelihood specifications to (trivial) ones for {deterministic} physical actions,\footnote{Noisy actions  will also involve non-trivial likelihood axioms.  Their treatment, however,  is deferred to a subsequent section.} such as an action \emph{move(z)} of moving towards the wall by precisely \( z \) units. For such actions, we  simply write \[
	l(\ti{move}(z),s) = u \equiv u = 1 
\]
in which case the \( p \) value of \( s \) is the same as that for \( do(\ti{move(3)},s) \). Thus, this is a form of \emph{imaging} \cite{lewis1976probabilities}, where the weights of worlds are simply ``transferred" to their successors. \smallskip

Formally, we add \emph{action likelihood} axioms to \( \D \): 

\begin{definition} Action likelihood axioms for each action type \( A \) are sentences of the form: \[
	l(A(\vec x),s) = u \equiv \psi_A(\vec x, u, s).
\]
Here \( \psi_A(\vec x, u, s) \) is any formula characterizing the conditions under which action \( A(\vec x) \) has likelihood \( u \) in \( s \).
\end{definition}

In general,   likelihood axioms can  depend on any number of  features of the world besides the fluent that the sensor is measuring. For example, imagine that the sonar's accuracy depends on the room temperature. We could then specify an error profile as follows: \begin{equation}\label{eq:sonar error profile complicated}
	\begin{array}{l}
		l(\xsense(z), s) = u \equiv \\
	\qquad \mbox{} \qquad \mbox{} \qquad \mbox{} 	(z\geq 0 \land \mathit{temp}(s) \geq  0 \land u = \N(z - \xpos(s);\mu, 1)) ~\lor \\
	\qquad \mbox{} \qquad \mbox{} \qquad \mbox{} 	(z \geq 0 \land \mathit{temp}(s) < 0 \land u = \N(z-\xpos(s);\mu,16)) ~\lor \\
	\qquad \mbox{} \qquad \mbox{} \qquad \mbox{} 	(z <0 \land u = 0). 
	\end{array}
\end{equation}
That is, the sonar's accuracy worsens severely when the temperature drops below 0, as seen by the larger variance.  \smallskip 

Having introduced the new fluents, we are now ready to provide the successor state axiom for \( p, \) which is  analogous to the one for \( K \): \begin{equation}\label{eq:p ssa}
	\begin{array}{l}
		p(s',do(a,s)) = u \,\,\equiv\\
		\quad \quad \exists s''~[s' = do(a,s'') \land \poss(a,s'') \,\land\\
         \quad\quad\quad\quad\quad u = p(s'',s) \times l(a,s'')] \\
		\quad \lor\,\, \lnot\exists s''~[s' = do(a,s'') \land \poss(a,s'')]  
           \land u = 0. 
	\end{array}
	\tag{P2}
\end{equation}

This says that  the weight of situations \( s' \) relative to \( do(a,s) \)
is the weight of their predecessors \( s'' \) times the likelihood of \( a \)
contingent on the successful execution of \( a \) at \( s''. \)

\begin{figure}[h]
  \centering
    \includegraphics[width=.4\textwidth]{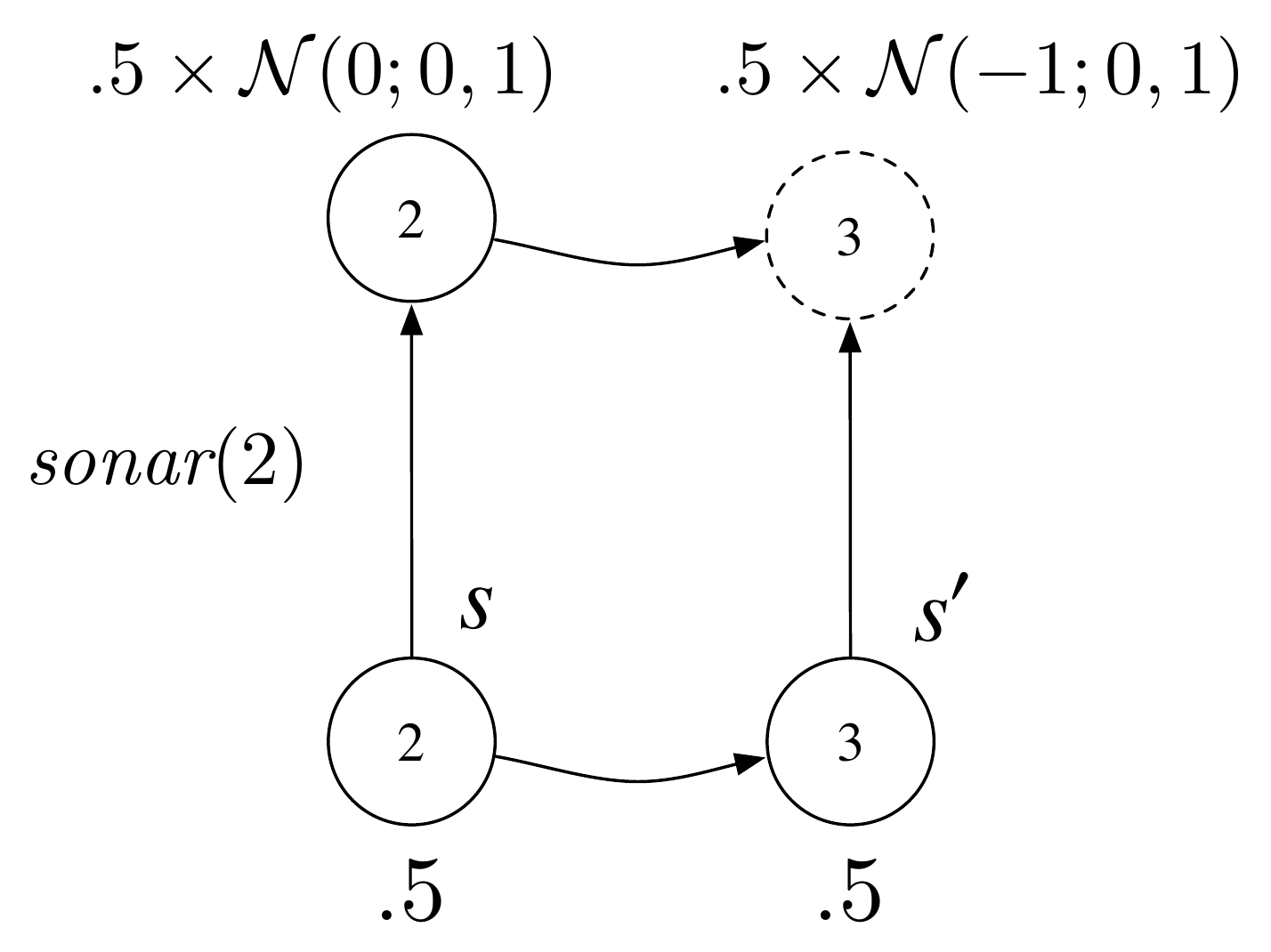}
  \caption{Situations with accessibility relations after noisy sensing. The numbers inside the circles denote the \( h \) values at these situations. Dotted circles denote a lower weight in relation to their epistemic alternatives.}
  \label{fig:sensing}
\end{figure}

To see an application of this axiom using the specifications \eqref{eq:uniform-1} and \eqref{eq:sonar error profile simple}, consider two situations \( s \) and \( s' \) associated with the same weight initially, as shown in Figure \ref{fig:sensing}. These situations have \( h \) values \( 2 \) and \( 3 \) respectively. 
 Suppose the robot obtains a reading of 2 on the sensor. Given a sensor with mean \( \mu = 0 \) and variance \( \sigma\su 2 = 1, \) the likelihood axiom is such that the weight of the successor of \( s \) is higher than that of \( s' \) because the \( h \) value at \( s \) coincides with the sensor reading. More precisely, the weight for the successor of \( s \) is given by the prior weight .5 multiplied by the likelihood factor \( \N(z-h(s);\mu,\sigma\su 2) = \N(2-2;0,1) = \N(0,0,1). \) The weight for the successor of \( s' \) is obtained analogously.

Interestingly, by means of the above axiom, if predecessors are not \emph{epistemically} related, which is another way of saying that \( p(s'', s) \) is 0, then their successors will also not be. Similarly, when \( a \) is not executable at \( s'', \) its successor is no longer accessible from \( do(a,s). \) One other  consequence of \eqref{eq:p initial constraint} and \eqref{eq:p ssa} is that \(
(p(s'\!,s) > 0) \) will be true only when $s'$ and $s$ share the same history
of actions. This is because \eqref{eq:p initial constraint} insists that initial situations are only epistemically related to other initial ones, and \eqref{eq:p ssa} respects this relation over actions.

We are now prepared to define the \emph{degree of belief} in a formula \( \phi \) at a situation \( s \), written \( \bel(\phi,s). \)  ({Henceforth, whenever a formula \( \phi \) appears in the context of \( \bel, \) we assume that it is either situation-suppressed,  or it only mentions the situation term \emph{now}.}) Intuitively, this is simply the \emph{weight} of \emph{accessible situations}. Formally:

\begin{definition}\label{defn:belief} \textbf{(Degrees of belief.)} Suppose \( \phi \) is any situation-suppressed  \( \L \)-formula. Then the \emph{degree of belief} in \( \phi \) is an abbreviation for: \[
\bel(\phi,s) \,\defeq\,
  \frac{1}{\gamma}\!\sum_{\{s':\phi[s'\!]\}} p(s'\!,s), 
\]
where \( \gamma \), the normalization factor, is understood (throughout) as the
same expression as the numerator but with \( \phi \) replaced by \( \true. \)
So, here, \( \gamma \) is \( \sum_{s'} p(s'\!,s). \) 
\end{definition}

\noindent Note that we do not have to insist that \( s' \) and \( s \) share histories
since $p(s'\!,s)$ will be $0$ otherwise, as discussed above. The summation term in this logical formula is not a new logical symbol, but simply an abbreviation for a second-order formula, by way of Section \ref{sub:summation}.  

\subsection{Discussion} %
\label{sub:discussion}

Let us conclude this section by remarking that since $p$ is a fluent, the syntax of basic action theories allows us to express probabilistic knowledge in a very general way, quite beyond standard probabilistic formalisms \cite{books/daglib/0023091,pearl1988probabilistic}. For example, to represent categorical uncertainty like \eqref{eq:categorical knowledge}, we would do:
\beq\label{eq:uniform-3} 
  p(\ivar,\ins) = \begin{cases}
 1 & \textrm{if } (h=2\lor h=3)[\ivar] \\
 0 & \textrm{otherwise}
 \end{cases}
 \eeq 
 In English: all initial situations where the value of $h$ is either 2 or 3 are considered epistemically possible, and accorded a weight of 1. All other initial situations are accorded a weight of 0.  
 Observe, however, that this $p$-specification does not say which value of $h$ is more likely. Thus, unlike standard probabilistic frameworks, we do not assume that it is always possible to find a single probability distribution for the robot to use. 
  
  It is also possible to handle partial specifications. Let us contrast \eqref{eq:uniform-1} with the following initial axiom for \( p \): \beq\label{eq:uniform-2} 
  p(\ivar,\ins) = \begin{cases} 
  .1 & \textrm{if } h(\ivar) \in \set{1,2,\ldots,10} \\
  0 & \textrm{otherwise} 
  \end{cases}
  \eeq
  Then, letting a basic action theory include the sentence: \[ \eqref{eq:uniform-1} \lor \eqref{eq:uniform-2} \] 
  means that the robot believes $h$ is uniformly distributed on $\set{2,3}$ or on $\set{1,\ldots,10}$ without being able to say which. To reiterate, we do not assume that it is always possible to find a single probability distribution for the robot to use.

  Of course, a much weaker specification is possible by replacing one of these probabilistic alternatives with categorical ones. For example, suppose the basic action theory were to instead include:\[ \eqref{eq:uniform-3} \lor \eqref{eq:uniform-2}. \] 
 In this case, the agent believes that $h$ may take a value from $\set{2,3}$ or is uniformly distributed on $\set{1,2,\ldots,10}$ without being able to say which. 
 
 In sum, the framework allows one to freely combine categorical and probabilistic specifications, leading to a very general model of belief. For simplicity of presentation, we consider a fairly simple set of specifications in this article, and refer readers to \cite{DBLP:journals/japll/BelleL15} for more involved ones.

\section{Belief Reformulated} %
\label{sec:degree_of_belief_reformulated}

The definition for degrees of belief, as given by BHL, is intuitive and simple. It is closely fashioned after the semantics for  belief in modal probability logics \cite{174658}, where the probabilities of formulas is calculated from the weights of possible worlds satisfying the formula. 
 Unfortunately, this definition is not easily amenable to   generalizations. 
Notice, for example, that \( \bel \) is well-defined only 
when the sum over all 
those situations \( s' \)
such that \( \phi[s'] \) holds is finite. This immediately  
precludes domains that involve an infinite set of situations agreeing on a formula. Moreover, the definition does not have an obvious analogue for 
continuous probability distributions. 
Observe that such an analogue would involve \emph{integrating} over the space of situations, which makes little sense. Indeed, it is not  certain what the space of situations would look like in general, but even if this was fixed, how such a thing can be tinkered with so as to obtain an appropriate notion of \emph{integration} is far from obvious. 

 Therefore, what we propose is to shift the calculating of probabilities from situations to \emph{fluent values}, that is, to the well-understood  domain of  numbers. The current section is an exploration of this idea. What we will show in this section is that Definition \ref{defn:belief} can be reformulated as a summation over numeric indices. That will allow, among other things, a seamless generalization from summation to integration, which is to be the topic of the next section. 

To prepare for that, in addition to the usual case notation used, for example, in \eqref{eq:uniform-2}, we will introduce another kind of conditional term for convenience. 
This involves a quantifier and a default value of $0$, like in
formula \eqref{eq:p ssa}. 
If \( z \) is a variable,
\( \psi \) is a formula and \( t \) is a term, we use \(
\probabbr{z}{\psi}{t} \)
as a logical term characterized as follows: 
\begin{eqnarray*}
\lefteqn{\probabbr{z}{\psi}{t} = u \,\,\defeq}\\[-\smallskipamount]
& & [(\exists
  z \psi) \supset \forall z(\psi \supset u=t)] 
  \land [(\neg\exists z
  \psi) \supset u=0)].
\end{eqnarray*}

\noindent The notation says that when \( \thereis z\psi \) is true, the value
of the term is $t$; otherwise, the value is $0$.  
When $t$
uses $z$ (the usual case), this will be most useful if there is a {\it
 unique\/} $z$ that satisfies $\psi.$

Returning to the task at hand, we will now need a way to enumerate the primitive fluent terms of the language. Intuitively, these correspond to the probabilistic variables in the language. Perhaps the simplest way is to assume there are $n$ fluents
$f_1, f_2,\ldots, f_n$ in \(\L\) which take no arguments other than
the situation argument,\footnote{Essentially, functional fluents in \( \L \) are  assumed to not take any object arguments. More generally, if we assume that the arguments of \( k \)-ary fluents are drawn from finite sets, an analogous enumeration of ground functional fluent terms is possible. %

Understandably, from the point of view of situation calculus basic action theories, where fluents are also usually allowed to
take {arguments} from \emph{any} set, including infinite ones, this is a limitation. But in probabilistic
terms, this would correspond to having a joint probability distribution over
infinitely many, perhaps uncountably many, random variables. We know of no
general logical account of this sort, and we have as yet no good ideas about how to deal
with it. 
It remains to be seen whether ideas from probability theory on high dimensions \cite{Billingsley95,da2006introduction} and infinite-dimensional probabilistic graphical models  \cite{DBLP:conf/uai/SinglaD07} can be leveraged for our purposes. 
}  
and that they take their values from some finite sets.  We can then
 rephrase  Definition \ref{defn:belief} as follows: 

\begin{definition}\label{defn:bel with two sums} Suppose \( \phi \) is as before. Let \( \bel(\phi,s) \) be an abbreviation for: \[
 \frac{1}{\gamma}
  \sum_{\vec{x}} \sum_{s'}
    \left\{
    \begin{array}{ll}
      p(s'\!,s) & \mbox{if\, ${\bigwedge f_i(s')=x_i \,\land\, 
         \phi[s'\!]}$}\\ 0
      & \mbox{otherwise}
    \end{array}
    \right.
\]
where \( \gamma \) is the numerator but with \( \phi \) replaced by \( \true \), as usual. 
\end{definition}

(For readability, we often drop the index variables in sums and connectives when the context makes it clear: in this case, $i$ ranges over the set $\{1, \ldots, n\}$, that is, the indices of the fluents in $\L$.)  
Definition \ref{defn:bel with two sums} suggests that for each possible value of the fluents, we are to sum over all possible
situations and for each one, if the fluents have those values and $\phi$
holds, then  the $p$ value is to be used, and 0 otherwise. Roughly speaking, if one were to group situations satisfying \( \bigwedge f_i(s) = x_i \) into sets for every possible vector \( \vec x, \) the union of these sets would give the space of situations. Our claim about the relationship between the two abbreviations can be made precise as follows: 

\begin{theorem}\label{thm:belief two sums} Let \( \D \) be any basic action theory and \( \phi \) any \( \L \)-formula. Then the abbreviations for \( \bel(\phi,s) \) from Definition \ref{defn:belief} and \ref{defn:bel with two sums} define the same number. 
	
\end{theorem}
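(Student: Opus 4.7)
The plan is to show that the double-sum in Definition \ref{defn:bel with two sums} collapses to the single sum in Definition \ref{defn:belief} by exploiting the functionality of the fluents $f_1, \ldots, f_n$. First I would observe that since each $f_i$ is a functional fluent (hence a total function in any $\real$-interpretation), every situation $s'$ determines a \emph{unique} tuple $\vec{x}^{s'} = (f_1(s')^M, \ldots, f_n(s')^M)$ of values in the finite domain. So, for any fixed $s'$, the inner predicate $\bigwedge_i f_i(s') = x_i$ holds for exactly one vector $\vec{x}$ from the (finite) product of fluent domains, and fails for every other vector.

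Next I would swap the two sums in Definition \ref{defn:bel with two sums}. This is legitimate since both are finite: the outer sum is over the Cartesian product of the finite fluent ranges, and the inner sum over $s'$ is effectively finite because the axioms \eqref{eq:p initial constraint} and \eqref{eq:p ssa} guarantee that $p(s'\!,s) = 0$ unless $s'$ shares the same action history as $s$ and descends from an initial situation of positive weight, leaving only finitely many contributing $s'$ (one per initial situation in the support). After swapping, the summand becomes
\[
\sum_{\vec{x}} \left\{\begin{array}{ll} p(s'\!,s) & \mbox{if } \bigwedge_i f_i(s')=x_i \,\land\, \phi[s']\\ 0 & \mbox{otherwise} \end{array}\right.
\]
and by the uniqueness observation above this collapses to $p(s'\!,s)$ when $\phi[s']$ holds and to $0$ otherwise. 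Summing the result over $s'$ yields $\sum_{\{s' : \phi[s'\!]\}} p(s'\!,s)$, which is exactly the numerator of Definition \ref{defn:belief}.

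Applying the same reasoning to the normalization factor $\gamma$ (substituting $\true$ for $\phi$) shows that the two $\gamma$'s also agree, so the two abbreviations denote the same real number in every $\real$-interpretation of $\D$.

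The main obstacle I expect is bookkeeping rather than mathematical depth: both ``sums'' are shorthand for second-order formulas (in the sense of Section \ref{sub:summation}), and the swapping step needs to be justified at that level. To do this cleanly I would introduce an explicit enumeration $s'_1, \ldots, s'_k$ of the (finitely many) situations sharing the action history of $s$ and an enumeration $\vec{x}^{(1)}, \ldots, \vec{x}^{(m)}$ of the fluent-value tuples, reduce both abbreviations to ordinary finite double sums of numeric terms via the $\summ$ characterization, and then invoke the standard commutativity of finite summation that follows from the fact that $+$ has its usual interpretation in every $\real$-interpretation. The uniqueness-of-$\vec{x}$ step then reduces each inner sum to a single nonzero term, completing the equality.
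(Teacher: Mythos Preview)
Your proposal is correct and essentially the same as the paper's argument: both exploit the fact that each situation $s'$ determines a unique fluent-value vector, so the double sum decomposes as a partition of the contributing situations by $\vec{x}$. The paper phrases this by grouping situations into classes $T_j$ indexed by fluent-value vectors and noting $T=\bigcup_j T_j$, whereas you swap the sums and collapse the $\vec{x}$-sum per $s'$; these are the same combinatorial observation presented in opposite orders.
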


\begin{proof} For the proof, we focus solely on the numerators of the two abbreviations. That is, \begin{equation}\label{eq:N1T1}
	\sum_{\{s':\phi[s'\!]\}} p(s'\!,s)
	\tag{$\dag$}
\end{equation}
on the one hand, and \begin{equation}\label{eq:N2T1}
	\sum_{\vec{x}} \sum_{s'}
	    \left\{
	    \begin{array}{ll}
	      p(s'\!,s) & \mbox{if\, ${\bigwedge f_i(s')=x_i \,\land\, 
	         \phi[s'\!]}$}\\ 0
	      & \mbox{otherwise}
	\end{array}
    \right. 
\tag{$\ddag$}
\end{equation}
on the other. We show that these expressions define the same number. With this, the case for the denominators  follows trivially, since \( \true \) is a special case of \( \phi \). 
Then, the claim is proven.

	Let  \( S  \) be a set  such that  \( {s'} \in S \) iff \( p(s', s) >0 \). (That is, for any ground situation term $s$, this is the set of all ground situation terms $s'$ such that $p(s',s)>0$.) 
	Then, let \( T \subseteq S \) be the set such that \( {s'} \in T \) iff \(  \phi[s'] \).  Intuitively,  \( T \) is the set of  all situations that are epistemically related to  \( s \), but where \( \phi \) holds. 
	 It is easy to  see that \[
		 \eqref{eq:N1T1} =  \sum_{s' \in T} p(s',s). 
	\] 
	
\noindent Suppose now $f\sub i$ ranges over $\set{c\sub {1i}, \ldots, c\sub {ki}} $, and so by extension, suppose 
$\vec f = \langle f\sub 1, \ldots, f\sub n \ra$   ranges over \(  \{ {\vec {c_1 {}}}, {\vec  {c_2 {}}}, \ldots, {\vec {c_k {}}} \}.  \) 
For any 
\( \vec {c_{j} {}}  \) in that set, let 
\( T_{j}  \subseteq T \) be a set such that \( {s'} \in T_{j} \) iff  \( \bigwedge f_i(s') = c_{ji} \). 
That is, \( T_j \) identifies those  situations from \( T \) where fluents satisfy the vector of values \( \vec {c_{j} {}}. \) Observe  that \[
\eqref{eq:N2T1} = \sum_{j=1}^k \sum_{s' \in T_j} p(s',s). 
\]

\noindent But, of course, \( T = \bigcup T_j. \) Therefore, \[
\sum_{j=1}^k \sum_{s' \in T_j } p(s',s) = \sum_{s' \in T} p(s',s).
\]
Therefore, \eqref{eq:N1T1}  and \eqref{eq:N2T1} define the same number. \qed 
	
\end{proof}

Be that as it may, Definition \ref{defn:bel with two sums} still involves summations over situations. To arrive at a definition that eschews the summing of situations, 
we start with the case of initial situations. In this matter, we will be insisting on a precise space of initial situations. For this, let us recall the axiomatization 
of the situation calculus presented in \cite{DBLP:journals/etai/LevesquePR98}
for multiple initial situations, which includes a sentence  saying there is
precisely one initial situation for \emph{any} possible vector of fluent values.  This can be written as follows:
\begin{equation}\label{eq:stara}
[\forall{\vec{x}}\,\exists{\ivar}\!\bigwedge f_i(\ivar) = x_i]
\,\land\,
[\all \ivar, \ivar'\!.\!\bigwedge f_i(\ivar) = f_i(\ivar') 
          \supset \ivar = \ivar']
\tag{\stara}
\end{equation} 

\noindent (Recall that $i$ ranges over the indices of the fluents in $\L$, that is, $\{1, \ldots, n\}.$) Under the assumption~\eqref{eq:stara}, we can rewrite Definition \ref{defn:belief} for
$s=\ins$ as
\begin{equation}\label{eq:belief-ini}
\bel(\phi,\ins) \defeq \frac{1}{\gamma}\!
  \sum_{\vec{x}}
    \probabbr{\ivar}{\bigwedge\! f_i(\ivar)=x_i \land \phi[\ivar]}{p(\ivar,\ins)}
\tag{B0}
\end{equation}

The two abbreviations, in fact, are equivalent: 

\begin{theorem} Let \( \D \) be any basic action theory, \( \phi \) any \( \L \)-formula, and
suppose \( \D_0 \)  includes \eqref{eq:stara}. Then the abbreviations for \( \bel(\phi,\ins) \)
in Definition \ref{defn:belief} and \eqref{eq:belief-ini} define the same 
number.
	
\end{theorem}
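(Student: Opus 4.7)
The plan is to use Theorem \ref{thm:belief two sums} as a stepping stone, so that only the passage from Definition \ref{defn:bel with two sums} to \eqref{eq:belief-ini} needs to be verified (at $s = \ins$). By that theorem, the numerator of Definition \ref{defn:belief} evaluated at $\ins$ already equals
\[
\sum_{\vec{x}}\,\sum_{s'} \begin{cases} p(s',\ins) & \mbox{if } \bigwedge f_i(s') = x_i \land \phi[s'] \\ 0 & \mbox{otherwise} \end{cases}
\]
so the task reduces to showing that for each fixed $\vec{x}$, the inner sum over $s'$ collapses to the conditional term $\probabbr{\ivar}{\bigwedge f_i(\ivar) = x_i \land \phi[\ivar]}{p(\ivar, \ins)}$.

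First I would invoke (P1): since $p(s', \ins) > 0$ implies $\init(s')$, every nonzero contribution in the inner sum comes from an initial situation. So the summand is nonzero only when $s'$ is some $\ivar$ satisfying $\bigwedge f_i(\ivar) = x_i$ and $\phi[\ivar]$. Next, I would appeal to (\stara): the first conjunct guarantees that some initial $\ivar$ with $\bigwedge f_i(\ivar) = x_i$ exists, and the second conjunct guarantees it is unique. Thus exactly one situation contributes to the inner sum, and its contribution is $p(\ivar, \ins)$ if that unique $\ivar$ additionally satisfies $\phi[\ivar]$, and $0$ otherwise. This is precisely the meaning of the conditional term $\probabbr{\ivar}{\bigwedge f_i(\ivar) = x_i \land \phi[\ivar]}{p(\ivar, \ins)}$ as defined earlier in the section (existence of a witness yields $t$, otherwise $0$).

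The same argument with $\phi$ replaced by $\true$ handles the normalization factor $\gamma$, so the two ratios agree. Throughout, the reasoning is carried out in any $\real$-interpretation satisfying $\D \cup \{(\stara)\}$, so the equality of the two abbreviations holds as entailments of $\D$.

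The only slightly delicate step is the collapse of the inner sum to a single term: it relies on interpreting the abbreviation $\sum_{s'}$ over an (effectively finite) indexing of ground initial situations, one per vector $\vec{x}$, which is exactly what (\stara) provides. With the finitely many fluent values assumed in this section, this causes no difficulty; the main obstacle, such as it is, lies in making precise that (P1) plus (\stara) together justify passing from ``sum over $s'$ whose value is zero except at one $s'$'' to the conditional-term notation.
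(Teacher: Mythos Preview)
Your argument is correct. The key ingredients---using \eqref{eq:p initial constraint} to restrict attention to initial situations, and \eqref{eq:stara} to obtain a bijection between value vectors $\vec x$ and initial situations---are exactly the ones the paper uses.

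The structural difference is minor: the paper does not route through Theorem~\ref{thm:belief two sums} or Definition~\ref{defn:bel with two sums}. Instead it compares the two numerators directly by introducing the set $T$ of initial situations satisfying $\phi$, then shows that both \eqref{eq:N1T2} and \eqref{eq:N2T2} equal $\sum_{s'\in T} p(s',\ins)$: the first by \eqref{eq:p initial constraint}, the second by the bijection from \eqref{eq:stara}. Your approach factors through the intermediate double sum and collapses the inner sum term-by-term for each fixed $\vec x$. This is a perfectly valid alternative and arguably makes the role of \eqref{eq:stara} (existence plus uniqueness yielding a single contributing $\ivar$) more explicit; the paper's version is slightly shorter because it avoids reproving the equivalence of Definitions~\ref{defn:belief} and~\ref{defn:bel with two sums} as a preliminary step.
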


\begin{proof} As in Theorem \ref{thm:belief two sums}, we focus on the numerators for the two abbreviations. That is, \begin{equation}\label{eq:N1T2}
	\sum_{\{s':\phi[s'\!]\}} p(s'\!,\ins)
	\tag{$\dag$}
\end{equation}
on the one hand, and \begin{equation}\label{eq:N2T2}
	\sum_{\vec{x}}
	    \probabbr{\ivar}{\bigwedge\! f_i(\ivar)=x_i \land \phi[\ivar]}{p(\ivar,\ins)}
	\tag{$\ddag$}
\end{equation}
on the other. We show that these expressions define the same number. The denominators represent a special case, and so the claim will follow.

Let \( S  \) be 
the set of initial situations. Suppose \( f\sub i \)  ranges over \( \set{c\sub i, c'\sub i, \ldots, c''\sub i} \). 
By way of  \eqref{eq:stara}, for any vector of values \( \langle c\sub 1, \ldots, c\sub n \rangle \) for the vector of fluents \( \langle f\sub 1, \ldots, f\sub n \rangle \), there exists a (unique) situation \( s \in S \) such that \(  \bigwedge f_i(s) = c_i. \) Let \( T \subseteq S \) be such that  $s\in T$ iff \(  \phi[s] \). It is  easy to see that \[
 	   \eqref{eq:N2T2} =  \sum_{s' \in T} p(s', \ins). 
\]
Since \eqref{eq:p initial constraint} ensures that \( p({s'}, \ins) > 0 \) only if \( {s'}\) is an initial situation, we get that \[
	 \eqref{eq:N1T2} = \sum_{s' \in T} p(s', \ins). 
\]
Therefore \eqref{eq:N1T2} and \eqref{eq:N2T2} define the same number. \qed 
	
\end{proof}

\noindent This shows that for $\ins$, summing over possible worlds can be
replaced by summing over fluent values. 

Unfortunately, \eqref{eq:belief-ini} is only geared for initial situations. For non-initial situations, the assumption that no two agree
on all fluent values is untenable.  To see why, imagine an action
$\xmove(z)$ that moves the robot $z$ units to the left (towards the wall) but
that the motion stops if the robot hits the wall. The successor state axiom for fluent \( \xpos \), then, might be like this:  
\begin{equation}\label{eq:xpos ssa} 
	\begin{array}{l}
		\xpos(do(a,s))  = u \equiv \\
	\qquad \mbox{} \lnot\exists{z}(a=\xmove(z)) \land u = \xpos(s) 
       \,\,\,\lor\\ 
	\qquad \mbox{} \exists{z}(a = \xmove(z) \land u = \max(0,\xpos(s) - z)).
	\end{array} %
\end{equation}

\noindent
In this case, if we have two initial situations that are identical except that
$\xpos\!=\!3$ in one and $\xpos\!=\!4$ in the other, then the two distinct successor situations
that result from doing $\xmove(4)$ would agree on all fluents (since both
would have $\xpos=0$). Ergo, we cannot sum over fluent values for non-initial situations
unless we are prepared to count some fluent values more than once. 

It turns out there is a simple way to circumvent this issue by appealing to Reiter's solution to the frame problem. Indeed, Reiter's solution 
gives us  a way of computing what holds in
non-initial situations in terms of what holds in initial ones, which 
can be used 
for computing belief at arbitrary successors of \( \ins \). More precisely, 

\begin{definition}\label{defn:discrete belief} \textbf{(Degrees of belief (reformulated).)} Let \( \phi \) be any \( \L \)-formula.  Given any sequence of ground action terms \( \alpha = [a_1, \ldots, a_k], \)  let\[
	\bel(\phi, s) \doteq \displaystyle \frac{1}{\gamma}\sum_{\vec{x}} \func(\vec x, \phi, s)
\] 
where if $s = do(\alpha,\ins)$ then  \[ \func(\vec x, \phi, do(\alpha,\ins)) \doteq  \lan 
         \ivar.\,
	      {\bigwedge\! f_i(\ivar)=x_i \,\land\, \phi[do(\alpha,\ivar)]}\,\to 
	           {p(do(\alpha,\ivar),do(\alpha,\ins))}\,\ran. \] %
\end{definition}

(As before, $i$ ranges over the indices of the fluents in $\L$.) 
To say more about how (and why) this definition works, we first note that by
\eqref{eq:p initial constraint} and \eqref{eq:p ssa}, $p$ will be $0$ unless
its two arguments share the same history.  So the $s'$ argument of $p$ in
Definition \ref{defn:belief} is \emph{expanded} and written as $do(\alpha,\ivar)$ in Definition \ref{defn:discrete belief}.  By ranging over all fluent values, we range over all
initial $\ivar$ as before, but without ever having to deal with fluent values in
non-initial situations.  Of course, we test that the $\phi$ holds and use the
$p$ weight in the appropriate non-initial situation. In particular, owing to \( p \)'s successor state axiom \eqref{eq:p ssa}, the weight for non-initial situations accounts for the likelihood of actions executed in the history. 
  We now  establish the following result: 

\begin{theorem} Let \( \D \) be any basic action theory with \eqref{eq:stara}
  initially, \( \phi \) any \( \L \)-formula, and \( \alpha \) any sequence of
  ground actions terms. Then the abbreviations for \( \bel(\phi,do(\alpha,\ins))
  \) in Definition \ref{defn:belief} and Definition  \ref{defn:discrete belief} define the
  same  number.
	
\end{theorem}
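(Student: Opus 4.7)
As in the preceding two theorems, I focus on the numerators. On the one hand we have
\begin{equation}\label{eq:pfN1}
\sum_{\{s':\phi[s']\}} p(s',do(\alpha,\ins))
\tag{$\dag$}
\end{equation}
and on the other
\begin{equation}\label{eq:pfN2}
\sum_{\vec x}\,\langle \ivar.\,\bigwedge f_i(\ivar)=x_i \land \phi[do(\alpha,\ivar)]\,\to\,p(do(\alpha,\ivar),do(\alpha,\ins))\rangle.
\tag{$\ddag$}
\end{equation}
The denominators correspond to the special case $\phi=\true$, so it suffices to show \eqref{eq:pfN1} and \eqref{eq:pfN2} denote the same number.

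The plan is to show each term collapses to a sum indexed by initial situations $\ivar$ such that $\phi[do(\alpha,\ivar)]$ holds. For \eqref{eq:pfN1}, the first step is a history-matching lemma: by induction on the length of $\alpha$, using \eqref{eq:p initial constraint} for the base case and the successor state axiom \eqref{eq:p ssa} for the inductive step, $p(s',do(\alpha,\ins))>0$ implies $s'=do(\alpha,\ivar)$ for some initial $\ivar$. Consequently, letting $T$ be the set of initial situations $\ivar$ with $\phi[do(\alpha,\ivar)]$, we obtain
\[
\eqref{eq:pfN1}\;=\;\sum_{\ivar\in T} p(do(\alpha,\ivar),do(\alpha,\ins)).
\]

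For \eqref{eq:pfN2}, I invoke \eqref{eq:stara}: for every vector $\vec c$ of fluent values there is a unique initial situation $\ivar_{\vec c}$ with $\bigwedge f_i(\ivar_{\vec c})=c_i$, and conversely every initial situation arises this way. By the definition of the $\langle\cdot\rangle$ abbreviation, for each $\vec c$ the inner term equals $p(do(\alpha,\ivar_{\vec c}),do(\alpha,\ins))$ when $\phi[do(\alpha,\ivar_{\vec c})]$ holds, and $0$ otherwise. Thus
\[
\eqref{eq:pfN2}\;=\;\sum_{\vec c}\,[\phi[do(\alpha,\ivar_{\vec c})]]\cdot p(do(\alpha,\ivar_{\vec c}),do(\alpha,\ins))\;=\;\sum_{\ivar\in T} p(do(\alpha,\ivar),do(\alpha,\ins)),
\]
where I use the bijection between vectors $\vec c$ and initial situations to reindex the sum.

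The main obstacle is the history-matching lemma for \eqref{eq:pfN1}: one must verify carefully that the inductive step of \eqref{eq:p ssa}, whose first disjunct forces $s' = do(a,s'')$ with $s''$ a predecessor having nonzero $p$-weight, prevents any $s'$ with a different action history from contributing. Once this is in hand, the remaining identifications are bookkeeping, driven entirely by \eqref{eq:stara} and the observation that both sums range over exactly the set $T$ of initial situations at which $\phi$ holds after executing $\alpha$. Since both numerators equal $\sum_{\ivar\in T} p(do(\alpha,\ivar),do(\alpha,\ins))$, the two abbreviations define the same number. \qed
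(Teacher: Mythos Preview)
Your proposal is correct and follows essentially the same approach as the paper: both reduce the two numerators to a common sum by using \eqref{eq:p initial constraint} and \eqref{eq:p ssa} to restrict \eqref{eq:pfN1} to situations sharing the history $\alpha$, and \eqref{eq:stara} to turn the sum over $\vec x$ in \eqref{eq:pfN2} into a sum over initial situations. The only cosmetic difference is that the paper indexes its set $T$ by successor situations $do(\alpha,\ivar)$ whereas you index by the initial $\ivar$ themselves; your version is arguably a bit more explicit in flagging that the history-matching step requires an induction on the length of $\alpha$ (the paper simply cites \eqref{eq:p initial constraint} and \eqref{eq:p ssa}), and both arguments tacitly rely on the foundational axioms ensuring $do$ is injective so that the reindexing is a bijection.
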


\begin{proof} As in Theorem \ref{thm:belief two sums}, we will focus on the numerators of the two abbreviations. That is, \begin{equation}\label{eq:dag}
	\sum_{\{s':\phi[s'\!]\}} p(s'\!,do(\alpha,\ins))
	\tag{$\dag$}
\end{equation} 
on the one hand, and \begin{equation}\label{eq:ddag}
	\sum_{\vec{x}} \lan 
	         \ivar.\,
		      {\bigwedge\! f_i(\ivar)=x_i \,\land\, \phi[do(\alpha,\ivar)]}\,\to 
		           {p(do(\alpha,\ivar),do(\alpha,\ins))}\,\ran.
		\tag{$\ddag$}
\end{equation}
on the other. We show that these expressions define the same number. The denominators represent a special case, and so the claim will follow. 

 Let \( S  \) be 
the set of initial situations, as determined by \eqref{eq:stara}. Let \( S^* = \{ {do}(\alpha, s')\colon s' \in S\}\).
Let \( T \subseteq S^* \)  such that $s''\in T$ iff \(  \phi[s''] \). It is   easy to see that \[
	\eqref{eq:ddag} = \sum_{ s' \in T} p(s', do(\alpha, \ins)). 
\]

\noindent On the other hand, by \eqref{eq:p initial constraint} and \eqref{eq:p ssa}, \(p(s, do(\alpha,\ins)) = 0 \) for all \( s \not\in S^*. \) This means that \[
	\eqref{eq:dag} = \sum_{s' \in T} p(s', do(\alpha, \ins)). 
\]
Therefore \eqref{eq:dag}	and \eqref{eq:ddag} define the same number. \qed 
\end{proof} %

\noindent Thus, by incorporating a simple constraint on initial situations, we
now have a notion of belief that does not require summing over situations.

Readers may notice that our reformulation only applies when we are given an explicit sequence $\alpha$
of actions, including the sensing ones. But this is just what we would expect
to be given for the \emph{projection problem} \cite{reiter2001knowledge}, where we are interested in inferring whether a formula holds after an action sequence.  
 In fact,
we can use regression on the $\phi$ and the $p$ to reduce the belief formula from Definition 
\ref{defn:discrete belief} to a formula involving initial situations
only. See \cite{BelleLevreg} for  work in this direction.

\section{From Weights to Densities} %
\label{sec:from_weights_to_densities_part_1}

The  framework presented so far is \emph{fully}  \emph{discrete}, which is to say that fluents, sensors and effectors are characterized by finite values and finite outcomes. Belief in \( \phi \), in particular, is the summing over a finite set of situations where \( \phi \) holds. We now generalize this framework. We structure our work by first  focusing on \emph{fully continuous domains}, which is to say that fluents, sensors and effectors are characterized by values and outcomes ranging over \( \real \). This section, in particular, explores the very first installment: effectors are assumed to be deterministic, but \emph{sensors} have continuous noisy error profiles. The next section, then, allows both effectors and sensors to have continuous noisy profiles. Further generalizations are deferred to Section \ref{sec:generalizations}. 

Let us begin by observing that the uncountable nature of continuous domains precludes summing over possible
situations.  
In this section, we present a new formalization of belief in terms of {\it
  integrating\/} over {fluent values}. This, in particular, is made possible by the developments in the preceding section.

Allowing real-valued fluents implies that there will be uncountably many
initial situations. 
Imagine, for example, the scenario from Figure \ref{fig:robot}, and that the fluent $h$ can now be any nonnegative real number.
Then for any nonnegative real $x$ there will be an initial situation where
$(\xpos=x)$ is true.  Suppose further that $\D_0$ includes:
\begin{equation}\label{eq:uniform 2 12 continuous}
	\p(\ivar,\ins) = \begin{cases}
		.1 & \text{if}~2 \leq \xpos(\ivar) \leq 12 \\ 
	0 & \text{otherwise}
	\end{cases} %
	\end{equation} 
\noindent which says that the true value of \( \xpos \) initially 
is drawn from a uniform distribution on the interval [2,12]. 
Then there are uncountably many
situations where $p$ is non-zero initially.  So the $p$ fluent now needs to be
understood as a \emph{density}, not as a weight. (That is, we
  now interpret \( p(s'\!,s) \) 
  as the \emph{density} of \( s' \) when the agent
  is in \( s \).) 
 In particular, 
for any $x$,
we would expect 
the initial degree of belief in the formula $(\xpos=x)$ to be $0$, but in  \( (\xpos\leq 12) \) to be 1. 

When actions enter the picture, even if deterministic, there is more to be said. Numerous subtleties arise with \( p \)  
in non-initial situations.
For example, if the robot were to do a $\xmove(4)$ there would be an 
uncountable number of situations agreeing on $\xpos=0$: namely, those where
$2\le\xpos\le4$ was true initially. In a sense, the point \( \xpos = 0 \) now
has \emph{weight}, and the degree of belief in $\xpos=0$ should be .2. On the other hand,  the
other points \( \xpos \in (0,8] \) should retain their densities. That is, belief in \( \xpos \leq 2 \) should be .4 but belief in \( \xpos = 2 \) should still be 0.
In effect,
  we have moved from a continuous to a \emph{mixed} distribution on $\xpos$. Of course, a subsequent rightward motion will retain this mixed density. For example, if the robot were to now move away by 4 units, the belief in \( \xpos = 4 \) would then be .2. 

To address the concern of belief change in continuous domains, we now present a generalization to BHL. One of the advantages of our approach is that we will not need to
specify how to handle changing densities and distributions like the ones above.  
These will 
emerge as side-effects, that is, shifting density changes will be \emph{entailed} by the action theory.

For our formulation of belief,  we first observe that we have fluents \( f_1, \dots,
f_n \) in \( \L \) as before, that take no argument other than the situation term but which now take their values from \( \real. \) Then: 
\begin{ldefn}\label{defn:continuous belief} \textbf{(Degrees of belief (continuous noisy sensors).)} Let \( \phi \) be any situation-suppressed $\L$-formula, and $\alpha = [a_ 1, \ldots, a_k]$ any ground sequence of action terms. The \emph{degree of belief} in \( \phi \) at \( s \) is  an abbreviation: \[
\bel(\phi, s) \doteq 	\displaystyle\frac{1}{\gamma}\fullintegralx
	\func(\vec x, \phi, s)
\]
where, as in Definition \ref{defn:discrete belief}, if $s = do(\alpha,\ins)$ then  \[ P(\vec x, \phi, do(\alpha,\ins)) \doteq \lan 
         \ivar.\,
	      {\bigwedge\! f_i(\ivar)=x_i \,\land\, \phi[do(\alpha,\ivar)]}\,\to 
	           {p(do(\alpha,\ivar),do(\alpha,\ins))}\,\ran.\]   
\end{ldefn}

\noindent That is, the belief in \( \phi \) is obtained by ranging over all possible fluent values, and integrating  the densities of
situations where \( \phi \) holds. If we were to compare the above definition to Definition \ref{defn:discrete belief}, we see that we have simply shifted from summing over finite domains to integrating over reals. In fact, we could read \( \func \) as the \emph{(unnormalized) density} associated with a situation satisfying \( \phi. \) As discussed, by insisting on an explicit
world history, the $\ivar$  
need only range over 
initial situations, giving us the
exact correspondence with fluent values. 
This completes our new definition of belief.  To summarize, our extension to the 
BHL scheme is defined using a few convenient abbreviations, such as for \( \bel \) and mathematical integration, and where an action theory consists of: \begin{enumerate}
	\item \( \D_0 \) (with \eqref{eq:p initial constraint}) as usual, but now also including \eqref{eq:stara}; 

	\item precondition axioms as usual; 

	\item successor state axioms, including one for \( p \), namely \eqref{eq:p ssa},  as usual; 

	\item foundational domain-independent axioms as usual; and 

	\item  action likelihood axioms, one for each action type.

\end{enumerate}

Note that, apart from \eqref{eq:stara} and \( \bel \)'s new abbreviation, we carry over precisely the same components as would BHL. By and large, the extension, thus, retains the simplicity  of their proposal, and comes with minor additions.  We will show that it has reasonable properties using an example and its
connection to Bayesian conditioning below. 

In the sequel, we assume, without explicitly mentioning so, that basic action theories include the sentences \eqref{eq:p initial constraint}, \eqref{eq:p ssa} and \eqref{eq:stara}.

\subsection{Bayesian Conditioning} %
\label{sub:belief_change_and_bayesian_conditioning}

We now explicate the relationship between our definition for $\bel$ and  \emph{Bayesian
  conditioning} \cite{pearl1988probabilistic}. Bayesian conditioning is a standard model for belief change wrt noisy sensing \cite{thrun2005probabilistic} and it rests on two significant
assumptions. First, sensors do not physically change the world, and second,
conditioning on a random variable \( f \) is the same as conditioning on the
event of observing  \( f \). 

In general, in the language of the situation calculus, there need not be a distinction between sensing actions and physical actions. In that case, the agent's beliefs are affected by the sensed value as well as any other physical changes that the action might enable to adequately capture the ``total evidence'' requirement of Bayesian conditioning. 

The second assumption expects that sensors only depend on the true value for the fluent. For example, in the formulation of \eqref{eq:sonar error profile simple}  the sonar's error profile is determined solely by \( \xpos \). But to suggest that the error profile might depend on other factors about the environment, as formulated by \eqref{eq:sonar error profile complicated} for example, goes beyond this simplified view. In fact, here, the agent also learns about the room temperature, apart from sensing the value of \( \xpos. \) 

Thus, our theory of action admits a view of dynamical systems far richer than the standard setting where Bayesian conditioning is applied. Be that as it may, when a similar set of assumptions are imposed as
axioms in an action theory, we obtain a sensor fusion model identical to Bayesian conditioning. This connection was demonstrated in BHL for the discrete case. We prove the property formally for  continuous variables below. 

We begin by stipulating that actions are either of the \emph{physical} type or of the
\emph{sensing} type \cite{citeulike:528170},  
the latter being the kind that
do not change the value of any fluent, that is, such actions do not appear in the successor state axioms for any fluent.  
Now, if \( \ti{obs}(z) \) senses the
true value of fluent \( f \), then assume the sensor error model to be: \[ l(\ti{obs}(z),s)=u
\,\,\equiv\,\, u=\ti{Err}(z,f(s))
\]
where \( \ti{Err}(u_1,u_2) \) is some expression with only 
two free variables,
both numeric. This captures the notion established above:  the error model of a sensor
measuring \( f \) depends only on the true value of \( f \), and is
independent of other factors. Finally, for simplicity, assume \emph{obs(z)} is always executable: \[
	\poss(\mathit{obs(z)},s) \equiv \true. 
\]
Then we obtain:  

\begin{theorem}\label{thm:bayes theorem} Suppose \( \D \) is any basic action theory with likelihood and precondition axioms for \( \mathit{obs(z)} \) as above, \( \phi \) is any \( \L \)-formula mentioning only \( f \),  and  \( u\in \{ x_1, \ldots, x_n \} \) that \( f \) takes a value from. Then we obtain\emph{:}  
    \[
	\begin{array}{l}
	\D\models \bel(\phi, do(\ti{obs}(z),\ins)) = \displaystyle\frac{ \int_{\vec x } 
		[\func(\vec x, \phi \land f = u, \ins) \times \ti{Err}(z,u)]}{
		\int_{\vec x } 
		[\func(\vec x, f = u, \ins) \times \ti{Err}(z,u)] }
	\end{array}
\] 
	
\end{theorem}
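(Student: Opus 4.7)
The plan is to unfold $\bel(\phi, do(\ti{obs}(z),\ins))$ via Definition~\ref{defn:continuous belief}, rewrite the $p$-term at $do(\ti{obs}(z),\cdot)$ using the successor state axiom~\eqref{eq:p ssa}, and then reduce the result to an expression involving only the initial situation by invoking each of the three hypotheses on $\ti{obs}(z)$ in turn. The numerator and denominator of the defining ratio will be handled by the same argument, with $\phi$ replaced by $\true$ in the latter, so I would focus on the numerator and deduce the denominator as a special case.

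First I would expand the numerator as $\int_{\vec x}\!\func(\vec x, \phi, do(\ti{obs}(z),\ins))$ and unfold $\func$ as the abbreviation $\lan \ivar.\,\bigwedge\! f_i(\ivar) = x_i \land \phi[do(\ti{obs}(z),\ivar)] \to p(do(\ti{obs}(z),\ivar), do(\ti{obs}(z),\ins))\ran$. Three simplifications reduce this to the desired form. First, because $\ti{obs}(z)$ is a sensing action and hence does not appear in the successor state axiom of any fluent, $\phi[do(\ti{obs}(z),\ivar)] \equiv \phi[\ivar]$, using that $\phi$ mentions only $f$. Second, because $\poss(\ti{obs}(z),\ivar)$ holds unconditionally, the first disjunct of \eqref{eq:p ssa} applies and gives $p(do(\ti{obs}(z),\ivar), do(\ti{obs}(z),\ins)) = p(\ivar,\ins) \times l(\ti{obs}(z),\ivar)$, which by the stated likelihood axiom equals $p(\ivar,\ins) \times \ti{Err}(z,f(\ivar))$. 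Third, the guard $\bigwedge f_i(\ivar) = x_i$ pins $f(\ivar)$ to the specific component $u$ of $\vec x$ corresponding to $f$, so $\ti{Err}(z,f(\ivar))$ can be rewritten as $\ti{Err}(z,u)$, an expression no longer depending on $\ivar$.

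At this point $\ti{Err}(z,u)$ can be pulled out of the $\lan\cdot\ran$ abbreviation as a multiplicative constant, and what remains inside is exactly the abbreviation for $\func(\vec x, \phi \land f = u, \ins)$; the extra conjunct $f = u$ is already entailed by the guard, but is retained in the theorem statement for emphasis. Substituting back under the outer integral yields $\int_{\vec x} [\func(\vec x, \phi \land f = u, \ins) \times \ti{Err}(z,u)]$, and the denominator is obtained identically with $\phi$ replaced by $\true$. The main obstacle is this last factoring step, which requires a careful case analysis inside the $\lan \ivar.\,\psi \to t\ran$ abbreviation: one must split on whether some $\ivar$ witnesses $\psi$, and in the non-vacuous branch invoke the uniqueness of such an $\ivar$ guaranteed by \eqref{eq:stara} to justify distributing the $\ti{Err}(z,u)$ factor across the defining universally-quantified implication.
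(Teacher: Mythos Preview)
Your proposal is correct and follows essentially the same route as the paper's proof: unfold $\func$, use that sensing leaves fluents unchanged to reduce $\phi[do(\ti{obs}(z),\ivar)]$ to $\phi[\ivar]$, apply \eqref{eq:p ssa} with $\poss$ trivially true to factor the $p$-term as $p(\ivar,\ins)\times\ti{Err}(z,f(\ivar))$, use the guard to replace $f(\ivar)$ by $u$, and then pull $\ti{Err}(z,u)$ outside the $\lan\cdot\ran$ term. Your closing remark about the case analysis needed to justify the factoring step (and the appeal to uniqueness from \eqref{eq:stara}) is in fact more explicit than the paper's own justification, which handles that step informally.
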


\noindent    That is, the \emph{posterior belief} in \( \phi \) is obtained from the \emph{{prior} density} and the error likelihoods for all points where \( \phi \) holds given that \( z \) is observed, normalized over all points. 
The proof for the theorem is as follows. \\

\begin{proof} Without loss of generality, assume \( f \) takes the value \( x \), and the remaining fluents are \( f_1, \ldots f_n \) that take values from \( x_1, \ldots, x_n. \) Let \( a \) denote \( \mathit{obs(z)}. \)
	 From Definition \ref{defn:continuous belief}, \( \bel(\phi, do(\ti{obs(z)},\ins))  \) is an abbreviation for: \[
	 \begin{array}{l}
	\dis	\frac{1}{\gamma} \displaystyle \int_{\vec x} \underline{\func(\vec x, \phi, do(a,\ins))} =  \hfill \text{(a)} \\[3ex]
\dis \frac{1}{\gamma} \fullx \probabbr{\ivar}{f(\ivar) = x \land \bigwedge_{i=1}^n f_i(\ivar) = x_i \land \underline{\phi[do(a,\ivar)]}}{p(do(a,\ivar), do(a, \ins))} = \hfill \text{(b)} \\[3ex] 
\dis \frac{1}{\gamma} \fullx \probabbr{\ivar}{f(\ivar) = x \land \bigwedge f_i(\ivar) = x_i \land \underline{\phi[\ivar]}}{{p(do(a,\ivar), do(a, \ins))}} = \hfill \text{(c)} \\[3ex] 
\dis \frac{1}{\gamma} \fullx \probabbr{\ivar}{f(\ivar) = x \land \bigwedge f_i(\ivar) = x_i \land {(\phi \land f = x)[\ivar]}}{\underline{p(do(a,\ivar), do(a, \ins))}} = \hfill \hspace{7em} \text{(d)} \\[3ex] 
\dis \frac{1}{\gamma} \fullx \probabbr{\ivar}{f(\ivar) = x \land \bigwedge f_i(\ivar) = x_i \land (\phi\land f=x)[\ivar]}{\underline{p(\ivar,\ins)\times \ti{Err}(z,x)}} = \hfill \text{(e)} \\[3ex] 
\dis \frac{1}{\gamma} \fullx Err(z,x) \times  \underline{\probabbr{\ivar}{f(\ivar) = x \land \bigwedge f_i(\ivar) = x_i \land (\phi\land f=x)[\ivar]}{{p(\ivar,\ins)}}} = \hfill \text{(f)} \\[3ex] 
\dis \frac{1}{\gamma} \fullx Err(z,x) \times  \func(\vec x, \phi \land f = x, \ins).
	 \end{array} 
\]
The arguments underline those parts of the sentences that are being reduced. Step (a) expands \( P. \) In step (b), owing to the fact that sensing actions do not change fluent values (by our first assumption tailored to  Bayesian conditioning), \( \phi[do(a,\ivar)] \) is equivalent to \( \phi[\ivar]. \) In step (c), we observe that what appears to the left of the right-arrow in step (b) is equivalent to one where \( \phi[\ivar] \) is replaced by \( (\phi\land f=x)[\ivar]. \) (The main reason for introducing the formula \( f=x \) is to allow us to identify those situations where \( f \) takes the value \( x \) which are all to be multiplied by the error likelihood  for observing \( z \) when the true value is \( x. \)) In step (d), we use \eqref{eq:p ssa} and the fact that \( a \) is always executable to replace \( p(do(a,\ivar),do(a,\ins)) \) by \( p(\ivar,\ins) \times l(a,\ivar) = p(\ivar,\ins) \times \mathit{Err}(z,x). \) Now note that the term appearing in the context of an integral is suggesting that if there is an initial situation where a particular condition holds, then a certain value is returned, and otherwise 0 is returned. This allows us to place the term \( Err(z,x) \) on the outside in step (f), giving us the required numerator that appears in the claim. The expansion of the denominator is analogous with \( \true \) being a special case for \( \phi \), and so we are done. \qed

\end{proof}

The usual case for posteriors are formulas such as \( b \leq f \leq c \), 
which is estimated from the prior and error likelihoods for all points in the range  \( [b,c] \), as demonstrated by the following consequence: 

\begin{corollary} Suppose \( D \) is any basic action theory with likelihood and precondition axioms for \( \mathit{obs(z)} \) as above, \( f \) is any \( \L \)-fluent, and \( u \) is a variable from \( \vec x = \la x_1, \ldots, x_m \ra \) that \( f \) takes a value from. Then we obtain\emph{:}     \[
	\begin{array}{l}
	\D\models \bel( a\leq f \leq b, do(\ti{obs}(z),\ins)) = \displaystyle\frac{ \int_{\vec x } 
		[\func(\vec x, f = u \land a \leq u \leq b, \ins) \times \ti{Err}(z,u)]}{
		\int_{\vec x } 
		[\func(\vec x, f = u, \ins) \times \ti{Err}(z,u)] }
	\end{array}
\] 
\end{corollary}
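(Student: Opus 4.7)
The plan is to obtain the corollary as a direct instantiation of the preceding Bayesian conditioning theorem, taking $\phi$ to be the formula $a \leq f \leq b$. The only work to do is to massage the expression $\phi \land f = u$ appearing in the numerator of the theorem into the form $f = u \land a \leq u \leq b$ that appears in the corollary.

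First I would invoke the theorem with $\phi := (a \leq f \leq b)$, which immediately yields
\[
\bel(a\leq f \leq b,\, do(\ti{obs}(z),\ins)) \,=\, \frac{\int_{\vec x}[\func(\vec x,\, (a\leq f \leq b) \land f=u,\, \ins) \times \ti{Err}(z,u)]}{\int_{\vec x}[\func(\vec x,\, f=u,\, \ins) \times \ti{Err}(z,u)]}.
\]
The denominator already matches the corollary's denominator, so the whole task is to argue that the integrand in the numerator agrees with $\func(\vec x,\, f=u \land a \leq u \leq b,\, \ins) \times \ti{Err}(z,u)$.

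Next I would unfold $\func$ according to its definition from Definition~\ref{defn:continuous belief} (specialised to the empty action sequence, since the situation argument is $\ins$): the inner formula guarded by $\lan \ivar \,.\, \cdots \,\to\, p(\ivar,\ins) \ran$ is $\bigwedge_i f_i(\ivar) = x_i \,\land\, [(a \leq f \leq b) \land f = u][\ivar]$. Assuming $f$ is one of the enumerated fluents, say $f = f_j$ with corresponding variable $x_j$, the conjunct $f_j(\ivar) = x_j$ together with $f(\ivar) = u$ forces $u = x_j$, and hence $(a \leq f \leq b)[\ivar]$ evaluates to $a \leq f(\ivar) \leq b$, which under the binding $f(\ivar) = u$ is equivalent to $a \leq u \leq b$. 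Thus the guard formula is logically equivalent to $\bigwedge_i f_i(\ivar) = x_i \,\land\, (f = u \land a \leq u \leq b)[\ivar]$, which is exactly the guard inside $\func(\vec x,\, f = u \land a \leq u \leq b,\, \ins)$.

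Having matched guards and weights, the conditional term $\lan \ivar .\, \cdots \to p(\ivar,\ins)\ran$ on both sides denotes the same number in every $\real$-interpretation, so the integrands coincide and the two integrals are provably equal. There is no real obstacle here; the one point requiring a moment of care is the substitution step showing $\phi[\ivar] \equiv (a \leq u \leq b)$ under the quantifier, which relies on the elementary fact that inside the scope of $f(\ivar) = u$ we may freely replace $f(\ivar)$ by $u$. The corollary then follows.
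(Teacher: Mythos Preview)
Your proposal is correct and matches the paper's approach: the paper presents this corollary simply as an immediate consequence of Theorem~\ref{thm:bayes theorem} without spelling out a proof, and your argument is exactly the natural instantiation $\phi := (a\leq f\leq b)$ together with the observation that under the guard $f(\ivar)=u$ one may rewrite $(a\leq f\leq b)[\ivar]$ as $a\leq u\leq b$. The only minor redundancy is your detour through $u=x_j$; since by hypothesis $u$ \emph{is} the integration variable corresponding to $f$, the conjunct $f(\ivar)=u$ is already present in $\bigwedge_i f_i(\ivar)=x_i$, so the substitution step is even more immediate than you make it.
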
 

More generally however, and unlike many probabilistic formalisms, we are able
to reason about any logical property $\phi$ of the random variable $f$ being
measured.

\subsection{Example} %
\label{sub:example_first}

Using an example, we demonstrate the formalism and Theorem \ref{thm:bayes theorem} in particular. To reason about the beliefs of our robot, let us build a simple basic action
theory $\D$. We extend the setting from Figure \ref{fig:robot} to a 2-dimensional grid, as shown in Figure \ref{fig:robot-v}. As before, let \( \xpos \) be the fluent denoting its horizontal position (that is, its distance to the wall), and let the robot's 
vertical position be given by a fluent \( \ypos \). The components of \( \D \) are as below. 

\begin{figure}[h]
  \centering
    \includegraphics[width=.4\textwidth]{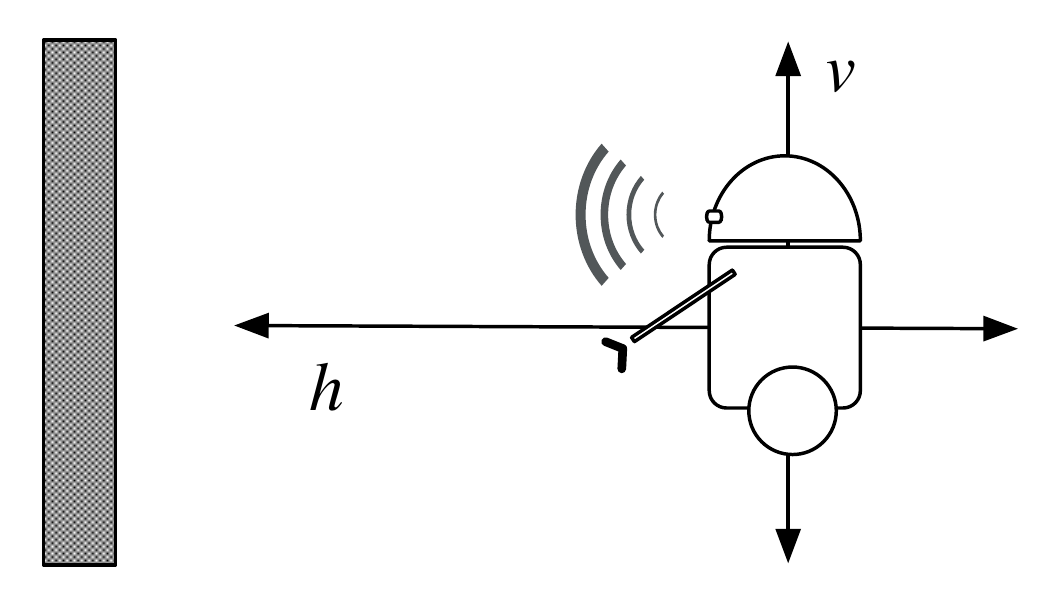}
  \caption{A robot in a 2-dimensional grid.}
  \label{fig:robot-v}
\end{figure}

\begin{itemize}
	\item Imagine a \( p \) of the form: 
	\begin{equation}\label{eq:initial p spec}
		p(\ivar,\ins) = \begin{cases}
			.1 \times \N(\ypos(\ivar);0,16) & \text{if } 2\leq \xpos(\ivar) \leq 12 \\
			0 & \text{otherwise}
		\end{cases} %
	\end{equation} 

	\noindent This says that the value of \( \ypos \) is normally distributed
	about the horizontal axis with variance 16, and independently, that the 
	value of 
	\( \xpos \)
	is uniformly distributed between 2 and 12. 

	Note also that initial beliefs can  
	  be specified for $\D_0$ using $\bel$ directly. For example, to express that the true value of \( \xpos \) is believed to be uniformly distributed on the interval $[2,12]$ we might equivalently include the following theory in \( \D_0 \): \[
	  	\{ \bel(2 \leq \xpos \leq 12, \ins) = .1, \bel(\xpos \leq 2 ~\lor ~ \xpos \geq 12, \ins) = 0 \},   
	  \]
	  and analogously for the fluent $\ypos$.

	For this example, a simple distribution has been chosen for illustrative purposes. In general, recall from Section \ref{sub:discussion} that the \( p \)-specification does not require  the variables  to be independent, nor does it have to mention all  variables.

\item 	For simplicity, let us assume that actions are always executable, \ie that \( \D \) includes \begin{equation}\label{eq:poss always true}
	\poss(a,s) \equiv \true
\end{equation}  for all actions \( a \). For this example, we assume three action types:  action $\xmove(z)$ that moves the robot $z$ units towards the wall, action $\ymove(z)$ that moves the robot $z$ units away from the horizontal axis, and action $\xsense(z)$ that gives a reading of $z$ for the distance between the robot and the wall.

\item 	The successor state axiom for \( \xpos \) is as in  \eqref{eq:xpos ssa}, and the one for $\ypos$ is as follows:

\begin{equation}\label{eq:ypos ssa}
\begin{array}{l}
 \ypos(do(a,s)) = u \equiv  \lnot\exists{z} (a=\ymove(z)) \land 
u =\ypos(s) ~\lor \\
\qquad \mbox{}\qquad \mbox{}\qquad \mbox{}\qquad \mbox{} \exists{z} (a = \ymove(z) \land u = \ypos(s) + z).
\end{array}%
\end{equation}

	\item 	For the sensor device, suppose its error model is given as follows: \begin{equation}\label{eq:sonar error model}
		l(\xsense(z),s) = u \equiv (z\geq 0\land u = \N(z-\xpos(s);0,4)) \lor (z<0 \land u=0).
	\end{equation}

	The error model says that for nonnegative \( z \) readings, the difference between the reading and the true value is normally distributed with mean 0 (which indicates that there is no systematic bias) and variance 4.\footnote{For a more elaborate example involving multiple competing sensors and systematic bias, see \cite{bellelevcs}.} 
	
	For the remaining (physical) actions, we let \begin{equation}\label{eq:likelihood deterministic physical actions}
		l(\xmove(z),s) = 1,~~ l(\ymove(z),s)  =1
	\end{equation} since they are assumed to be deterministic for this section.
	
\end{itemize}

Then we obtain: 
\begin{theorem}\label{thm:example} 
Let \( \D \) be a basic action theory that is the union of \( \{ \eqref{eq:initial p spec}, \eqref{eq:poss always true}, \eqref{eq:xpos ssa}, \eqref{eq:ypos ssa}, \eqref{eq:sonar error model}, \eqref{eq:likelihood deterministic physical actions} \} \). Then 
the following are logical entailments of \( \D \):

\end{theorem}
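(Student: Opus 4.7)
The plan is to verify each entailment claim by systematically unfolding the abbreviation for $\bel$ given in Definition \ref{defn:continuous belief}, and then reducing the resulting expression to a standard integral over fluent values that can be evaluated with ordinary calculus. Concretely, for a claim of the form $\D \models \bel(\phi, do(\alpha,\ins)) = r$, I would first expand $\bel$ to obtain $\frac{1}{\gamma}\int_{\vec x}\, \func(\vec x,\phi,do(\alpha,\ins))$ with $\vec x = \langle x_{\xpos}, x_{\ypos}\rangle$ ranging over $\real^2$, and then rewrite the conditional term $\func$ by repeatedly applying the successor state axioms \eqref{eq:xpos ssa} and \eqref{eq:ypos ssa} to push $\phi[do(\alpha,\ivar)]$ back into a situation-suppressed condition on the initial fluent values $x_{\xpos}, x_{\ypos}$, and by applying \eqref{eq:p ssa} together with the likelihood axioms \eqref{eq:sonar error model} and \eqref{eq:likelihood deterministic physical actions} to reduce $p(do(\alpha,\ivar),do(\alpha,\ins))$ to a product of $p(\ivar,\ins)$ with a chain of likelihood factors. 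Since all actions are executable by \eqref{eq:poss always true}, no executability conditions need to be tracked.

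For the initial claims (entailments about $\ins$ itself), the reduction is immediate: the $\func$ term collapses to $p(\ivar,\ins)$ by \eqref{eq:initial p spec}, and the belief becomes $\frac{1}{\gamma}\int_{\vec x}\,[\![2\le x_{\xpos}\le 12]\!] \cdot 0.1 \cdot \N(x_{\ypos};0,16)\cdot [\![\phi]\!]$, which is a standard product of a uniform and a Gaussian integral; the normalizing constant $\gamma$ works out to $1$ given the chosen weights. For claims following a pure sensing action $\xsense(z)$, I would invoke Theorem \ref{thm:bayes theorem} directly, since the precondition and likelihood axioms for $\xsense$ match the hypotheses of that theorem with $\ti{Err}(z,u) = \N(z-u;0,4)$; the resulting posterior is a truncated Gaussian on $[2,12]$ (in the $\xpos$ dimension), independent of $\ypos$, and the integral reduces to standard Gaussian tail computations.

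For claims following physical actions, I would use the fact that $\xmove(z)$ and $\ymove(z)$ have likelihood $1$, so the $p$ values transfer unchanged to successors (as a form of imaging), and the effect on $\bel(\phi)$ is entirely captured by regressing $\phi[do(\xmove(z),\ivar)]$ through \eqref{eq:xpos ssa}. For $\ymove$ this is a clean translation and the integral transforms by a change of variables. The delicate case, and the main obstacle, is $\xmove(z)$ when $z$ is large enough that the $\max(0, \xpos(s) - z)$ clause is triggered: a positive-measure set of initial $x_{\xpos}$ values (namely $[2,z]\cap[2,12]$) all map to the same successor value $\xpos = 0$. Regression then produces a condition $\max(0, x_{\xpos} - z) = u$, which partitions the integration domain into the ``free'' region $x_{\xpos} > z$ where the map is a bijection, and the ``collided'' region where a whole interval collapses. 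The induced pushforward on $\xpos$ therefore has a continuous part on $(0, 12-z]$ of density $0.1$ and an atom at $\xpos = 0$ of mass $0.1(z-2)$. I expect the proof of any claim like $\bel(\xpos = 0, do([\xmove(4)],\ins)) = 0.2$ to amount to carefully computing this atomic mass from the regressed integral; once that bookkeeping is set up, the remaining entailments (including compositions of moves with subsequent sensing via Theorem \ref{thm:bayes theorem}) follow by iterating the same regression and change-of-variables machinery.
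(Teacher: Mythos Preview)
Your proposal is correct and matches the paper's approach: the paper proves the theorem by listing each entailment and verifying it through exactly the pipeline you describe --- expanding $\bel$ via Definition~\ref{defn:continuous belief}, regressing $\phi$ through the successor state axioms, reducing $p(do(\alpha,\ivar),do(\alpha,\ins))$ via \eqref{eq:p ssa} and the likelihood axioms, and evaluating the resulting integrals (including the careful treatment of the collapse at $\xpos=0$ after $\xmove(4)$ that you flag). The only minor difference is that for the sensing items the paper expands $\bel$ directly rather than invoking Theorem~\ref{thm:bayes theorem}, but your shortcut through that theorem is equally valid and yields the same calculation.
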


\begin{enumerate}

	\item\label{i:point 4 has 0 weight} \( \bel([\xpos = 3 \lor \xpos = 4 \lor
      \xpos = 7],\ins) = 0 \).
	
To see how this follows, let us begin by expanding \( 	\bel([\xpos = 3 \lor \xpos = 4 \lor
      \xpos = 7],\ins) \): \[
      	\dis \frac{1}{\gamma} \fullx \func(\vec x, \xpos = 3 \lor \xpos = 4 \lor \xpos = 5, \ins). 
\tag{a}
      \]
For the rest of the section, let \( \xpos \) take its value from \( \xo \) and \( \ypos \) take its value from \( \xt. \) By means of \eqref{eq:stara},  there is  exactly one situation for any set of values for \( \xO \) and \( \xT. \) The \( \func \) term for any such situation, however, is 0 unless \( \xpos = 3 \lor \xpos = 4 \lor \xpos =5 \) holds at the situation. 
Thus, (a) basically simplifies to: \[
	\dis \frac{1}{\gamma} \fullx \begin{cases}
		.1 \times \N(\xt;0,16) & \text{if \( \xo\in \{3,4,5\} \)} \\ 
		0 & \text{otherwise}
	\end{cases} \hspace{3em} = 0.
\]
	
In effect, although we are integrating a  function $\delta(x_1,x_2)$ over all real
values, $\delta(x_1,x_2)=0$ unless 
$x_1\in\{3,4,7\}.$

	\item\label{i:initial belief about h} \( \bel(\xpos \leq 9,\ins) = .7 \).
	
	We might contrast this with the previous property in that for any given value for \( \xo \) and \( \xt \), the \( \func \) term is 0 when \( \xo >9. \) When \( \xo\leq 9, \) however, the \( p \) value for the situation is obtained from the specification given by \eqref{eq:initial p spec}. That is, we have: \[\begin{array}{l}
	\dis	\frac{1}{\gamma}	\fullx \begin{cases} \text{Initial specification given by \eqref{eq:initial p spec}} & \text{if \( \xpos\leq 9 \)} \\
	0 & \text{otherwise} 
		\end{cases}	 \hspace{9em} \hfill =  \\[3ex]
			\dis \frac{1}{\gamma} \fullx \begin{cases} .1 \times \N(\xt; 0, 16) & \text{if  \(\thereis \ivar.~\xpos(\ivar) = \xo, \xo \in [2,12]  \) and \(  \xpos(\ivar) \leq 9 \)} \\
			0 & \text{otherwise}
			\end{cases}	
	 \hfill =\\[3ex]
	\dis \frac{1}{\gamma} \int_\real \int_2^9 .1 \times \N(x_2;0,16) ~\dummy x_1~
	\dummy x_2 
	\end{array}  
	\] 
The numerator evaluates to .7, and the denominator to 1. 

\item \( \bel(\xpos > 7\ypos,\ins) \approx .6. \) 

Beliefs about 
any  mathematical expression involving   the random variables, even when that does not correspond to well known  density functions, are  entailed. To evaluate this one, for example, observe that we have \[\begin{array}{l}
	\dis \frac{1}{\gamma} \fullx \begin{cases} .1\times \N(\xt;0,16) & \text{if \( \xo \in [2,12] \) and \( \xo > 7 \xt \)} \\
	0 & \text{otherwise}
	\end{cases} \hspace{3em} =  \\[3ex] 
	\dis \frac{1}{\gamma} \int_2^{12} \int_{-\infinity}^{\xo /7} .1\times \N(\xt;0,16) \dummy \xo.
\end{array}	
\]

	\item\label{i:weight for 0} \( \bel(\xpos = 0, do(\xmove(4),\ins)) = .2 \).
	
	 Here a \emph{continuous} distribution evolves into a \emph{mixed}
     distribution. This results from  \(  \bel(\xpos = 0, do(\xmove(4),\ins)) \) first expanding  as: \[
     	\dis \frac{1}{\gamma} \fullx \func(\vec x, \xpos =0, do(\xmove(4),\ins)) \tag{a}
     \]
The \( P \) term, then, simplifies to: \[
	\probabbr{\ivar}{\bigwedge f(\ivar) = x \land (\xpos = 0)[do(\xmove(4),\ivar)]}{p(\ivar,\ins)} \tag{b}
\]
That is, since \( \xmove(z) \) has no error component, \( l(\xmove(z),s) = 1 \) for any \( s \) in accordance with \( \D. \) Therefore, \( p(do(a,\ivar), do(a, \ins))  = p(\ivar,\ins). \) Now (b) says that for every possible value for \( \xpos \) and \( \ypos \), if there is an initial situation where \( \xpos = 0 \) holds after moving leftwards, then its \( p \) value is to be considered. Note that for any initial situation \( s \) where \( \xpos(s) \in [2,4] \), we get \( \xpos(do(\xmove(4),s)) = 0 \) by \eqref{eq:xpos ssa}. This leaves us with: \[
	\begin{array}{l} 
		\dis \frac{1}{\gamma} \fullx \begin{cases}
			.1 \times \N(\xt;0,16) & \text{if \( \thereis \ivar.~\xpos(\ivar) =\xo, \xo\in [2,12], \xpos(\ivar) \in [2,4] \)} \\ 
			0 & \text{otherwise}
		\end{cases}  \qquad \mbox{} \qquad \mbox{} = \\[3ex]
		\dis \frac{1}{\gamma} \fullx \begin{cases}
			.1 \times \N(\xt;0,16) & \text{if \( \xo \in [2,4] \)} \\ 
			0 & \text{otherwise}
		\end{cases} 
	\end{array} \tag{c}
\]
We can show that \( \gamma =1 \), which means (c) = .2. This change in beliefs is shown in Figure \ref{fig:xposmixed}.

	\item \( \bel(\xpos \leq 3, do(\xmove(4),\ins)) = .5 \).
	
	\( \bel \)'s definition is  amenable to a set of $h$ values, where one
    value has a weight of $.2$, and all the other real values have a uniformly
    distributed density of $.1$.

\item \(\bel([\exists{a,s}.~now\!=\!do(a,s) \land \xpos(s)\!>\!1],
do(\xmove(4),\ins)) = 1\).

It is possible to refer to earlier or later situations 
using $now$ as the current situation.  This says that after moving,
there is full belief that $(\xpos>1)$ held before the action.

	\item\label{i:moving towards moving away}  \( \bel(h=4,do([\xmove(4),\xmove(-4)],\ins)) = .2 \) 
	
	\( \bel(\xpos=4,do([\xmove(-4),\xmove(4)],\ins)) = 0 \).

	The point \( \xpos \!=\! 4  \) has 0 weight initially, as shown in 
	item \ref{i:point 4 has 0 weight}. Roughly,  if the agent were to move leftwards first then many points would ``collapse'', as shown in item \ref{i:weight for 0}. The point would then obtain a \( \xpos \) value of 0, and have a weight of .2. The weight is then retained on moving away by 4 units, where the point once again gets \( \xpos \) value 4. On the other hand, if this entire phenomena were reversed then none of these features are observed because the collapsing does not occur and the entire space remains fully continuous.

 \item \( \bel(-1 \leq \ypos \leq 1, do(\xmove(4),\ins)) = \bel(-1 \leq \ypos \leq
   1, \ins) = \smallint_{-1}^1 \N(x_2;0,16)  \dummy x_2 \).

Owing to Reiter's solution to the frame problem, belief in \( \ypos \) is
unaffected by a lateral motion. That is, a leftwards motion does not change \( \ypos \) in accordance with \eqref{eq:ypos ssa}. As per \eqref{eq:initial p spec}, the initial belief in \( \ypos \in [-1,1] \)  is the area
between \( [-1,1] \) bounded by the specified Gaussian.

\item \( \bel(\ypos \!\leq\! 7, do(\ymove(2.5),\ins)) = \bel( \ypos \!\leq\! 4.5, \ins) \).

After the action $\ymove(2.5)$, the Gaussian for \( \ypos \)'s value has
its mean ``shifted'' by 2.5 because the density associated with \( \ypos = x_2
\) initially is now associated with \( \ypos = x_2+2.5. \) Intuitively, we have: \[
	\begin{array}{l}
		\dis \frac{1}{\gamma} \fullx \begin{cases}
		.1 \times \N(\xt;0,16) & \text{if \( \thereis \ivar.\ypos(\ivar) = \xt \) and \(  (\ypos \leq 7)[do(\ymove(2.5),\ivar)] \)} \\ 
		0 & \text{otherwise}
		\end{cases} \qquad \mbox{} \qquad \mbox{}  = \\[3ex]
		\dis \frac{1}{\gamma} \fullx \begin{cases}
		.1 \times \N(\xt;0,16) & \text{if \( \thereis \ivar.\ypos(\ivar) = \xt \) and \(  \ypos(\ivar) \leq 4.5 \)} \\ 
		0 & \text{otherwise}
		\end{cases}	\hfill = \\[3ex]
		\dis \frac{1}{\gamma} \fullx \begin{cases}
		.1 \times \N(\xt;0,16) & \text{if \( \xt \leq 4.5 \)} \\ 
		0 & \text{otherwise}
		\end{cases}
	\end{array}
\]

\item\label{i:first reading} \( \bel(\xpos \leq 9, do(\xsense(5),\ins)) \approx .97 \).

\( \bel(\xpos \leq 9, do([\xsense(5),\xsense(5)],\ins) \approx .99 \).

Compared to item \ref{i:initial belief about h}, belief in \( \xpos \leq
9\) is sharpened by obtaining a reading of $5$ on the sonar, and sharpened to
almost certainty on a second reading of 5.  This is because the
$p$ function, according to \eqref{eq:p ssa}, incorporates the
likelihood of each $\xsense(5)$ action. More precisely, the  belief term in the first entailment simplifies to: \[
	\dis \frac{1}{\gamma} \fullx \probabbr{\ivar}{\xpos(\ivar) = \xo \land \ypos(\ivar) =\xt \land \xpos(\ivar) \leq 9}{p(\ivar,\ins) \times \N(5-\xo;0,4)} \tag{a}
\]
Note that we have replaced \( (\xpos \leq 9)[do(\xsense(5),\ins)] \) by \( (\xpos \leq 9)[\ivar] \) since \( \xsense(z) \) does not affect \( \xpos. \) From (a), we get \[
		\dis \frac{1}{\gamma} \fullx \N(5-\xo;0,4)\times  \probabbr{\ivar}{\xpos(\ivar) = \xo \land \ypos(\ivar) =\xt \land \xpos(\ivar) \leq 9}{p(\ivar,\ins)} \tag{b}
\]
We know from \eqref{eq:initial p spec} that those initial situations where \( \xpos \leq 2 \) have \( p \) values 0. Therefore, from (b), we get: \[\begin{array}{l}
	\dis \frac{1}{\gamma} \fullx \N(5-\xo;0,4)\times \begin{cases}
		.1 \times \N(\xt;0,16) & \text{if \( \xo \in [2,9] \)} \\ 
		0 & \textrm{otherwise}
	\end{cases} \hspace{3em} =\\[3ex]
	\dis \frac{1}{\gamma} \int_\real \int_2^9 \N(5-\xo;0,4)\times .1 \times \N(\xt;0,16)  \dummy {\xo} \dummy {\xt}
\end{array}
\]
After a second reading of 5 from the sonar, the expansion for belief is analogous, except that the function to be integrated gets multiplied by a second $\N(5-x_1;0,4)$ term. It is then not hard to see that belief sharpens significantly with this multiplicand. The agent's changing densities are shown  in 
Figure~\ref{fig:sonargraph}.

\end{enumerate}

\begin{figure}[tp]
	\centering
		\includegraphics[width=9cm]{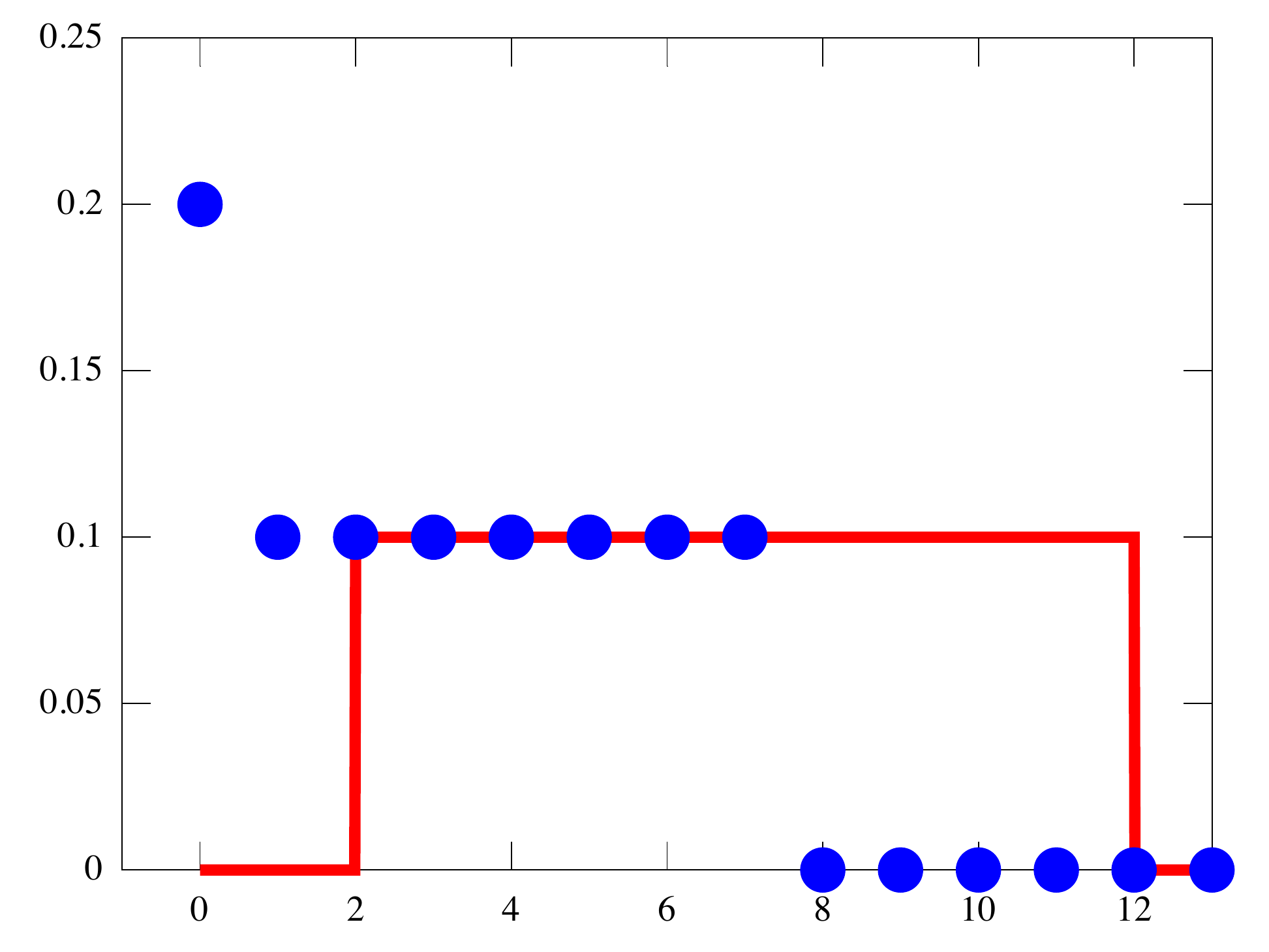}
	\caption{Belief update for \( \xpos \) after physical actions. Initial belief at \( \ins \) (in solid red) and after a leftward move of 4 units (in blue with point markers).}
	\label{fig:xposmixed}
\end{figure}

\begin{figure}[tp]
\begin{center}
\includegraphics[width=8cm]{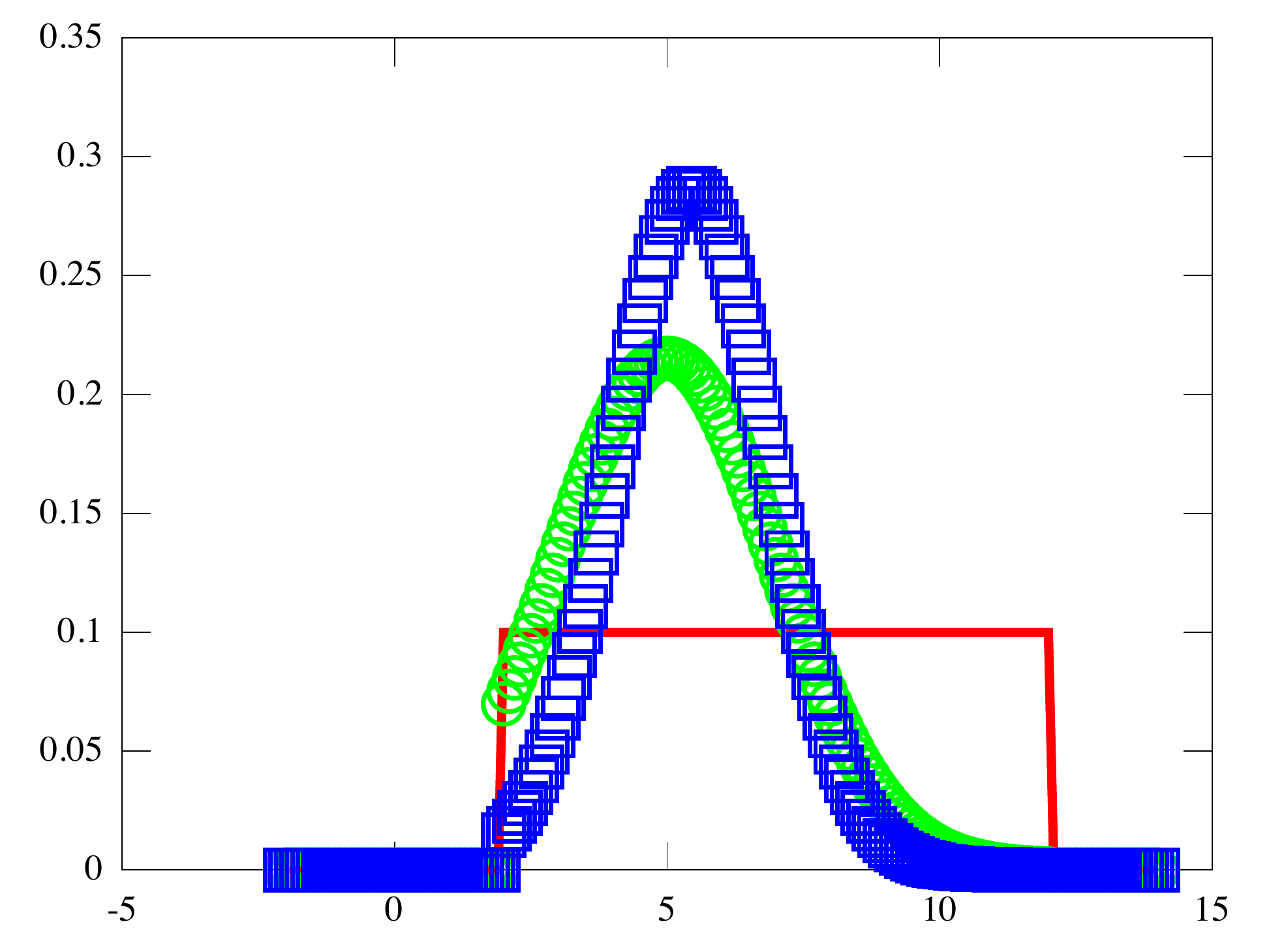}%
\caption{ Belief change for \( \xpos \) at \( \ins \) (in solid red), after
  sensing 5 (in green with circular markers), and after sensing 5 twice (in blue with square markers).} 
\label{fig:sonargraph}
\end{center}
\end{figure}

\section{Noisy Acting} %
\label{sec:from_weights_to_densities_part_2}

In the presentation so far, we assumed physical actions to be deterministic. By this we mean that when a physical action \( a \) occurs, it is clear to us (as modelers) but also the agent how the world has changed on \( a. \) Of course, in realistic domains, especially robotic applications, this is \emph{not} the usual case. In this section, in a domain that has continuous fluents, we show how our current account of belief can be extended to reason with  sensors as well as effectors that are noisy. 

In line with the rest of this work, effector noise is given a quantitative account. Let us first reflect on what is expected with noisy acting. When an agent senses, as in the case of \( \xsense(z), \) the argument for this action is not chosen by the agent. That is, the world decides what \( z \) should be, and based on this  reading of \( z \), the agent comes to certain conclusions about its own state. The noise factor, then, simply addresses the phenomena that the number \( z \) returned may differ from the true value of whatever fluent the sensor is measuring. 

Noisy acting diverges from that picture in the following sense. The agent \emph{intends} to do action \( a \), but what actually occurs is  \( a' \) that is possibly different from \( a. \) For example, the agent may want to move 3 units, but, unbeknownst to the agent, it may move  by 3.042 units. The agent, of course, does not observe this outcome. Nevertheless, provided the agent has an account of its effector's inaccuracies, 
it is reasonable for the agent to believe that it  is in fact closer to the wall, even if it may not be able to precisely tell by how much. Intuitively, the result of a nondeterministic action is that any number of successor situations might be obtained, which are all indistinguishable in the agent's perspective (until sensing is performed).  Depending on the likelihoods of the action's potential outcomes, some of these successor situations are considered more probable  than others. The agent's belief about what holds then must incorporate these relative likelihoods.  So, in our view, nondeterminism is really an \emph{epistemic} notion.

\subsection{Noisy Action Types}

Following \cite{Bacchus1999171}, perhaps the simplest extension to make all this precise is to assume that deterministic actions such as \( \xmove(x) \) now have companion action types \( \xmove(x,y) \) in \( \L \). The intuition is that \( x \) represents the \emph{nominal} value, which is the number of units that the agent intends to move, while \( y \) represents the actual distance moved. 
The actual value of \( y	 \) in any ground action, of course, is not observable for the agent. 
 This simple idea will need three adjustments to our account: \begin{enumerate}
	\item successor state axioms need to be built using these new action types;   
	\item the formalism must allow the modeler to formalize that certain outcomes are more likely than others, that is, noisy actions may be associated with a probabilistic account of the various outcomes; and 
	\item the notion of belief must incorporate the nominal value, the range of possible outcomes and their likelihoods.  
\end{enumerate}

First, we address successor state axioms. These are now specified as usual, but using the second argument, which is the actual outcome, rather than   the nominal value, which  is ignored. For example, for the fluent \( \xpos \), instead of \eqref{eq:xpos ssa}, we will now have: \begin{equation}\label{eq:xpos ssa noisy} 
\begin{array}{l}
\xpos(do(a, s)) = u \equiv \exists x, y[a = \move(x,y) \land u = \max(0, \xpos - y)]	 ~\lor \\ 
\qquad \mbox{} \qquad \mbox{} \qquad \mbox{}~~~~ \neg \exists x, y[a = \move(x,y)] \land u = \xpos(s).
\end{array}
\end{equation}

The reason for this modification is obvious.  If \( y \) is the actual outcome then the fluent change should be contingent on this value rather than what was intended. It is important to note that no adjustment to the existing (Reiter's) solution to the frame problem is necessary.  

\subsection{The \golog\ approach}

The foremost issue now is to use the above idea to allow for more than one possible successor situation. Clearly, we do not want the agent to control the actual outcome in general. So the approach taken by BHL is to think of picking the second argument as a \emph{nondeterministic}  \golog\  {program} \cite{reiter2001knowledge}. Briefly, \golog\ is an agent programming proposal where one is allowed to formulate complex actions that denote sequential and nondeterministic executions of actions, among others, and is essentially a basic action theory. 
Given the action \( \move(x,y) \), for example, the  \golog\ program \( \textsc{Move(x)} \) might stand for the abbreviation \( \pi y.~\move(x,y) \), which corresponds to a ground action \( \move(x,n) \) where \( n \) is chosen nondeterministically. For our purposes, we would then imagine that the agent  executes \golog\  programs. 

There are some advantages to this approach: namely, we only have to look at the logical entailments, including ones mentioning \( \bel, \) of such \golog\ programs. Since traces of these programs  account for many potential outcomes, \( \bel \) does the right thing and accommodates all of these when considering knowledge.  
But the disadvantage is that the resulting formal specification turns out be unnecessarily complex, at least as far as projection is concerned. 

For  projection tasks, we show that we can settle on a simpler alternative, one that does not appeal to \golog. Like BHL,  we assume that the world is deterministic, where the result of doing a ground action leads to a distinct successor. Roughly, the intuition then is that when a noisy action is performed, the various outcomes of the action as well as the potential successor situations that are obtained wrt these are treated at the level of belief. 

\subsection{Alternate Action Axioms}

Inspired by \cite{Delgrande:2012fk}, our approach is based on the introduction of a distinguished predicate $\alt.$  The idea is this: if \( \alt(a, a', \vec z) \) holds for ground action \( a  = A(\vec c) \)   then we understand this to mean that the agent believes that any  instance of \( a' = A(\vec z) \) might have been executed instead of \( a. \) Here, \( \vec z \) denotes the range of the arguments for potential outcomes.

To see how that gets used with the new action types such as \( \xmove(x,y) \), 
consider the ground action \( \move(3,3.1) \). So, the agent intends to move by 3 units but what has actually occurred is a move by 3.1 units. Since the agent does not observe the latter argument, from its  perspective,  what  occurred could have been a move by 2.9 units, but also perhaps (although less likely) a move by 9 units. Thus, the ground actions \( \move(3,2.9) \) and  \( \move(3,9) \) are \( \alt \)-related to \( \move(3,3.1). \) (The likelihoods for these may vary, of course.) In logical terms, we might have have an axiom of the following form in the background theory: \begin{equation}\label{eq:alt move action}
		\alt(\move(x,y),a', z) \equiv a' = \move(x, z). 
\end{equation} 
This to be read as saying that \( \move(3,z) \) for every \(  z \in \real\)  are alternatives to \( \move(3,3.1) \). If we required that \( z \) is only a certain range from \( 3 \), for example, we might have: \[
	\alt(\move(x,y),a', z) \equiv a' = \move(x,z) \land | x-z| \leq c
\]
where \( c \) bounds the magnitude of the maximal possible error. On the other hand, for actions such as \( \xsense(z) \), which do not have any alternatives, we simply write: \begin{equation}\label{eq:alt sonar}
	\alt(\xsense(x), a', z ) \equiv a' = \xsense(z) \land z=x.
\end{equation}
This says that \( \xsense(x) \) is only \( \alt \)-related to itself. 

With this simple technical device, one can now additionally  constrain the likelihood of various outcomes using \( l \). For example: \beq\label{eq:simple noisy move}
		l(\move(x,y), s) = u \equiv u = \N(y-x; \mu, \sigma^2)
\eeq
says that the difference between nominal value and the actual value is normally distributed with mean \( \mu \) and variance \( \sigma^2 \). This essentially corresponds to the standard additive Gaussian noise model in robotics \cite{thrun2005probabilistic}.

To see an example of how, say, \eqref{eq:alt move action} and \eqref{eq:simple noisy move} work together with the successor state axiom \eqref{eq:p ssa} for \( p \), consider three situations \( s, s' \) and \( s'' \) associated with the same density, as shown in Figure \ref{fig:situations}. Suppose their \( h \) values are \( 6, 6.1 \) and \( 5.9 \) respectively. After attempting to move 3 units, the action  \( \xmove(3,z) \) for any \( z\in\real \) may have occurred. So, for each of the three situations, we explore successors from different values for \( z. \) Assume the motion effector is defined by a mean \( \mu=0 \) and variance \( \sigma\su 2 = 1. \) Then, the \( p \)-value of the situation \( do(\xmove(3,5.7),s') \), for example, is obtained from the \( p \)-value for \( s' \) multiplied by the likelihood of \( \xmove(3,5.7) \), which is \( \N(5.7-3;0,1) = \N(2.7;0,1) \). Thus, the successor situation \( do(\xmove(3,5.7),s') \) is much less likely than the successor situation \( do(\xmove(3,3),s) \), as should be the case.

\begin{figure}[htbp]
	\centering
		\includegraphics[height=5cm]{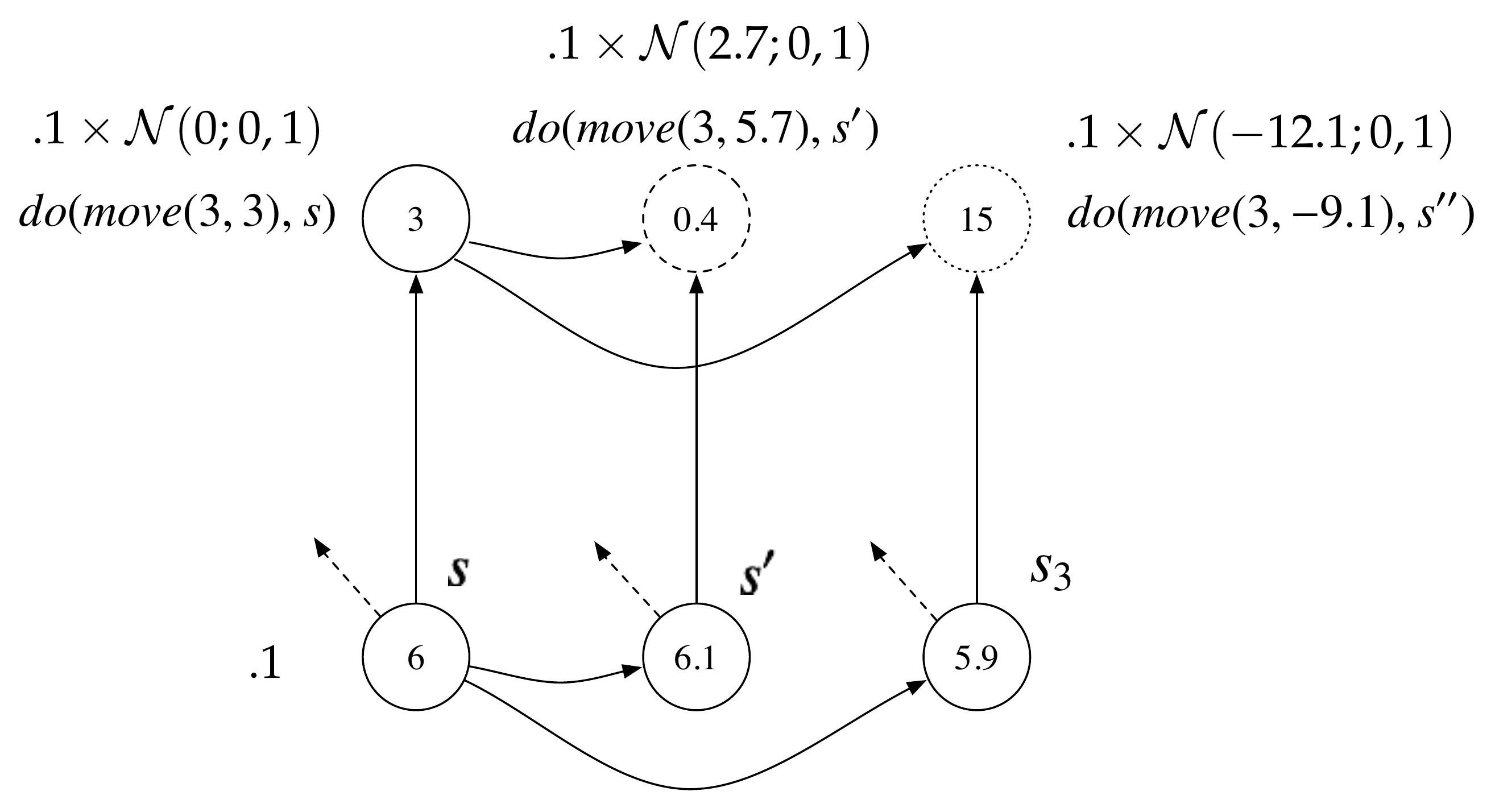}
	\caption{Situations with accessibility relations after noisy acting. The numbers inside the circles denote the \( h \) values at these situations. Dotted circles denote a lower density in relation to their epistemic alternatives.}
	\label{fig:situations}
\end{figure}

In general, we  define \emph{alternate actions} axioms that are to be a part of the  basic action theory henceforth: 

\begin{definition} Let \( A(\vec x, \vec y) \) be any action. \emph{Alternate actions} \emph{axioms} are sentences of the form: \[
	\alt(A(\vec x, \vec y), a', \vec z) \equiv a' = A(\vec x, \vec z) \land \psi(\vec x, \vec y, \vec z)
\]
where \( \psi \) is a formula that characterizes the relationship between the nominal and true values. 
\end{definition}

The one limitation with this definition is that only actions of the same \emph{type}, \ie built from the same  function symbol, are alternatives to each other. This does not allow, for example, situations where the agent intends a physical move, but instead unlocks the door. 
Nevertheless, this definition is not unreasonable because noisy actions in robotic applications typically involve additive noise  \cite{thrun2005probabilistic}. Moreover, this limitation only assists us in arriving at a simple and  familiar definition for belief. A more involved definition would allow for other variants.

\subsection{A Definition for Belief}

We have thus far successfully augmented successor state axioms and extended the formalism for modeling noisy actions. The final question, then, is how can the outcomes of a noisy action, and their likelihoods, be accounted for? Indeed, a formula might not only be true as a result of the actions intended, but also as a result of those that were not.

Consider the simple case of deterministic actions, where 
 the density associated with \( s \) is simply  transferred to \( do(a,s) \). This is  an instance of Lewis's  \emph{imaging} \cite{lewis1976probabilities}. In contrast, if $a$ and $a'$ are $\alt$-related, then the result of doing $a$ at $s$ would lead to successor situations \( do(a,s) \) and \( do(a',s) \). Moreover, unlike noisy sensors, 
 \( a \) and \( a' \) may affect fluent values in different ways, which is certainly the case with \( \move(3,3.074) \) and \( \move(3,3) \) on the fluent \( \xpos \). Thus, the idea then is that when reasoning about the agent's beliefs about \( \phi \), one would need to integrate over the densities of all those potential successors where \( \phi \) would hold.

 To make this precise, let us first consider the result of doing a single action \( a \) at  \( \ins. \)  The \emph{degree of belief} in \( \phi \) after doing \( a \) is now an abbreviation for: \[
\dis	\bel(\phi, do(a, \ins)) \defeq  \frac{1}{\gamma} \fullx \int_z P(\vec x,  z, \phi, do(a,\ins) )
\]
where \[
	P(\vec x, z, \phi, do(a,\ins)) \doteq \probabbr{{\ivar, {b}}}{\bigwedge f_i(\ivar) = x_i \land \alt(a, b, z) \land \phi[do(b,\ivar)]}{p(do(b, \ivar), do(a, \ins))~}.
\]
(As before, the $i$ ranges over the indices of the fluents in $\L$, that is, $\{1, \ldots, n\}$.) 
The intuition is this. Recall that by integrating over \( \vec x \), all possible initial situations are considered by \( f\sub i (\ivar) = x\sub i. \) Analogously, by integrating over \(  z, \) all possible action outcomes are considered by \( \alt(a, b, z). \) Supposing \( a=A(x,y) \), for each outcome \( b = A(x,z) \),\footnote{For ease of presentation, we assume  that the nominal and the actual arguments involve a single variable.} we test whether \( \phi \) holds at the resulting situation \( do(b,\ivar) \) as before, and use its \( p \)-value. Here, this \( p \)-value is given by \( p(do(b, \ivar), do(a, \ins)), \) where the first argument is the  successor of interest \( do(b,\ivar) \) and the second is the real world \( do(a,\ins) \).

The generalization, then, for a sequence of actions is as follows: 
\begin{ldefn}\label{defn:belief continuous effector sensor} \textbf{(Degrees of belief (continuous noisy effectors and sensors.))} Suppose \( \phi \) is any \( \L \)-formula. Then the \emph{degree of belief} in \( \phi \) at \( s \), written \( \bel(\phi, s), \) is defined as an abbreviation: 
	\[ \bel(\phi,s) \doteq  \begin{array}{l}
	\dis \frac{1}{\gamma} \fullx \int_{\vec z}  P(\vec x, \vec z, \phi,s)
	\end{array}
\]
where, if \( s = do([a_1, \ldots, a_k],\ins) \), then \[ \begin{array}{l}
	P(\vec x, \vec z, \phi,s) \doteq \\ \quad\quad \lan 	\ivar, {b_1, \ldots, b_k}.~\bigwedge f_i(\ivar) = x_i \land \bigwedge \alt(a_j, b_j, z_j) \land \phi[do([b_1, \ldots, b_k], \ivar)] \\ \quad\quad\quad\quad\quad\quad\quad\quad\quad\quad\quad\quad\quad\quad\quad\quad\quad\quad\quad\quad \rightarrow p(do([b_1, \ldots, b_k],\ivar), do([a_1, \ldots, a_k], \ins)) ~\ran.
\end{array}
\]

\end{ldefn}

\noindent (Here, $i$ ranges over $\{1,\ldots, n\}$ as before, and $j$ ranges over the indices of the ground actions $\{1, \ldots, k\}$.) 
That is, given any sequence, for all possible \( \vec z \) values, we consider alternate sequences of ground action terms and integrate the densities of successor situations that satisfy \( \phi \), using the appropriate \( p \)-value.

\subsection{Example} %
\label{sub:example}

	Let us now build a simple example with noisy actions. Consider the robot scenario in Figure \ref{fig:robot}.	Suppose the basic action theory \( \D \) includes the foundational axioms, and the following components.

	The initial theory \(  \D_0 \) includes the following \( p \) specification: \begin{equation}\label{eq:p uniform 10 12}
		p(\ivar,\ins) = \begin{cases}
			.5 & \text{ if \( 10\leq \xpos(\ivar) \leq 12 \)} \\ 
			0 & \text{ otherwise}
		\end{cases}	
	\end{equation} 
For simplicity, let \( \xpos \) be the only fluent in the domain, and assume that actions are always executable. The successor state axiom for the fluent \( \xpos \) is \eqref{eq:xpos ssa noisy}. For \( p \), it is the usual one, \emph{viz.} \eqref{eq:p ssa}. 

We imagine two actions in this domain, one of which is the noisy move \( \move(x,y) \) and a sonar sensing action \( \sonar(z). \) For the alternate actions axioms, let us use \eqref{eq:alt move action} and \eqref{eq:alt sonar}. 

Finally, we specify the likelihood axioms. Let the sonar's error profile be 
\beq
l(\xsense(z),s) = u \equiv (z\geq 0\land u = \N(z-\xpos(s);0,.25)) \lor (z<0 \land u=0).
\eeq
Readers may note that this sonar is more accurate than the one characterized by \eqref{eq:sonar error model}, as it has a smaller variance. Regarding the likelihood axiom for \( \move(x,y) \),  let that be: \begin{equation}\label{eq:error profile noisy move}
	l(\move(x,y), s) = u \equiv u = \N(y-x; 0, 1).
\end{equation}

\noindent This completes the specification of \( \D. \)	\begin{theorem}\label{thm:continuous noisy effector and continuous noisy sensor} The following are entailments of \( \D \). 
	
\end{theorem}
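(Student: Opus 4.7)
The plan is to treat each claimed entailment uniformly: expand $\bel(\phi,do(\alpha,\ins))$ using Definition~\ref{defn:belief continuous effector sensor}, then systematically simplify the resulting conditional term $P(\vec x,\vec z,\phi,s)$ until it becomes an iterated integral of elementary densities that can be evaluated by ordinary calculus. Since the only fluent is $\xpos$, $\vec x$ collapses to a single variable $x_1$, and each action in the history contributes at most one outcome variable $z_j$, so we are always dealing with a low-dimensional integral over $\real$.

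The first reduction step is to eliminate the $\alt$-quantifier $b_j$. For each sonar action $a_j=\sonar(c)$, axiom~\eqref{eq:alt sonar} forces $b_j=\sonar(c)$ and $z_j=c$, so the corresponding integral over $z_j$ collapses to a point evaluation. For each noisy move $a_j=\move(c,y)$, axiom~\eqref{eq:alt move action} forces $b_j=\move(c,z_j)$ with $z_j$ ranging freely over $\real$, so the $z_j$-integral survives. The second reduction step is to unroll $p(do([b_1,\ldots,b_k],\ivar),do([a_1,\ldots,a_k],\ins))$ by $k$ applications of \eqref{eq:p ssa}, using the fact that actions are always executable in $\D$; this yields $p(\ivar,\ins)\cdot\prod_{j=1}^{k}l(b_j,do([b_1,\ldots,b_{j-1}],\ivar))$. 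The third reduction step is to evaluate $\phi[do([b_1,\ldots,b_k],\ivar)]$ and each $l(b_j,\cdot)$ by repeated application of the successor state axiom \eqref{eq:xpos ssa noisy} for $\xpos$, which expresses $\xpos$ at each intermediate situation as a $\max$-cascade in $x_1$ and the $z_j$'s coming from $\move$ steps; for sonar steps the likelihood \eqref{eq:sonar error model} depends only on the current value of $\xpos$, which is already known in terms of $x_1$ and preceding $z_j$'s.

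After these substitutions, the conditional $\langle \ivar,\vec b.\ldots\to\ldots\rangle$ becomes an explicit real-valued function of $(x_1,\vec z)$ (zero outside the support $\xpos(\ivar)\in[10,12]$ coming from \eqref{eq:p uniform 10 12}, times a product of Gaussian factors from \eqref{eq:error profile noisy move} and the sonar likelihood). For each claim I would then set up the resulting integral with explicit bounds, and for claims that compare beliefs before and after an action, I would match integrands by a change of variables (shifting $z_j$ to absorb the nominal value into the integration variable, as was done informally for the deterministic $\ymove$ case in Theorem~\ref{thm:example}). The normalization factor $\gamma$ is handled by running the same reduction with $\phi$ replaced by $\true$.

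The main obstacle will be the $\max(0,\cdot)$ clause in \eqref{eq:xpos ssa noisy}: for noisy move histories the event $\{\xpos(do(\alpha,\ivar))\in B\}$ splits into a regime where the $\max$ is inactive (giving a shifted Gaussian on $z_j$ whose contribution to belief behaves like ordinary Bayesian updating) and a regime where the wall clips the motion, which, exactly as in item~\ref{i:weight for 0} of Theorem~\ref{thm:example}, contributes a point mass at $\xpos=0$ and turns the posterior into a genuinely mixed distribution. I expect to handle this by partitioning the $(x_1,\vec z)$-domain of integration along the hypersurfaces where the $\max$ switches, computing each piece separately, and then combining them; once that bookkeeping is done, each remaining entailment reduces to integrating products of Gaussians against uniform densities, which is routine.
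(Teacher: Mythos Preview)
Your three-step reduction (eliminate the $\alt$-quantified $b_j$'s via \eqref{eq:alt move action} and \eqref{eq:alt sonar}, unroll $p$ into $p(\ivar,\ins)\cdot\prod l(b_j,\cdot)$ via \eqref{eq:p ssa}, then evaluate $\phi$ and the likelihoods through \eqref{eq:xpos ssa noisy}) is exactly what the paper does for each of the four items, arriving at the same iterated integrals of a uniform prior against Gaussian move and sonar factors. Two small calibration notes: the sonar likelihood in this section is the variance-$.25$ version introduced just before \eqref{eq:error profile noisy move}, not \eqref{eq:sonar error model}; and the $\max(0,\cdot)$ case-split you flag as the main obstacle never activates here, since $\xpos$ starts in $[10,12]$ and the Gaussian on the move outcome makes the region where $\xpos-z<0$ contribute negligibly---the paper simply writes $\xpos(\ivar)-z$ without splitting, and none of the four items involve comparison claims or mixed distributions.
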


\begin{figure}[t]
  \centering
    \includegraphics[width=.5\textwidth]{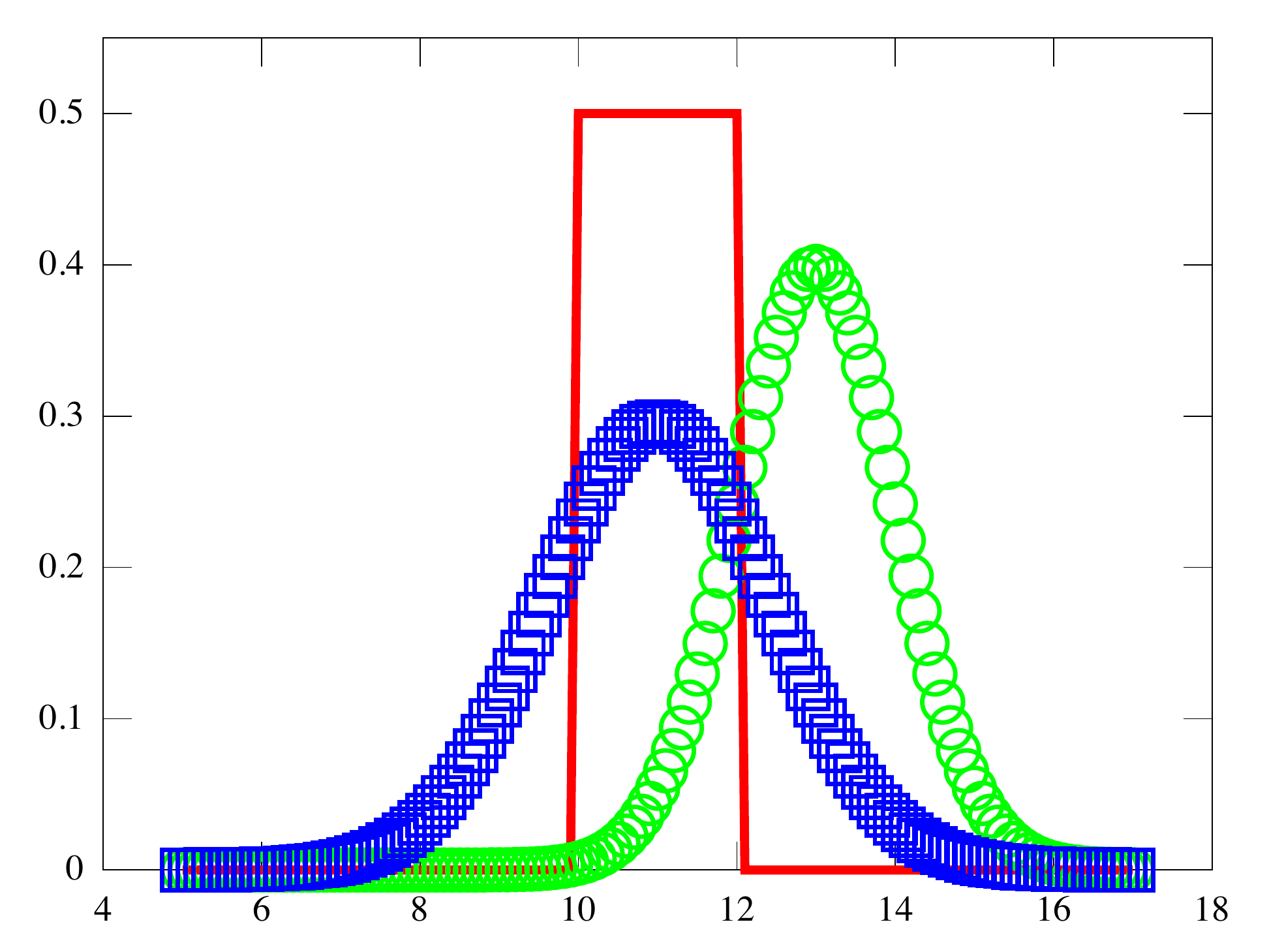}
  \caption{Belief  change for \( h \) at \( \ins \) (in solid red), a noisy move away from the wall (in green with circular markers), and after a second noisy move towards the wall (in blue with square markers).}
  \label{fig:uniform-gaussian}
\end{figure}
 \begin{enumerate}
	\item \( \bel(\xpos \geq 11, do(\move(-2,-2.01), \ins)) \approx .95  \) 
	
	We first observe that for calculating the degrees of belief, we have to consider all those successors of  initial situations wrt \( \move(-2, z) \) for every \( z \), where \( \phi \) holds. By \eqref{eq:p ssa}, the \( p \) value for such situations is the initial \( p \) value times the likelihood of \( \move(-2, z) \), which is \( \N(z+2; 0, 1) \) by \eqref{eq:error profile noisy move}. Therefore, we get \[
		\begin{array}{l}
\dis			\frac{1}{\gamma} \int_x \int_z \begin{cases}
				.5\times \N(z+2;0,1) & \text{ if \( \thereis \ivar. \) \( x \in [10,12] \), \( \xpos(\ivar) =x \) and \( (\xpos\geq 11)[do(\move(2,z),\ivar)] \) } \\ 
				0 & \text{otherwise}
			\end{cases}		\qquad \mbox{} 	\hfill = \\[3ex]
			\dis \frac{1}{\gamma} \int_x \int_z \begin{cases}
				.5\times \N(z+2;0,1) & \text{if \( \thereis \ivar. \) $ x\in [10,12]$, \( \xpos(\ivar) = x \) and \( \xpos(\ivar) - z \geq 11 \)} \\ 
				0 & \text{otherwise}
			\end{cases} \hfill  = \\[3ex]
			\dis \frac{1}{\gamma} \fullintegral \int_{10}^{12} \begin{cases}
				.5\times \N(z+2;0,1)  & \text{if \( x\in [10,12] \) and \( x-z \geq 11 \)} \\
				0 & \text{otherwise}
			\end{cases}
		\end{array} 
	\]
It is not hard to see that had the action been deterministic, the degree of belief in \( \xpos \geq 11 \) after moving away by 2 units should have been precisely 1. In Figure \ref{fig:uniform-gaussian}, we see the effect of this move, where the range of \( h \) values with non-zero densities extends considerably more than 2 units.

\item \( \bel(\xpos \geq 10, do([\move(-2,-2.01), \move(2,2.9)], \ins)) \approx .74 \)

The argument proceeds in a manner identical to the previous demonstration. The density function is further multiplied by a factor of \( \N(u-2; 0, 1) \), from \eqref{eq:error profile noisy move} and \eqref{eq:p ssa}. More precisely, we have \[
	\begin{array}{l} 
		\dis \frac{1}{\gamma} \int_x \int_z \int_u \begin{cases} .5\times\N(z+2;0,1)\times\N(u-2;0,1) & \text{if \( \thereis \ivar. \) \( x\in [10,12] \), \( \xpos(\ivar)=x \) and \( \xpos(\ivar) - z- u \geq 10 \)} \\
		0 & \textrm{otherwise}
		\end{cases} \hspace{1cm} \hfill = \\[3ex]
\dis \frac{1}{\gamma} \int_z \int_u \int_{10}^{12}  \begin{cases}
	.5\times\N(z+2;0,1)\times\N(u-2;0,1) & \text{if \( x\in [10,12] \) and \( x - z - u \geq 10 \)} \\ 
	0 & \text{otherwise}
\end{cases}	%
	\end{array} 
\]
If the action were deterministic, yet again the degree of belief about \( \xpos \geq 10 \) would be 1 after the intended actions. That is, the robot moved away by 2 units and then moved  towards the wall by another 2 units, which means that \( \xpos \)'s current value should have been precisely what the initial value was. 

See {Figure}~\ref{fig:uniform-gaussian} for the resulting density change. 
Intuitively, the resulting density changes as effectuated by the moves degrades the agent's confidence considerably. In {Figure}~\ref{fig:uniform-gaussian}, for example, we see that in contrast to a single noisy move, the range of \( h \) values considered possible has extended further, leading to a wide curve.

\item $\bel(\xpos\geq 11, do([\move(-2, -2.01), \sonar(11.5)], \ins) \approx .94 $

This demonstrates the result of a sensing action after a noisy move. Using arguments analogous to those in the previous item, it is not hard to see that  we have: \[
	\begin{array}{l}
\dis	\frac{1}{\gamma} \int_x \int_ z \begin{cases}
		.5\times \N(z+2; 0, 1) \times \N(x-z-11.5;0,.25) & \text{if \( \thereis \ivar.~x\in [10,12], \xpos(\ivar) = x \) and \( 
		\xpos(\ivar) - z \geq 11 \)} \\ 
		0 & \text{otherwise}
	\end{cases}
	\end{array}
\]

\item $\bel(\xpos\geq 11, do([\move(-2, -2.01), \sonar(11.5), \sonar(12.6)], \ins) \approx .99$ 

In this case, two successive readings around 12 strengthens the agent's belief about \( \xpos \geq 11. \) The density function is multiplied by \( 
\N(x-z-12.6;0,.25) \) because of \eqref{eq:p ssa} and \eqref{eq:sonar error model} as follows: \[
\begin{array}{l}
	\dis \frac{1}{\gamma} \int_x \int_ z \begin{cases}
		\delta \times \N(x-z-11.5;0,.25) \times \N(x-z-12.6;0,.25) & \text{if \( \thereis \ivar.~x\in [10,12], \xpos(\ivar)=x \), \( \xpos(\ivar) - z \geq 11 \)} \\ 
		0 & \text{otherwise}
	\end{cases}  \hfill = \\[3ex] 
	\dis \frac{1}{\gamma} \int_z \int_x \begin{cases} 
		\delta \times \N(x-z-11.5;0,.25) \times \N(x-z-12.6;0,.25) & \text{if \( x\in [10,12] \) and \( x - z \geq 11 \)} \\ 
		0 & \text{otherwise}
	\end{cases}
	\end{array}
\]
where \( \delta = .5\times \N(z+2;0,1) \), and 
 \( \gamma \) is \[
	\dis 	\int_z \int_x \begin{cases} 
			\delta\times \N(x-z-11.5;0,.25) \times \N(x-z-12.6;0,.25) & \text{if \( x\in [10,12] \)} \\ 
			0 & \text{otherwise}\end{cases}
\]
In Figure \ref{fig:uniform-gaussian-sensing}, the agent's increasing confidence is shown as a result of these sensing actions. Note that even though  the sensors are noisy, the agent's belief about \( 
\xpos \)'s true value sharpens because the sensor is a fairly accurate one.

\end{enumerate}

	\begin{figure}[t]
	  \centering
	    \includegraphics[width=.5\textwidth]{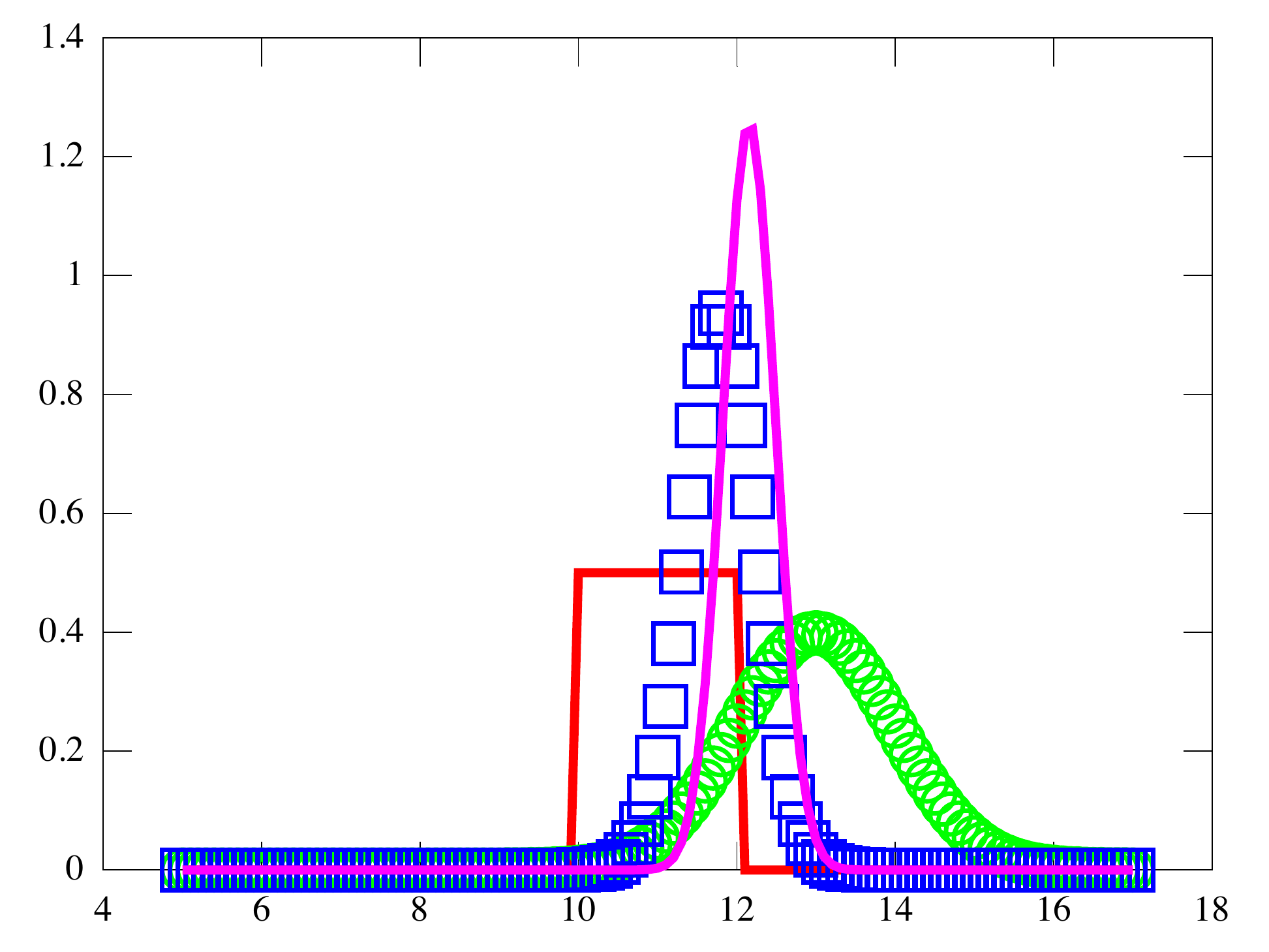}
	  \caption{Belief change for \( h \) at \( \ins \) (in solid red), after a noisy move (in green with circular markers), after sensing once (in blue with square markers), and after sensing twice (in solid magenta). }
	  \label{fig:uniform-gaussian-sensing}
	\end{figure}

\section{Generalization} %
\label{sec:generalizations}

Many real-world problems have both continuous and discrete components  (sensors, fluents, and/or effectors). Not surprisingly, discrete sensors can be easily modeled in the current scheme, as they only affect the $p$-values. Regarding fluents and effectors, it turns out that accommodating the more general case is an easy exercise, where an integration symbol in $\bel$ corresponding to a continuous fluent or action argument is replaced by a summation symbol. %

To clarify,  we proceed as follows.
  We begin with an example for discrete sensors, introduce a general definition for $\bel$ in the above sense, and finally conclude with an example that demonstrates this general setting.

\subsection{Example}

We understand a discrete sensor to mean a sensing action that is characterized by a finite number of possible observations. Thus, these observations would be associated with a probability rather than a density. Imagine the robot scenario from Figure \ref{fig:robot}. Suppose that instead of a sensor that returns a number indicating the distance to the wall, the robot is equipped with a crude binary  version. This latter sensor simply indicates whether the robot is close or far from the wall. 

Formally, suppose there is a sensing action \( \prox(z) \) where \( z \in \{ \ti{close}, \ti{far} \}. \) A noise-free model for the closeness sensor might be as follows: \[
	\begin{array}{l}
	l(\prox(z),s) = u \equiv (z = \ti{close} \land ((\xpos(s) \leq 1 \land u = 1) \lor (\xpos(s) >1 \land u = 0)) ) ~\lor \\ 
	\qquad \mbox{} \qquad \mbox{} \quad ~~\qquad \mbox{} \qquad \mbox{} (z = \ti{far} \land ((\xpos(s) > 1 \land u = 1) \lor (\xpos(s) \leq 1 \land u = 0)) ).
	\end{array}
\]
For the more interesting case of a noisy sensor, assume the following error profile: 
	\begin{equation}\label{eq:close sensor}
		\begin{array}{l} 
			l(\prox(z),s) = u \equiv (z = \close \land \xpos(s) \leq 3 \land u = 2/3) ~\lor \\
		\qquad \mbox{} \qquad \mbox{} \qquad \mbox{} \qquad \mbox{} \quad~~~ 	(z = \close \land \xpos(s) > 3 \land u = 1/3) ~\lor \\ 
		\qquad \mbox{} \qquad \mbox{} \qquad \mbox{} \qquad \mbox{} \quad~~~	(z = \far \land \xpos(s) \geq 4 \land u = 4/5) ~\lor \\ 
	\qquad \mbox{} \qquad \mbox{} \qquad \mbox{} \qquad \mbox{} \quad~~~		(z = \far \land \xpos(s)<4 \land u= 1/5). 
		\end{array}
	\end{equation}
Notice that the behavior of the sensor differs for the two values. So, for \( z = \close \) the sensor considers \( \xpos \leq 3 \) as a measure of closeness, and has an accuracy of 2/3. For \( z = \far, \) however, the sensor takes  \( \xpos \geq 4 \) to be a measure of being far, and has an accuracy \( 4/5. \)  

We now build a simple action theory using this sensor. For simplicity,  assume noise-free physical actions and a single fluent, \( \xpos \). 
 Let \( \D \) be the union of the $p$ specification \eqref{eq:uniform 2 12 continuous}, the successor state axiom  \eqref{eq:xpos ssa},  the above likelihood model \eqref{eq:close sensor}, in addition to \eqref{eq:p initial constraint}, \eqref{eq:p ssa} and \eqref{eq:stara}. For the actions \( A(z) \) in \( \D \), including the sensor, let \[
	\alt(A(z), a', u) \equiv a' = A(u) \land u =z. 
\]
We now analyze belief change on the application of this sensor. 
\begin{theorem} The following are entailments of \( \D \):

\end{theorem}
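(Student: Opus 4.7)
The plan is to handle each entailment by reducing $\bel$ to an explicit integral over $x_1$ (the variable ranging over possible values of $\xpos$), exploiting the fact that the discrete sensor only contributes a piecewise-constant likelihood factor rather than a density. Because the alternate actions axiom for every action type $A(z)$ collapses to $a' = A(z)$, the inner $\int_{\vec z}$ in Definition~\ref{defn:belief continuous effector sensor} degenerates into a single point, so the formula $P(\vec x, \vec z, \phi, s)$ is nonzero for only one choice of $\vec z$. This allows me to replace the double integral with a single integral over $x_1$, mirroring the derivations used in Theorem~\ref{thm:example}.

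Next, I would unwind $p(do(\alpha,\ivar),do(\alpha,\ins))$ using the successor state axiom~\eqref{eq:p ssa} applied $k$ times for an action sequence $\alpha=[a_1,\ldots,a_k]$. Each physical action $\xmove(z)$ is deterministic and contributes a likelihood of $1$; each sensing action $\prox(z)$ contributes the piecewise factor given by~\eqref{eq:close sensor}. Using the precondition that actions are always executable, the weight becomes $p(\ivar,\ins) \cdot \prod_{j} l(a_j, do([a_1,\ldots,a_{j-1}],\ivar))$. Then I substitute the initial density~\eqref{eq:uniform 2 12 continuous} so that the integrand vanishes outside $[2,12]$, and I apply the successor state axiom~\eqref{eq:xpos ssa} to re-express $\phi[do(\alpha,\ivar)]$ as a condition on $\xpos(\ivar)$ alone.

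At this point each entailment reduces to a ratio of two ordinary one-dimensional Riemann integrals over $[2,12]$ of a piecewise-constant integrand (constants depend on the sensor outcomes and on which side of the threshold $3$ or $4$ the fluent lies). For a single $\prox(\close)$ reading starting from the uniform prior, for example, the numerator decomposes as $\int_2^3 (0.1)(2/3)\,dx_1 + \int_3^{12}(0.1)(1/3)\,dx_1$ weighted by the indicator for $\phi$, and the normalizer $\gamma$ is the same expression with $\phi \equiv \true$. Analogous decompositions handle $\prox(\far)$ and iterated sensing: each additional reading multiplies the piecewise integrand by another factor drawn from~\eqref{eq:close sensor}, keeping the integral finite-dimensional because the pieces are determined by the thresholds $3$ and $4$.

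The main obstacle I anticipate is bookkeeping rather than conceptual: because the sensor's piecewise structure uses different thresholds for $\close$ (threshold $3$) and $\far$ (threshold $4$), combinations of readings partition $[2,12]$ into several subintervals on which the integrand is a distinct constant, and one must be careful not to conflate these. Once the partition is identified, the integrals reduce to products of interval lengths and constants, and the ratios simplify to explicit rationals; the key consistency check is that the posterior shifts weight toward $[2,3]$ after $\close$ and toward $[4,12]$ after $\far$, exactly as Theorem~\ref{thm:bayes theorem} (applied in its discrete-sensor analogue) predicts.
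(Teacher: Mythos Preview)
Your approach is essentially the same as the paper's: unwind $p$ via \eqref{eq:p ssa} to pick up the piecewise-constant likelihood factor from \eqref{eq:close sensor}, substitute the uniform prior \eqref{eq:uniform 2 12 continuous}, regress $\phi[do(\alpha,\ivar)]$ to a condition on $\xpos(\ivar)$ via \eqref{eq:xpos ssa}, and evaluate the resulting ratio of piecewise-constant integrals over $[2,12]$. Two small remarks: first, the actual entailments here are $\bel(\xpos\leq 4, do(\prox(\close),\ins))=3/11$ and $\bel(\xpos\leq 4, do([\move(1),\prox(\close)],\ins))=5/12$, so the second item exercises exactly the physical-action-then-sense case you already sketched (the threshold $3$ in the likelihood is tested against $\xpos(\ivar)-1$), rather than $\prox(\far)$ or iterated sensing; second, your claim that $\int_{\vec z}$ ``degenerates to a single point'' is the right intuition but would be better phrased by noting that with $\alt(A(z),a',u)\equiv a'=A(u)\land u=z$ the action has a single discrete alternative, so the relevant clause of Definition~\ref{defn:continuous and discrete fluents belief} is the finite sum over a singleton, not a Riemann integral over a point (which would vanish).
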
 

\begin{enumerate}
	\item \( \bel(\xpos \leq 4, do(\prox(\close),\ins)) = 3/11  \)
	
	By \eqref{eq:p ssa}, after \( a = \prox(\close) \) the term \( p(do(a,s'), do(a,s))  \) is obtained from \( p(s', s) \times \Err(\close,\xpos(s')) \), where \( \Err(u_1, u_2) \) denotes the error profile \eqref{eq:close sensor} of the closeness sensor. To compute the belief term, let us first resolve the normalization factor \( \gamma. \) 
	It is not hard to see that  \( \gamma \) evaluates to \[
		\begin{array}{l}
\dis			\int_x  \begin{cases}
.1\times 2/3 & \text{if \( \thereis \ivar.~\xpos(\ivar) = x, x \in [2,12]  \) and \( \xpos(\ivar) \leq 3 \)} \\ 
.1 \times 1/3 & \text{if \( \thereis \ivar.~\xpos(\ivar) = x, x \in [2,12]  \) and \( \xpos(\ivar) > 3 \)} \\
0 & \text{otherwise}
\end{cases}
		\end{array}
	\]
Since \( \D_0 \) assigns a non-zero density to only those situations where \( \xpos \in [2,12] \) one obtains the above normalization factor.  Moreover, after doing \( \prox(\close) \), the density of those situations where \( \xpos\leq 3 \) is multiplied by a factor of 2/3, while the density of the remaining situations is multiplied by a factor of 1/3. 

Formulating the numerator is  analogous,  except that only those situations where \( \xpos \leq 4 \) are to be considered. To be precise, the degree of belief in \( \xpos \leq 4 \) after the sensing action is: \[
	\begin{array}{l}
		\dis \frac{1}{\gamma} \int_x \begin{cases} 
		.1 \times 2/3 & \text{if \( x\in [2,12], x\leq 3, x \leq 4 \)} \\ 
		.1 \times 1/3 & \text{if \( x\in [2,12], x> 3, x \leq 4 \)} \\
		0 & \text{otherwise}
		\end{cases} \hspace{3cm} \hfill = \\[5ex]
\dis \frac{1}{\gamma} 	\int_x \begin{cases} 
		.1 \times 2/3 & \text{if \( x\in [2,3] \)} \\ 
		.1 \times 1/3 & \text{if \( x\in (3,4] \)} \\
		0 & \text{otherwise}
\end{cases}	
	\end{array}
\]
This then amounts to 3/11. 
\item \( \bel(\xpos\leq 4, do([\move(1), \prox(\close)])) = 5/12 \)	

We proceed in  a manner analogous to the previous item. Using \eqref{eq:p ssa} and \eqref{eq:close sensor}, we obtain: \[
	\begin{array}{l}
		\dis \frac{1}{\gamma} \int_x \begin{cases}
			.1 \times 2/3 & \text{if \( \thereis \ivar.~\xpos(\ivar) = x, \xpos(do(\move(1),\ivar)) \leq 3 \) and \( \xpos(do(\move(1),\ivar)) \leq 4 \)} \\
			.1\times 1/3 & \text{if \( \thereis \ivar.~\xpos(\ivar) = x, \xpos(do(\move(1),\ivar)) > 3 \) and \( \xpos(do(\move(1),\ivar)) \leq 4 \)} \\
			0 & \text{otherwise} 
		\end{cases} \hspace{1cm} = \\[5ex] 
	\dis \frac{1}{\gamma} \int_x \begin{cases}
		.1 \times 2/3 & \text{if \( x\in [2,12], x -1 \leq 3 \) and \(  x - 1\leq 4 \)} \\
			.1\times 1/3 & \text{if \( x\in [2,12], x-1>3 \) and \( x-1\leq 4 \)} \\
			0 & \text{otherwise}
	\end{cases}	
	\end{array}
\]
The numerator amounts to 5/3 and the normalization factor \( \gamma  \) is 4, leading to a degree of belief of 5/12. 
\end{enumerate}

\subsection{A General Definition for Belief}

The main idea is to simply allow the range of some fluents to be taken from finite sets. So, reconsider Definition \ref{defn:continuous belief}, where the fluents and the action arguments were assumed to take values from $\real$. Then, suppose fluents $f\sub 1, \ldots, f\sub n$ takes values from $\real$, and fluents $g\sub 1, \ldots, g\sub m$ take values from finite sets. Intuitively, $f\sub i$ is to be seen as a continuous probabilistic variable, and $g\sub {\bar i}$  as  a discrete probabilistic variable.\footnote{Of course, discrete probabilistic variables can also take values from infinite sets, and by way  of Section \ref{sub:integration}, limits can be used to define the sum of an infinite sequence of terms, that is, a series. For simplicity of presentation, however, we assume discrete fluents and action arguments   take values from finite sets.} Analogously, suppose $a\sub 1, \ldots, a\sub o$ are action types such that $\alt(a\sub j, b, z)$ holds for $z\in \real$, and suppose $d\sub {1}, \ldots, d\sub l$ are action types such that $\alt(d\sub {\bar j}, r, u)$ holds for $u$ taken from a finite set. 
Intuitively, $a\sub j$ is to be seen as a noisy action characterized by a continuous probability distribution, and $d\sub {\bar j}$ as a noisy action characterized by a discrete probability distribution. Then:

\begin{ldefn}\label{defn:continuous and discrete fluents belief} {\bf (Degrees of belief (discrete and continuous fluents, sensors and effectors)).} Suppose \( \phi \) is any \( \L \)-formula. The {\it degree of belief} in $\phi$ at $s$, written \( \bel(\phi,s) \), is an abbreviation:  \[ \begin{array}{l}
\bel(\phi,s) \doteq	\dis \frac{1}{\gamma} \int_{\vec x} \sum \raisebox{-7pt}{${\!}_{\vec{y}}$} \int_{\vec z}  \sum \raisebox{-7pt}{${\!}_{\vec{u}}$} P(\vec x \cdot \vec y, \vec z \cdot \vec u, \phi, s)
		\end{array}
\]
where if $s=do(\alpha,\ins)$ for \( \alpha = [a_1, \ldots, d\sub 1, \ldots, a_o, \ldots, d_l] \), then \[ \begin{array}{l}
	P(\vec x \cdot \vec y, \vec z \cdot \vec u, \phi, s) \doteq \\ 
	\quad\quad \lan 	\ivar, {b_1, \ldots, b_o, r_1, \ldots, r_l }.~\bigwedge f_i(\ivar) = x_i \land \bigwedge g_{\bar i}(\ivar) = y_{\bar i} ~\land \\
	\quad\quad\quad\quad \quad\quad\quad\quad \quad\quad\quad \bigwedge \alt(a_j, b_j, z_j) \land \bigwedge \alt(d_{\bar j}, r_{\bar j}, u_{\bar j}) \land \phi[do([b_1, \ldots, r_l], \ivar)]  \\ \quad\quad\quad\quad \quad\quad\quad\quad \quad\quad\quad\quad \quad\quad\quad\quad \quad\quad\quad\quad \quad\quad\quad\quad  \rightarrow p(do([b_1, \ldots, r_l],\ivar), do([a_1, \ldots, d_l], \ins))\, \ran.
\end{array}
\] 
Here, $\cdot$ is for the concatenation of variables, $i$ ranges over the indices of the continuous fluents, $\bar i$ over the indices of the discrete fluents, $j$ over the indices of the continuous actions, and $\bar j$  over the indices of the discrete actions.  
\end{ldefn}

Naturally, by means of \eqref{eq:stara}, there is an initial situation for every possible real number for \( f_1, \ldots, f_n \) and for every possible vector of values for \( g\sub 1, \ldots, g_m. \) 
The intuition is as before. That is, owing to a bijection between situations and the vector of fluent values, for any given value for \( x_1, \ldots, x_n, y\sub 1, \ldots, y\sub m \), there is a unique initial situation  where \( f_i \) has the value \( x_i \) and $g\sub {\bar i}$ has the value $y\sub {\bar i}$. The only difference to what we had earlier is that instead of just integrating over possible values for \( \vec x \), of course, we integrate over values for \( \vec x \) and sum over possible values for \( \vec y \) while using the \( p \)-values of successor situations where \( \phi \) holds. When a noisy action occurs, the space of possible successor situations is determined by $\alt(a_j,b,z)$ for a noisy action with a continuous probabilistic model, and the space is determined by $\alt(d_{\bar j},r,u)$ otherwise.

\subsection{Example} %
\label{sub:scheme_e_and_scheme_f}

To demonstrate the more intricate case of discrete and continuous fluents, and discrete and continuous action arguments, we consider an example of a robot moving towards the wall with a window, as shown in Figure \ref{fig:robotwindow}. Like with Figure \ref{fig:robot}, let a fluent \( h \) denote the distance to the wall. 
Let \( \win \) be a  fluent that captures the status of the window in the sense of whether it is opened or closed,  with \( \win = 1 \) meaning that it is open and \( \win = 0 \) meaning that it is closed.

To change the status of the window, we imagine a noisy effector \( \chw(x,y) \) where the agent intends on setting \( \win \) to \( x \), but it is, in fact, set to \( y. \) We formally account for this effector using: \begin{equation}\label{eq:alt chw}
	\alt(\chw(x,y), a', z, s) \equiv a' = \chw(x,z)
\end{equation}
\begin{equation}\label{eq:likelihood chw}
	\begin{array}{l}
	l(\chw(x,y),s) = u \equiv ((x = 0 \lor x = 1) \land x=y \land u = .75) ~\lor \\ 
	\qquad \mbox{} \qquad \mbox{} \qquad \mbox{} \qquad \mbox{} ~~~~~((x = 0 \lor x = 1) \land (y = 0 \lor y = 1) \land |x-y| = 1 \land u = .25) ~\lor \\ 
	\qquad \mbox{} \qquad \mbox{} \qquad \mbox{} \qquad \mbox{} ~~~~~	( (\neg (x = 0 \lor x = 1) \lor \neg(y = 0 \lor y = 1) ) \land u = 0). 
	\end{array}
\end{equation}
This can be seen as saying that \( \set{x,y} \) take values \( \set{0,1} \), and the likelihood of \( x \) and \( y \) agreeing is .75, and that of  disagreeing is .25. So, it is very likely that the action succeeds. 

From this, we then provide the following successor state axiom for \( \win \): \begin{equation}\label{eq:ssa win}
	\begin{array}{l}
	\win(do(a,s)) = u \equiv \thereis x,y[\chw(x,y) \land u = y] ~\lor \\ 
\qquad \mbox{} \qquad \mbox{} \qquad \mbox{}~~~~~	\neg\thereis x,y [\chw(x,y) \land u = \win(s)].
	\end{array}
\end{equation}
For simplicity, assume that the action \( \xmove(z) \) moves the robot towards the wall by \( z \) units (that is, it is deterministic), for which we use the  successor state axiom \eqref{eq:xpos ssa}.

Analogously, imagine a noisy sensor \( \seew(z) \) that provides a reading of \( x \) for the status of the window, but with the following error profile: \begin{equation}\label{eq:likelihood seew}
	\begin{array}{l}
	l(\seew(z),s) = u \equiv (z = 1 \land \win(s) = 1 \land u = .8) ~\lor \\
\hfill	(z = 1 \land \win(s) = 0 \land u = .2) ~\lor \\
\hfill	(z = 0 \land \win(s) = 1 \land u = .3) ~\lor \\
\hfill	(z = 0 \land \win(s) = 0 \land u = .7). ~~~
	\end{array}
\end{equation}
That is, when returning 1, the sensor gives the correct reading with a probability of .8, but when returning 0, the sensor gives the correct reading with a probability of .7. Alternate action axioms are specified as usual for sensors: \begin{equation}\label{eq:alt seew}
	\alt(\seew(x), a', z, s) \equiv a' = \seew(z) \land z=x. 
\end{equation}

\begin{figure}[ht]
	\centering
		\includegraphics[height=3.5cm]{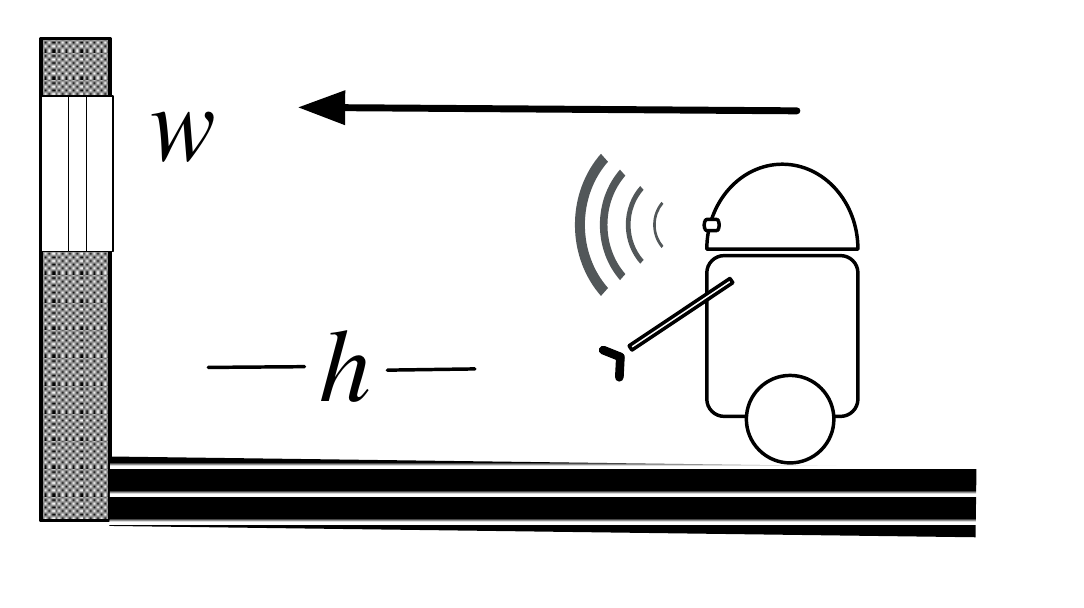}
	\caption{Robot moving towards a wall that has a window.}
	\label{fig:robotwindow}
\end{figure} 

To finalize the  example, 
 let \( \D \) be a union of \eqref{eq:xpos ssa},  \eqref{eq:alt chw}, \eqref{eq:likelihood chw}, \eqref{eq:ssa win}, \eqref{eq:alt seew}, \eqref{eq:likelihood seew}, together with \eqref{eq:p initial constraint}, \eqref{eq:p ssa}, \eqref{eq:stara}, and the following initial axiom for \( p \): \begin{equation}\label{eq:p initial spec with window fluent}
\begin{array}{l}
p(\ivar, \ins) = \begin{cases}
.5 \times .6 & \text{if \( 10\leq \xpos(\ivar) \leq 12 \) and \( \win(\ivar)  = 1 \)} \\
.5 \times .4 & \text{if \( 10 \leq \xpos(\ivar) \leq 12 \) and \( \win(\ivar) = 0 \)} \\ 
0 & \text{otherwise}
\end{cases}
\end{array}
\end{equation}
That is, \( \xpos \) and \( \win \) are independent, \( \xpos \) is uniformly distributed on \( [10,12] \) and the window being open has a probability of .6. 

\begin{theorem} The following are entailed by \( \D \): 

\end{theorem}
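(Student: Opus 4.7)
The plan is to establish each entailment by direct expansion of $\bel(\phi, s)$ using Definition~\ref{defn:continuous and discrete fluents belief}, followed by algebraic simplification using \eqref{eq:p initial constraint}, \eqref{eq:p ssa}, the action-specific axioms, and the arithmetic facts that hold in every $\real$-interpretation.

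First, for an action sequence $\alpha = [a_1,\ldots,a_k]$ I would unfold
$$\bel(\phi, do(\alpha,\ins)) \;=\; \frac{1}{\gamma}\int_{x}\sum_{y\in\{0,1\}}\sum_{\vec v} P(\langle x,y\rangle, \vec v, \phi, do(\alpha,\ins)),$$
where $x$ ranges over values of the continuous fluent $\xpos$, $y$ over values of the discrete fluent $\win$, and each component of $\vec v$ enumerates the two possible discrete outcomes (in $\{0,1\}$) of one occurrence of $\chw$ in $\alpha$; by \eqref{eq:alt seew} and the fact that $\xmove$ is deterministic, neither $\seew$ nor $\xmove$ contributes a proper sum over alternatives. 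This reduces the belief term to a finite combination of one-dimensional Riemann integrals in $x$.

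Second, I would use \eqref{eq:p ssa} repeatedly to unfold the inner $p$-term. Since $\poss$ is trivially satisfied by all actions in $\D$,
$$p(do([b_1,\ldots,b_k],\ivar),\, do([a_1,\ldots,a_k],\ins)) \;=\; p(\ivar,\ins)\prod_{j=1}^{k} l(b_j, do([b_1,\ldots,b_{j-1}],\ivar)).$$
Each factor $l(b_j,\cdot)$ is then computed from \eqref{eq:likelihood chw}, \eqref{eq:likelihood seew}, or the deterministic $l(\xmove(z),s)=1$, and $p(\ivar,\ins)$ is read off from \eqref{eq:p initial spec with window fluent}, which restricts the nonzero part of the integrand to $x\in[10,12]$ and $y\in\{0,1\}$. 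For each trajectory $[b_1,\ldots,b_k]$ singled out by $\vec v$, I would use \eqref{eq:xpos ssa} and \eqref{eq:ssa win} to write the values of $\xpos$ and $\win$ at $do([b_1,\ldots,b_k],\ivar)$ in closed form: $\xpos$ is shifted only by $\xmove$ (using its actual argument), and $\win$ is overwritten only by each $\chw$-alternative (using its actual $y$-argument). The truth of $\phi$ at the final situation thus becomes a purely arithmetic/propositional condition on $(x,y,\vec v)$, carving out the region that contributes to the integral and sum. Evaluating the resulting finite combination of integrals and sums (justified within every $\real$-interpretation by Theorem~\ref{thm:singleint} and the corollary for the multivariate case) and dividing by $\gamma$, computed by the same steps with $\phi$ replaced by $\true$, yields the claimed numerical value.

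The main obstacle is book-keeping rather than any deep mathematical difficulty: the likelihood $l(b_j,\cdot)$ must be evaluated at the \emph{intermediate} situation $do([b_1,\ldots,b_{j-1}],\ivar)$ rather than at $\ivar$, which matters precisely when an earlier $\chw$-alternative has already flipped $\win$, so that a subsequent $\seew$-likelihood from \eqref{eq:likelihood seew} depends on that alternative rather than on the initial value at $\ivar$. One must also be careful that each $\alt$-axiom is applied correctly so that $\xmove$ and $\seew$ contribute only trivial (singleton) alternatives while each $\chw$ produces the two discrete branches whose likelihoods \eqref{eq:likelihood chw} weigh the ``success'' and ``flip'' outcomes by $3/4$ and $1/4$ respectively.
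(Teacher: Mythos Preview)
Your proposal is correct and follows essentially the same approach as the paper: both proceed by direct expansion of the $\bel$ abbreviation, unfolding $p$ via \eqref{eq:p ssa} into a product of the initial density \eqref{eq:p initial spec with window fluent} and the relevant likelihood factors, then simplifying the resulting integral-sum expressions case by case. Your explicit remark that the $\seew$-likelihood in the final item must be evaluated at the intermediate situation (and hence depends on the $\chw$-alternative's outcome $z$ rather than on the initial value $y$) is exactly the point the paper exploits when computing the denominator $\gamma$ there.
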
 \begin{enumerate}
	\item \( \bel(\win = 0, \ins) = .4 \)
	
	We are to compute the  following term, which is easily shown to be equal to .4: \[
		\dis \frac{1}{\gamma} \int_x \prettysum y \begin{cases}
		.5 \times .6 & \text{$\thereis \ivar.~x \in [10,12], \xpos(\ivar) = x,  \win(\ivar) = y, $ \( y = 1 \) and \( \win(\ivar) = 0 \)} \\ 
			.5 \times .4 & \text{$\thereis \ivar.~x \in [10,12], \xpos(\ivar) = x,  \win(\ivar) = y, $ \( y = 0 \) and \( \win(\ivar) = 0 \)} 	 \\ 
			0 & \text{otherwise}
		\end{cases}
	\]
	
	\item \( \bel(\win = 0, do(\move(1),\ins)) = .4 \)
	
	Owing to Reiter's solution to the frame problem, belief in the closed window does not change on moving laterally. If we were to expand the belief term, we could get a logical term equivalent to the one in the previous item. 
	
	\item \( \bel(\win=0, do(\chw(0,1),\ins))  = .75 \)
	
	After attempting to close the window, the agent's belief about the window being closed is .75. Not surprisingly, since the action sets the final value of \( \win \) (as opposed to toggle its value), the degree of belief precisely corresponds to the probability of the action getting successfully executed in \eqref{eq:likelihood chw}. Expanding the belief term,  we first obtain: \[
		\begin{array}{l}
		\dis \frac{1}{\gamma} \int_x \prettysum y \prettysum z \begin{cases}
		.5 \times .6 \times \delta(0,z) & \text{\( \thereis \ivar.~\xpos(\ivar) = x, x\in [10,12], \win(\ivar) = y, y = 1, (\win = 0)[do(\chw(0,z),\ivar)] \)} \\ 
	.5 \times .4 \times \delta(0,z) & \text{\( \thereis \ivar.~\xpos(\ivar) = x, x\in [10,12], \win(\ivar) = y, y = 0, (\win = 0)[do(\chw(0,z),\ivar)] \)} \\ 
	0 & \text{otherwise}		
		\end{cases}
		\end{array}
	\]
Here \( \delta(0,z) \)	is the probability assigned to the substitution of \( z \) in \( \chw(0,z) \) via \eqref{eq:likelihood chw}. 
Simplifying the above, we get: \[
\begin{array}{l}
		\dis \frac{1}{\gamma} \int_x \prettysum y \prettysum z  \begin{cases}
			.5 \times .6 \times .75 & \text{if \( \thereis \ivar.~\xpos(\ivar) = x, x\in [10,12], \win(\ivar) = y, y = 1, z = 0, \) \( (\win = 0)[do(\chw(0,z),\ivar)] \)} \\ 
			.5 \times .4 \times .75 & \text{if \( \thereis \ivar.~\xpos(\ivar) = x, x\in [10,12], \win(\ivar) = y, y = 0, z = 0, \) \( (\win = 0)[do(\chw(0,z),\ivar)] \)} \\
			0 & \text{otherwise} 
		\end{cases} \hspace{.3cm} =  \\[5ex]
	\dis \frac{1}{\gamma} \int_x \prettysum y \prettysum z \begin{cases}
	.5 \times .6 \times .75 & \text{if \( x \in [10,12], y = 1, z = 0 \)} \\ 
	.5 \times .4 \times .75 & \text{if \( x\in [10,12], y= 0, z = 0 \)} \\ 
	0 & \text{otherwise}
	\end{cases}
\end{array}
\] Essentially, there are only two ways that the window can be closed after doing \( \chw(0,z). \) Either the window is closed initially and \( z = 0 \), or the window is open and again \( z = 0. \) This leads to .75. 
\item \( \bel(\win = 0, do([\chw(0,1), \seew(0)], \ins)) = .875 \)

After observing that the window is closed from its sensor, the robot's belief about the window being closed increases. Proceeding in a manner analogous to above, after simplifications, we get: \[
	\dis \frac{1}{\gamma} \int_{x} \prettysum y \prettysum z \begin{cases}
	.5 \times .6 \times .75 \times .7 & \text{if \( \thereis \ivar.~\xpos(\ivar) = x, x\in [10,12], \win(\ivar) = y, y = 1, z = 0, (\win = 0)[do(\alpha,\ivar)] \)} \\ 
	.5 \times .4 \times .75 \times .7 & \text{if \( \thereis \ivar.~\xpos(\ivar) = x, x\in [10,12], \win(\ivar) = y, y = 0, z = 0, (\win = 0)[do(\alpha,\ivar)] \)} \\
	0 & \text{otherwise}
	\end{cases}
\]
where \( \alpha = [\chw(0, z), \seew(0)]. \) The numerator simplifies to \( .75 \times .7. \) It can be shown that \( \gamma \) is: \[
	\dis  \int_{x} \prettysum y \prettysum z \begin{cases}
	.5 \times .6 \times .75 \times .7 & \text{if \( \thereis \ivar.~\xpos(\ivar) = x, x\in [10,12], \win(\ivar) = y, y = 1, z = 0, (\true)[do(\alpha,\ivar)] \)} \\ 
	.5 \times .4 \times .75 \times .7 & \text{if \( \thereis \ivar.~\xpos(\ivar) = x, x\in [10,12], \win(\ivar) = y, y = 0, z = 0, (\true)[do(\alpha,\ivar)] \)} \\
	.5 \times .6 \times .25 \times .3 & \text{if \( \thereis \ivar.~\xpos(\ivar) = x, x\in [10,12], \win(\ivar) = y, y = 1, z = 1, (\true)[do(\alpha,\ivar)] \)} \\
	.5 \times .4 \times .25 \times .3 & \text{if \( \thereis \ivar.~\xpos(\ivar) = x, x\in [10,12], \win(\ivar) = y, y = 0, z = 1, (\true)[do(\alpha,\ivar)] \)} \\
	0 & \text{otherwise}
	\end{cases}
\]
\end{enumerate}
which leads to \( .75 \times .7 + .25\times .3. \) Thus,  the belief in  the window being closed is .875.

\subsection{Summary} %
\label{sub:summary}

This completes the specification for reasoning about beliefs with discrete and continuous fluents against noisy effectors and sensors. To summarize, the generalization of the BHL scheme to arbitrary domains is defined using convenient abbreviations for $\bel$, sums and integrals, and where an action theory consists of: \begin{itemize}
\item $\D_0$ describing the initial state, which also includes \eqref{eq:p initial constraint} and \eqref{eq:stara};
\item successor state axioms as before, including a fixed one for \( p \), namely \eqref{eq:p ssa};
\item alternate actions axioms for noisy effectors;
\item likelihood axioms for noisy sensors and noisy effectors; and
\item precondition axioms and foundational axioms as before.
\end{itemize}

It is perhaps also worth noting that our specification extends  BHL in a minimal way. They do not need  a fixed space of situations, which we do by \eqref{eq:stara}, but this is very reasonable \cite{DBLP:journals/etai/LevesquePR98}.
Likelihood axioms are specified for us in much the same manner as they would. Their treatment of noisy actions and sensors is slightly different in expecting the modeler to provide \emph{indistinguishability} axioms. Briefly, these axioms speculate the set of actions that are observationally indistinguishable to the agent. Roughly, then, this serves the same purpose as our alternate actions axioms. However, their approach requires the successor state axiom for \( p \) to include notions of indistinguishability. But perhaps most significantly, as we mentioned earlier, their approach needs to appeal to \golog~to reason about belief change after noisy effectors, which we do not. \smallskip 

To conclude our technical treatment, let us attempt to simplify Definition \ref{defn:continuous and discrete fluents belief}. We will first abuse notation and use a single symbol to denote either sums or integrals, with the understanding that they expand appropriately for a given term \( t. \) For the sake of the discussion, let us use \( \int\sub x t \) to mean the integration of the function \( t(x) \) from negative to positive infinity when the function takes values from \( \real \), and the sum of terms otherwise. Under this notational convention, let us suppose \( f\sub 1, \ldots, f\sub n \) are all the fluents in the language, some of which take values from \( \real \) and others take values from finite sets. Suppose \( a\sub 1, \ldots, a\sub l \) are all the action types in the language such that for  some action types \( \alt(a, b, z) \) holds for \( z\in \real \), and for the remaining action types, \( \alt(a, b, z)  \) holds for \( z \) taking values from finite sets.  Then, we obtain a proposal like Definition \ref{defn:continuous belief}: 

\begin{ldefn}\label{defn:belief continuous effector sensor} \textbf{(Degrees of belief (simplified and general).)} Suppose \( \phi \) is any \( \L \)-formula, and let \( \int\sub x t \) denote integration or summation over term \( t \) as appropriate. Then the \emph{degree of belief} in \( \phi \) at \( s \), written \( \bel(\phi, s), \) is defined as an abbreviation: 
	\[ \bel(\phi,s) \doteq  \begin{array}{l}
	\dis \frac{1}{\gamma}  \int_{\vec x \,\cdot\, \vec z}  P(\vec x, \vec z, \phi,s)
	\end{array}
\]
where, if \( s = do(\alpha,\ins) \) for \( \alpha = [a_1, \ldots, a_k] \) and suppose \( \beta = [b\sub 1, \ldots, b\sub k] \), then \[ \begin{array}{l}
	P(\vec x, \vec z, \phi,s) \doteq  \lan 	\ivar, b\sub 1, \ldots, b\sub k.~\bigwedge f_i(\ivar) = x_i \land \bigwedge \alt(a_j, b_j, z_j) \land \phi[do(\beta, \ivar)]  \rightarrow p(do(\beta,\ivar), do(\alpha, \ins)) ~\ran.
\end{array}
\]
That is, $i$ ranges over the indices of the fluents, and $j$ over the indices of the ground action terms.

\end{ldefn}

\section{Related Work} %
\label{sec:related_work}

This article focused on degrees of belief in a first-order dynamic setting. In particular, the existing scheme of BHL was generalized to handle both discrete and continuous probability distributions while retaining all the advantages. Related efforts on belief update via sensor information can be broadly classified into two camps: the literature on probabilistic formalisms, and those that extend logical languages. We discuss them in turn. At the outset, we remark that the focus of our work is on developing a general framework, and not on computational considerations, efficiency or otherwise.

From the perspective of probabilistic  modeling, graphical models \cite{books/daglib/0023091}, such as 
Bayesian networks  \cite{pearl1988probabilistic}, are important formalisms for dealing with probabilistic uncertainty in general, and the uncertainty that would arise from noisy sensors in particular. Mainly, when random variables are defined by a set of dependencies, the density function can be compactly factorized using these formalisms. The significance of such formalisms is computational, with reasoning methods, such as filtering, being a fundamental component of contemporary robotics and machine learning technologies  \cite{dean1991planning,fox2003bayesian,thrun2005probabilistic}. On the representation side, however, these formalisms have difficulties handling strict uncertainty, as would arise from connectives such as disjunctions. (Proposals such as credal networks \cite{Cozman2000199} allow for certain types of partial specifications, but still do not offer the generality of arbitrary logical constraints.) Moreover, since rich models of actions are rarely incorporated, shifting conditional dependencies and distributions are hard to address in a general way. While there are graphical model frameworks with an account of actions, 
such as  \cite{darwiche1994action,DBLP:conf/kr/HajishirziA10}, 
they too have difficulties handling strict uncertainty and
quantification. To the best of our knowledge, no existing probabilistic formalism handles changes in state variables like those considered here.

This inherent limitation of probabilistic formalisms led to a number of influential proposals on combining logical and probabilistic specifications \cite{nilsson1986probabilistic}. (The synthesis of deductive reasoning and the probability calculus has a long and distinguished history \cite{nla.cat-vn1006072,Gaifman19641} that we do not review here; see \cite{citeulike:115039} and references therein.) The works of Bacchus and Halpern 
\cite{174658,bacchus1990representing}, for example, provide  the means to specify properties about the domain together with probabilities about propositions; see \cite{Ognjanovic2000191} for a recent list on first-order accounts of probability. But these do not explicitly address reasoning about actions. As we noted, treating actions in a general way requires, among other things, addressing the frame problem, reasoning about what happened in the past and projecting the future,  handling contextual effects, as well as appropriate semantical machinery. We piggybacked on the powerful situation calculus framework, and extended that theory for reasoning about continuous uncertainty. 

In a similar vein, from a modal logical  perspective, the interaction between categorical knowledge, on the one hand, and degrees of belief, on the other, is further discussed in \cite{174658,RePEc:eee:gamebe:v:1:y:1989:i:2:p:170-190,Heifetz200131}. While these are essentially propositional, there are first-order variants \cite{Belleoknowprob}. Actions are not explicitly addressed, however.

Recently in AI, limited versions of probabilistic logics have been discussed, in the form of relational graphical models, Markov logic networks, probabilistic databases and probabilistic programming \cite{DBLP:conf/aaai/KollerP98,DBLP:conf/icml/GetoorFKT01,DBLP:journals/cacm/Russell15,DBLP:conf/ijcai/Poole03,DBLP:conf/aaai/DomingosW12,DBLP:conf/ijcai/RaedtKT07,suciu2011probabilistic,DBLP:conf/uai/SinglaD07,DBLP:conf/ijcai/MilchMRSOK05}.
Some have been further extended for 
 continuous probability distributions and temporal reasoning \cite{Choi:2011fk,DBLP:conf/iros/NittiLR13,anderson2002relational,Belle:2015af}. Overall, these limit the first-order expressiveness of the language, do not treat actions in a general way, and do not handle strict uncertainty. Admittedly, syntactical restrictions in these frameworks are by design, in the interest of tractability (or at least decidability) wrt inference, as they have origins in the richer probabilistic logical languages mentioned above \cite{174658,bacchus1990representing}. From the point of view of a general-purpose representation language, however, they are lacking in the kinds of features that we emphasise here.

From the perspective of dynamical systems, closest in spirit to our work here are knowledge representation languages for reasoning about action and knowledge, which we refer to as action logics. The situation calculus \cite{McCarthy:69,reiter2001knowledge}, which has been the sole focus of this paper, is one such language. There are others, of course, such as the event calculus \cite{Kowalski:1986:LCE:10030.10034}, dynamic logic  \cite{Ditmarsch:2007:DEL:1535423,DBLP:journals/logcom/DitmarschHL11},  the fluent calculus \cite{opac-b1121577}, and formalisms based on the stable model semantics \cite{gelfond1993representing}.

In the situation calculus, a monotonic solution to the frame problem was provided by Reiter \cite{Reiter:91}. The situation calculus was extended to reason about knowledge whilst incorporating this solution in \cite{citeulike:528170}, and to reason about noisy effectors and sensors by BHL.\footnote{Throughout, we operated under the setting of knowledge expansion, that is, observations are assumed to  resolve the agent's uncertainty and never contradict what is believed. The topic of \textit{belief revision} lifts this assumption  \cite{citeulike:4115507}, but it is not considered here. See \cite{DBLP:journals/ai/ShapiroPLL11} for an account of belief revision in the situation calculus.}
Other action logics have enjoyed similar extensions. For example, \cite{kooi2003probabilistic}  proposes  an extension to dynamic logic for reasoning about degrees of belief and noisy actions,  and \cite{DBLP:journals/tplp/BaralGR09} provides  a computational framework for probabilistic reasoning using the stable model semantics, 
but they are propositional. In \cite{thielscher:AI01,thielscher:ICAART10}, the fluent calculus was extended for probabilistic beliefs and noisy actions. None of these admit continuous probability distributions.

The situation calculus has also been extended for uncertainty modeling in other directions. For example, \cite{Mateus:2001:PSC:590425.590452} consider discrete noisy actions over complete knowledge, that is, no degrees of belief. In later work, \cite{DBLP:conf/aips/FritzM09} treat continuous random variables as meta-linguistic functions, and so their semantics is not provided in the language of the situation calculus. This seems sufficient for representing things like products of probabilistic densities, but it is not an epistemic account in a logical or probabilistic sense. A final prominent extension to the situation calculus for uncertainty is the embedding of decision-theoretic planning in \golog~\cite{boutilier2000decision,boutilier2001symbolic}. Here, actions are allowed to be nondeterministic, but the assumption is that the actual state of the world is fully observable. (It essentially corresponds to a  fully observable Markov decision process \cite{boutilier1999decision}.) In this sense, the picture is a special case of the BHL framework. It is also not developed as a model of belief. 
While this line of work has been extended to a partially observable setting  \cite{Sanner:2010fk}, the latter  extension is also not developed as a model of belief. Perhaps most significantly, neither of these support continuous probability distributions, nor strict uncertainty at the level of probabilistic beliefs. 

It is  worth noting that {real-valued fluents in action logics turn out to be  useful for modeling resources and time. See, for example, \cite{DBLP:journals/igpl/GrosskreutzL03,DBLP:conf/aaai/HerrmannT96,Fox:2006ve}. These are, in a sense, complementary to an account of belief.}

Outside the logical literature, there are a variety of formalisms for modeling noisy dynamical systems. Of these, partially observable Markov decision processes (over discrete and continuous random variables) are perhaps the most dominant  \cite{thrun2005probabilistic}. They can be seen as belonging to the literature on  probabilistic planning languages \cite{kushmerick1995algorithm,Younes:2004bh}. 
Recent probabilistic planning languages \cite{sannerpddl}, moreover,  combine continuous  Bayesian networks and classical planning languages. Planning languages, generally speaking, only admit a limited set of logical connectives, constrain the language for specifying dynamic laws (that is, they limit the syntax of the successor state axioms), and do not handle strict uncertainty.

In sum, to our knowledge, our proposal is the first of its kind to handle  degrees of belief, noisy sensors and effectors over discrete and continuous probability distributions in a general way. The proposal allows for partial and incomplete specifications, and the  properties of belief will then follow at a corresponding level of specificity. Moreover, to our knowledge, no other logical formalism for uncertainty deals with the integration of continuous variables within the language.

\section{Conclusions} %
\label{sec:conclusions}
Many real-world applications, such as  robotics, have to deal with numerous sources of uncertainty, the
main culprit being sensor noise. Probabilistic error models have proven to be
essential in state estimation, allowing the beliefs of a robot to be
strengthened over time.  But to use these models, the modeler is left with the
difficult task of deciding how the domain is to be captured in terms of random
variables, and shifting conditional independences and distributions. In
the BHL model, one simply provides a specification of some initial beliefs,
characterizes the physical laws of the domain, and suitable posterior beliefs
are entailed. The applicability of BHL  
was limited, however, by its inability to handle continuous distributions, a
limitation we lift in this article. By recasting the assessment of belief in
terms of fluent values, we now seamlessly combine the situation calculus with
discrete probability distributions, densities and difficult combinations of the two. We
demonstrated that distributions evolve appropriately after actions, emerging 
as a side-effect of the general
specification. Our formal framework was then shown to easily accommodate the interaction between discrete  and continuous fluents,  discrete and continuous noise models, and logical connectives. At a specification level, the framework provides the necessary bridge between logic-based action formalisms and probabilistic ones.

Armed with this general specification language, we are in a position to  investigate specialized reasoning machinery. To give a few examples, in \cite{BelleLevreg,blprog}, we identified general projection techniques, where we transform a property of belief after a sequence of (noisy) actions and observations to what is believed initially. In later work \cite{blprego}, we provided an efficient implementation of a projection technique 
under some reasonable assumptions, in service of enabling richer domain axiomatizations for robotics applications.
Finally, a version of \golog\ was recently embedded in our model of belief  \cite{Belle:2015ab}, in the style of knowledge-based programming \cite{reasoning:about:knowledge,DBLP:journals/tocl/Reiter01}. \smallskip

A major criticism leveled at much of the work in cognitive robotics \cite{lakemeyer2007chapter}, and logic-based knowledge representation more generally is that the languages are far removed from the kind of uncertainty and noise seen in  machine learning and robotics applications. A formal language such as the one considered in this article addresses this concern. It also shows the advantages of appealing to logical machinery: firstly, in admitting  natural, rich and intuitive physical laws; 
secondly, in allowing belief specifications that can exploit the full power of logical connectives, thereby going considerably beyond standard probabilistic formalisms; and thirdly,  
in alleviating the burden of determining how these probabilistic beliefs are affected in dynamical systems. In the long term, we hope it takes steps towards a general-purpose   epistemologically-adequate   representation language 
as envisioned by  McCarthy and Hayes.

\section*{References}

% \bibliographystyle{elsarticle-num}
% \bibliography{main}
%

%
%
%

%

%

%
%
%

%

%

\end{document}